%% file: main.tex
\documentclass{article}

\usepackage{microtype}
\usepackage{graphicx}
\usepackage{subcaption}
\usepackage{booktabs} 
\usepackage{amsfonts}
\usepackage{multirow}
\usepackage{booktabs}
\usepackage{mathtools}
\usepackage{amsthm}
\usepackage{stmaryrd}
\usepackage{bbm}
\usepackage{comment}
\usepackage{xcolor}
\usepackage{tikz}
\usepackage[normalem]{ulem}
\usepackage{hyperref}

\usepackage[accepted]{icml2026}

\usepackage{amsmath}
\usepackage{amssymb}

\usepackage[capitalize,noabbrev]{cleveref}

\newcommand{\bs}[1]{\boldsymbol{#1}}

\newenvironment{sproof}{%
  \proof}{\endproof}

\DeclareMathOperator*{\argmin}{argmin}
\begin{document}

\newcommand{\ec}[1]{\color{orange} #1} 
\newcommand{\eccom}[1]{\color{cyan} #1} 

\newcommand{\RL}[1]{\color{purple} #1} 

\newcommand{\RLcom}[1]{\color{green} #1} 
\newcommand{\JB}[1]{\color{blue} #1} 
\newcommand{\CB}[1]{\color{red} #1} 

\theoremstyle{plain}
\newtheorem{theorem}{Theorem}[section]
\newtheorem{proposition}[theorem]{Proposition}
\newtheorem{lemma}[theorem]{Lemma}
\newtheorem{corollary}[theorem]{Corollary}
\theoremstyle{definition}
\newtheorem{definition}[theorem]{Definition}
\newtheorem{assumption}[theorem]{Assumption}
\theoremstyle{remark}
\newtheorem{remark}[theorem]{Remark}
\newtheorem{example}[theorem]{Example}


\icmltitlerunning{PCA of Probability Measures: Sparse and Dense Sampling Regimes}

\twocolumn[
  \icmltitle{PCA of Probability Measures: Sparse and Dense Sampling Regimes}



  \icmlsetsymbol{equal}{*}

  \begin{icmlauthorlist}
    \icmlauthor{Erell Gachon}{Bdx}
    \icmlauthor{J\'er\'emie Bigot}{Bdx}
    \icmlauthor{Elsa Cazelles}{Tls}
  \end{icmlauthorlist}

  \icmlaffiliation{Bdx}{Institut de Math\'ematiques de Bordeaux, Universit\'e de Bordeaux, CNRS (UMR 5251)}
  \icmlaffiliation{Tls}{CNRS, IRIT (UMR 5505), Universit\'e de Toulouse}

  \icmlcorrespondingauthor{Erell Gachont}{erell.gachon@math.u-bordeaux.fr}

  \icmlkeywords{Machine Learning, ICML}

  \vskip 0.3in
]



\printAffiliationsAndNotice{}  

\begin{abstract}
A common approach to perform PCA on probability measures is to embed them into a Hilbert space where standard functional PCA techniques apply. While convergence rates for estimating the embedding of a single measure from $m$ samples are well understood, the literature has not addressed the setting involving multiple measures. In this paper, we study PCA in a double asymptotic regime where $n$ probability measures are observed, each through $m$ samples. We derive convergence rates of the form $n^{-1/2} + m^{-\alpha}$ for the empirical covariance operator and the PCA excess risk, where $\alpha>0$ depends on the chosen embedding. This characterizes the relationship between the number $n$ of measures  and the number $m$ of samples per measure, revealing a sparse (small $m$) to dense (large $m$) transition in the convergence behavior. Moreover, we prove that the dense-regime rate is minimax optimal for the empirical covariance error. Our numerical experiments validate these theoretical rates and demonstrate that appropriate subsampling preserves PCA accuracy while reducing computational cost.
\end{abstract}

\input{intro}

\input{background}

\input{theoretical_results}

\input{xp.tex}

\input{conclusion.tex}

\section*{Acknowledgements}

This work benefited from financial support from the French government managed by the National Agency for Research (ANR) under the France 2030 program, with the reference ANR-23-PEIA-0004.

\section*{Impact statement}

This paper presents work whose goal is to advance the field of Machine Learning. There are many potential societal consequences of our work, none which we feel must be specifically highlighted here. 

\bibliography{biblio}
\bibliographystyle{icml2026}

\appendix
\onecolumn

\input{proof_upper_bound_cov.tex}

\input{proof_minimax}

\input{proof_excess_risk.tex}

\input{gaussian_measures}

\input{operathor_theory}

\input{convergence_rates}

\input{additional_xp.tex}

\newpage
\appendix
\onecolumn


\end{document}

%% file: intro.tex
\section{Introduction}

In this paper, we are interested in Principal Component Analysis (PCA) of a set of probability measures supported on $\mathbb{R}^d$. Such data models arise naturally in various applied fields, like signal and image processing, computer vision, and computational
biology \cite{khamis2024scalable, kolouri2017optimal, montesuma2024recent}. The goal of PCA of probability measures is to identify the main modes of variability, with outputs analogous to classical PCA : principal directions, component scores and modes of variation. These outputs enable low-dimensional representation, interpretation of variability and feature extraction for downstream tasks such as classification. In standard Euclidean settings, PCA relies on the eigenspectrum and eigenvectors of the covariance matrix of the data. Extending this idea to probability measures therefore requires a meaningful notion of covariance between random measures. A common strategy is to embed a given random measure $\mu$ into a Hilbert space  $\mathcal{H}$ through a map $\Phi$. In this embedded space, the covariance operator 
\begin{equation}\label{eq:population_cov}
\Sigma = \mathbb{E}[\Phi(\mu) \otimes \Phi(\mu)]
\end{equation}
becomes the fundamental descriptor of variability.

We will focus on three specific embeddings of probability measures in Hilbert spaces. The \emph{Kernel Mean Embedding} \cite{muandet2017kernel} relies on kernel methods to map a probability measure into a \emph{Reproducing Kernel Hilbert Space} (RKHS). Another approach is the \emph{Linearized Optimal Transport} (LOT) embedding \cite{wang2013linear}, which leverages the Riemannian-like geometry of the Wasserstein space. Finally, the \emph{Sliced Wasserstein} (SW) \cite{rabin2011wasserstein} embedding, based on the Sliced Wasserstein distance, has recently gained popularity due to its computational efficiency and its ability to capture geometric features of probability measures. Most existing statistical results on these embeddings \cite{manole2024plugin,berlinet2011reproducing,chewi2024statistical} focus on the classical setting where one estimates $\Phi(\mu)$ from $m$ samples drawn from a \emph{single} probability measure $\mu$. However, many modern applications involve a collection of $n$ probability measures rather than a single one, calling for a statistical framework which accounts both intra- and inter-measure variability.

A concrete example of data naturally represented as a set probability distributions arises in biology through flow cytometry, where each patient corresponds to a distribution of hundreds of thousands of cells, see experiments in Section \ref{sec:real_data_experiments}. Here, while the number of patients $n$ is fixed, each patient contributes a distribution composed of a large number $m$ of cells, highlighting the need for computationally efficient strategies such as subsampling within each distribution. This naturally leads to the question: how many points per measure can be safely retained without significantly degrading PCA performance?

Motivated by this application, we consider $n$ measures $\mu_1,\cdots,\mu_n$ that can be viewed as independent copies of a random measure $\mu$. Then, $m$ independent samples $X_{i,1},\cdots, X_{i,m}$ are drawn from each $\mu_i$. This setting introduces a double asymptotic framework, corresponding to both the sampling of distributions ($n\rightarrow \infty$) and the sampling within each distribution ($m\rightarrow \infty$). The relative growth rates  of $n$ and $m$ give rise to a transition phenomenon between two regimes. In the \emph{dense regime}, each measure is observed through a large number of samples ($m$ large relative to $n$), so that the intra-measure sampling error is negligible and the convergence is driven by the parametric rate in $n$. In the \emph{sparse regime}, each measure is observed through only a few samples ($m$ small relative to $n$), so that the intra-measure sampling error dominates and the convergence rate is driven by $m$ and the embedding choice. While such transition phenomena have been extensively studied for functional PCA \cite{belhakem2025minimax,Yao2005,Hall2006}, they remain largely unexplored for PCA of random probability measures.

Formally, i.i.d. copies of the random measure $\mu$ are drawn, and each is then approximated by $m$ samples, that is:
$$
\forall 1\leq i \leq n, \qquad \hat{\mu}_i = \frac{1}{m}\sum_{j=1}^{m} \delta_{X_{i,j}}.
$$  where, conditionally on $\mu_i$,  $X_{i,j}\overset{iid}{\sim} \mu_i$ for $1 \leq j \leq m$.
 The natural estimator of the population covariance operator \eqref{eq:population_cov} is the empirical covariance
$$\hat{\Sigma} = \frac{1}{n} \sum\limits_{i=1}^n \Phi(\hat{\mu}_i) \otimes \Phi(\hat{\mu}_i).$$

Our first goal is to understand how accurately $\hat{\Sigma}$ approximates the population covariance $\Sigma$ with respect to the expected Hilbert--Schmidt (HS) error
$
\mathbb{E}\|\hat{\Sigma} - \Sigma\|_{\mathrm{HS}}.
$

This provides a direct measure of the statistical error. Beyond covariance estimation, PCA performance is naturally captured through the reconstruction risk, which quantifies how much of the data's variability is preserved after dimensionality reduction. Concretely, PCA searches for a rank-$q$ orthogonal projection that retains as much information as possible. Let $\mathcal{P}_q = \{ P: \mathcal{H}\rightarrow \mathcal{H}  |~ P \text{~is  an orthogonal projector of rank}~ q\}$. Given $P\in\mathcal{P}_q$, we define the population and empirical reconstruction errors of PCA as follows:
\begin{align}
    R(P) &= \mathbb{E}\|\Phi(\mu) - P\Phi(\mu)\|^2_{\mathcal{H}}, \\
    \hat{R}(P) &= \frac{1}{n}\sum\limits_{i=1}^n \|\Phi(\hat{\mu}_i) -P\Phi(\hat{\mu}_i)\|_{\mathcal{H}}^2.
\end{align}
We can then define the following optimal projectors as minimizers of the reconstruction errors:
\begin{equation}\label{eq:optimal_projectors}
    P_{\leq q} \in \argmin\limits_{P\in\mathcal{P}_q} R(P), \qquad \hat{P}_{\leq q} \in \argmin\limits_{P\in\mathcal{P}_q} \hat{R}(P). 
\end{equation}
Finally we define the excess risk of the empirical PCA projector $\hat{P}_{\leq q}$ as:
\begin{equation}\label{def:excess_risk}
\mathcal{E}^{\mathrm{PCA}}_q = \mathbb{E}\Bigl[R(\hat{P}_{\leq q}) - R(P_{\leq q})\Bigr].
\end{equation}
This quantity measures how much reconstruction accuracy is lost by using the empirical covariance operator instead of its population counterpart. 

\subsection{Contributions}

We quantify how the empirical covariance $\hat\Sigma$ and the empirical projector $\hat{P}_{\leq q}$ converge to their population counterparts, using the HS distance and excess risk. Our contributions are the following ones:

 (i) Under mild assumptions on $\Phi$, we show that the empirical covariance $\hat{\Sigma}$ approximates the population covariance $\Sigma$ with the following convergence rate (see Theorem \ref{th:cov_convergence}):
 \begin{equation}
\mathbb{E}\|\hat{\Sigma} - \Sigma\|_{\mathrm{HS}} \lesssim n^{-1/2} + m^{-\alpha} \label{eq:boundHS}
\end{equation}
    where the exponent $\alpha> 0$ depends on the chosen embedding $\Phi$ (see Table \ref{table:rates_embeddings}). This result reveals a transition phenomenon between the dense ($m > n^{1/{2\alpha}}$) and sparse ($m < n^{1/{2\alpha}}$) regimes.
    
 (ii) We further show (Theorem \ref{th:minimax}) that in the dense regime $m > n^{1/{2\alpha}}$, the rate $n^{-1/2}$ in the upper bound \eqref{eq:boundHS} is minimax optimal.

 (iii) Under mild assumptions on $\Phi$ and $q$, and assuming a polynomial decay of the eigenvalues of $\Sigma$, we show that the excess risk of PCA based on $\hat{\Sigma}$ achieves the following convergence rate (see Theorem \ref{th:main}):
    \begin{align*}
        \mathcal{E}_q^{\mathrm{PCA}} \lesssim  n^{-1/2} + \sqrt{q}~ m^{ -\alpha},
    \end{align*}
    revealing a similar transition phenomenon between the  dense and sparse regimes.   Theorem \ref{th:main} is stated here in a simplified form (under assumptions on the eigenvalues decay), more general conditions also apply.

 (iv)  We validate these predictions on simulated and real data, demonstrating the transition phenomenon and showing that subsampling to the dense regime threshold maintains statistical accuracy while reducing computational cost.



\subsection{Related works}

A natural approach to PCA in the space of probability measures is to rely on KME \cite{muandet2017kernel}, which maps measures into an RKHS where classical PCA applies. Another approach is to endow this space with the Wasserstein metric. Although this space is non-linear, its Riemannian-like structure can be exploited to define Geodesic PCA \cite{bigot2017geodesic, seguy2015principal, vesseron2025wasserstein}, where the principal modes of variation are geodesics that best capture the variability in the data. These formulations involve significant computational complexity. Another common approach \cite{wang2013linear} consists in selecting a reference measure and embedding the data from the Wasserstein space into its tangent space at this point, where classical functional PCA can be applied.

So far, the behavior of PCA in the double asymptotic regime has been studied in the framework of functional PCA \cite{belhakem2025minimax,Yao2005,Hall2006}, for which the data are $n$ random functions belonging to the Hilbert space of squared integrable functions on the real line, observed at $m$ evenly spaced or randomly distributed points. For instance, the recent minimax analysis in \cite{belhakem2025minimax} shows that, under a bivariate H\"older-type regularity $\alpha$ for the covariance kernel, the optimal (minimax) rate for estimating an eigenfunction of $\Sigma$ is of the order $n^{-1} + m^{-2\alpha}$, that also reveals a transition phenomenom between the sparse and dense sampling regimes of the observed functions. However, standard FPCA fails on probability measures: due to nonlinear geometry, FPCA components do not remain within the space of probability measures, rendering their interpretation difficult.


\subsection{Notation}

Let $\mathcal{X} \subseteq \mathbb{R}^d$ and $\mathcal{M}(\mathcal{X})$ be the set of Borel probability measures supported on $\mathcal{X}$.
We equip $\mathcal{M}(\mathcal{X})$ with the Borel $\sigma$-algebra $\mathcal{B}(\mathcal{M}(\mathcal{X}))$ generated by the weak topology. We will also note an arbitrary separable Hilbert space $\mathcal{H}$  endowed with inner-product $\langle \cdot,\cdot \rangle_{\mathcal{H}}$ and corresponding norm $\|\cdot\|_{\mathcal{H}}$. We equip $\mathcal{H}$ with its Borel $\sigma$-algebra $\mathcal{B}(\mathcal{H})$ generated by the norm topology. We note $\|\cdot\|_{\mathrm{HS}}$ and $\langle \cdot,\cdot \rangle_{\mathrm{HS}}$ the HS norm and inner-product for linear operators. The tensor product operator is denoted $\otimes$ and defined such that for all $f,g,h\in\mathcal{H}$, $(f\otimes g)h = \langle f,h\rangle_{\mathcal{H}} g$. Reminders on the theory of operators on Hilbert spaces are given in Appendix \ref{app:operators}. For $a,b\in\mathbb{R}$, we write $a\lesssim b$ when there exists a constant $C>0$ such that $a\leq Cb$, and we write $a \asymp b$ when $b \lesssim a \lesssim b$.

%% file: background.tex
\section{Background}\label{sec:background}

We consider a Hilbert space embedding of the probability measure space $\mathcal{M}(\mathcal{X})$:
$$
    \Phi :\bigl(\mathcal{M}(\mathcal{X}),\mathcal{B}(\mathcal{M}(\mathcal{X}))\bigr) \rightarrow \bigl(\mathcal{H},\mathcal{B}(\mathcal{H})\bigr).
$$
We require the embedding $\Phi$ to be measurable, that is for any $B\in \mathcal{B}(\mathcal{H})$, $\Phi^{-1}(B)\in \mathcal{B}(\mathcal{M}(\mathcal{X}))$. In what follows, we will consider random probability measures in $\mathcal{M}(\mathcal{X})$.

\begin{definition}[\cite{panaretos2020invitation}]
    A random measure $\mu$ is any measurable map from a probability space $(\Omega,\mathcal{F},\mathbb{P})$ to $\mathcal{M}(\mathcal{X})$, endowed with its Borel $\sigma$-algebra.
\end{definition}

Let $\mu$ be a random probability measure. We consider $n$ i.i.d. copies $\mu_1,\cdots, \mu_n$ of $\mu$. For each $1\leq i\leq n$, we draw $m$ independent samples $X_{i,1},\cdots,X_{i,m}$  of $\mu_i$ and define the corresponding empirical measure $\hat{\mu}_i = (1/m)\sum_{j=1}^m \delta_{X_{i,j}}$. Then, we shall consider the following embeddings: $\Phi(\mu) $, $\Phi(\mu_i)$ and $\Phi(\hat{\mu}_i)$ for $1\leq i\leq n$.
As $\Phi$ is measurable, the embedding $\Phi(\mu)$ is a random variable in $\mathcal{H}$, that we suppose centered for simplicity. Finally, we define the quantity  
\begin{equation}
r_m(\Phi) := \sqrt{\mathbb{E}\| \Phi(\mu_i) - \Phi(\hat{\mu}_i)\|_{\mathcal{H}}^2},\label{eq:sampling_error}
\end{equation}
which measures the sampling error introduced by estimating the embedding from a finite number $m$ of samples from the measure  $\mu_i$ and plays a central role in controlling the overall estimation error. We summarize existing results in the literature on the decay of $r_m(\Phi)$ in Table \ref{table:rates_embeddings}.

\begin{remark}
    For simplicity, we assume a constant number of samples $m$ per measure, but our results extend to varying $m_i$. In that case, the intra-measure approximation error term becomes
    $$
    \frac{1}{n}\sum\limits_{i=1}^n r_{m_i}(\Phi) \leq r_{\underbar{m}}(\Phi) \quad \mathrm{with} \quad \underbar{m} = \min\limits_{1\leq i \leq n} m_i.
    $$
\end{remark}

\subsection{Embeddings}

We now describe the three specific embeddings considered in this work.

\subsubsection{Kernel mean embedding}

Given a positive definite kernel function $k:\mathcal{X}\times \mathcal{X}\rightarrow \mathbb{R}$, we note $\mathcal{H}_k$ the associated reproducing kernel Hilbert space (RKHS). The kernel mean embedding \cite{muandet2017kernel} (KME) is then the following mapping:
\begin{align*}
    \Phi^{\mathrm{KME}}: \mathcal{M}(\mathcal{X}) & \rightarrow \mathcal{H}_k                                  \\
    \mu                                           & \mapsto \int_{\mathcal{X}} k(x,\cdot)\mathrm{d}\mu(x).
\end{align*}



\subsubsection{OT-based embeddings}

We start this section by a brief introduction to optimal transport (OT) (see  \cite{villani2008optimal}). Let $\rho$ and $\mu$ be two probability measures supported on $\mathcal{X}$. The \emph{2-Wasserstein distance} $W_2$ is defined as \cite{kantorovich1942translocation} as:
\begin{equation}\label{eq:w2_kanto}
    W_2^2(\rho,\mu) = \min\limits_{\pi \in \Pi(\rho,\mu)} \int_{\mathcal{X}\times\mathcal{X}} \|x-y\|^2\mathrm{d}\pi(x,y),
\end{equation}
where $\Pi(\rho,\mu)$ is the set of probability measures on $\mathcal{X}\times \mathcal{X}$ with respective marginals $\rho$ and $\mu$, that is for any $A\subset \mathcal{X}$ (resp. $B\subset \mathcal{X})$, $\pi(A\times \mathcal{X}) = \rho(A)$ (resp. $\pi(\mathcal{X}\times B) =\mu(B)$). Another well-known formulation of OT is the so-called Monge problem \cite{monge1781memoire}:
\begin{equation}\label{eq:monge} 
    \min\limits_{T_{\#}\rho=\mu} \int_{\mathcal{X}} \|x-T(x)\|^2\mathrm{d}\rho(x),
\end{equation}
where $T_{\#}\rho$ denotes the pushforward measure defined such that for all Borelian $ B \subset \mathcal{X}, T_{\#}\rho(B) = \rho(T^{-1}(B))$. If it exists, a solution of $\eqref{eq:monge}$ is called a \emph{Monge map}. When $\rho$ is absolutely continuous, Brenier's theorem \cite{brenier1991polar} connects the two formulations of OT by stating that the optimal solution $\pi$ of \eqref{eq:w2_kanto} can be written as $\pi = (\mathrm{Id},T)_{\#} \rho$, where $T$ is the unique Monge map. 



\paragraph{Linearized OT embedding} Fixing an absolutely continuous measure $\rho$, the linearized OT (LOT) \cite{wang2013linear} embedding consists of a mapping from the space of probability measures to the linear space $L^2(\rho)$ as follows:
\begin{align*}
    \Phi^{\mathrm{LOT}}: \mathcal{M}(\mathcal{X}) & \rightarrow L^2(\rho)             \\
    \mu                                           & \mapsto T_{\mu} -\mathrm{Id},
\end{align*}
where $T_{\mu}$ is the Monge map from $\rho$ to $\mu$. 


\paragraph{Sliced Wasserstein embedding} To mitigate the high computational cost of OT, the \emph{sliced Wasserstein distance} (SW)\cite{rabin2011wasserstein} uses the property that Wasserstein distances between measures supported on the real line admit closed-form solutions. This approach alleviates the curse of dimensionality inherent to the LOT embedding: the sampling error \eqref{eq:sampling_error}  for the SW embedding is independent of the ambient dimension, see Table \ref{table:rates_embeddings}. 

Let $\mathbb{S}^{d-1}$ denote the unit sphere in $\mathbb{R}^d$. For $\theta\in\mathbb{S}^{d-1}$, we define the projection $P^{\theta}:x\in\mathbb{R}^d \mapsto \langle \theta, x\rangle\in \mathbb{R}$. The SW distance  $\mathrm{SW}_2$ between $\mu,\nu \in\mathcal{M}(\mathcal{X})$ is:
\begin{align*}
    \mathrm{SW}_2^2(\mu,\nu) &= \int_{\mathbb{S}^{d-1}} W_2^2(P^{\theta}_{\#}\mu,P^{\theta}_{\#}\nu)\mathrm{d}\sigma(\theta)\\
    &= \int_{\mathbb{S}^{d-1}} \int_0^1 (F^{-1}_{\mu,\theta}(t) - F^{-1}_{\nu,\theta}(t))^2\mathrm{d}t \mathrm{d}\sigma(\theta)
\end{align*}
where $\sigma$ is the uniform measure on $\mathbb{S}^{d-1}$ and  $F^{-1}_{\mu,\theta}$ is the quantile function of $P^{\theta}_{\#}\mu$. Then, the SW embedding  $\Phi^{\mathrm{SW}}$  maps probability measures to $L^2([0,1]\times \mathbb{S}^{d-1})$ in the following way:
\begin{align*}
    \Phi^{\mathrm{SW}} : \mathcal{M}(\mathcal{X}) & \rightarrow L^2([0,1]\times \mathbb{S}^{d-1})             \\
    \mu                                           & \mapsto \Phi(\mu)(t,\theta) = F^{-1}_{\mu,\theta}(t).
\end{align*}

%% file: theoretical_results.tex
\section{Theoretical results}\label{sec:theoretical_results}

In this section, we establish convergence rates for the empirical covariance operator and the PCA excess risk in the double asymptotic regime where both $n$ (the number of measures) and $m$ (the number of samples per measure) grow. Our analysis reveals how these two sources of variability interact and identifies the role of the embedding choice in determining overall statistical performance.

\subsection{Statistical performance of the empirical covariance operator in the double asymptotic setting}


We make the following fourth-moment assumption on the embedding $\Phi$. 


\begin{assumption}\label{hyp:4th_moment}
There exists a real number $R>0$ such that the random probability measure $\mu$ verifies, for any positive integer $m$,
$$
\mathbb{E}\|\Phi(\mu)\|_{\mathcal{H}}^4 \leq R \quad \text{and} \quad \mathbb{E}\|\Phi(\hat{\mu})\|_{\mathcal{H}}^4 \leq R,
$$
where $\hat{\mu} =  (1/m)\sum_{j=1}^{m} \delta_{X_{j}}$ denotes the empirical measure obtained by sampling $m$ points $X_1\cdots, X_m$ iid from $\mu$.
\end{assumption}

\begin{remark} \label{rmq:bound}
Assumption \ref{hyp:4th_moment} is typically satisfied by KME when using a bounded kernel, and by LOT and SW embeddings when the space $\mathcal{X}$ is bounded.
\end{remark}

Note that the fourth-moment Assumption \ref{hyp:4th_moment} ensures that the covariance operator of the random variable $\Phi(\mu)$ is trace-class (see Appendix \ref{app:operators}). All quantities involving HS norms and inner-products are therefore well-defined. The following theorem provides an upper bound on the HS distance between the empirical covariance operator and its population counterpart.

\begin{theorem}\label{th:cov_convergence}
Under Assumption \ref{hyp:4th_moment}, we have that:
$$
\mathbb{E}\|\Sigma - \hat{\Sigma} \|_{\mathrm{HS}} \leq R^{1/2} n^{-1/2} + 2R^{1/4} r_m(\Phi).
$$
\end{theorem}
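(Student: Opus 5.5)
We split the error by inserting the oracle empirical covariance built from the unobserved measures,
$$\tilde\Sigma = \frac{1}{n}\sum_{i=1}^n \Phi(\mu_i)\otimes\Phi(\mu_i).$$
The triangle inequality then gives
$$\mathbb{E}\|\Sigma-\hat\Sigma\|_{\mathrm{HS}} \le \mathbb{E}\|\Sigma-\tilde\Sigma\|_{\mathrm{HS}} + \mathbb{E}\|\tilde\Sigma-\hat\Sigma\|_{\mathrm{HS}}.$$
The first term is a standard i.i.d. average in the Hilbert space of HS operators. The second term is purely a sampling-error term controlled by $r_m(\Phi)$.

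\textbf{First term.} The operators $Z_i=\Phi(\mu_i)\otimes\Phi(\mu_i)$ are i.i.d. HS-valued random variables with mean $\Sigma$. The fourth moment in Assumption \ref{hyp:4th_moment} gives $\mathbb{E}\|Z_i\|_{\mathrm{HS}}^2=\mathbb{E}\|\Phi(\mu)\|_{\mathcal H}^4\le R$, so the $Z_i$ are square-integrable.

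By Jensen's inequality and orthogonality of centered independent summands in the Hilbert space of HS operators,
$$\mathbb{E}\|\tilde\Sigma-\Sigma\|_{\mathrm{HS}}\le \Big(\tfrac{1}{n}\mathbb{E}\|Z_1-\Sigma\|_{\mathrm{HS}}^2\Big)^{1/2}.$$
Since $\mathbb{E}\|Z_1-\Sigma\|^2_{\mathrm{HS}}=\mathbb{E}\|Z_1\|^2_{\mathrm{HS}}-\|\Sigma\|_{\mathrm{HS}}^2\le R$, this term is at most $R^{1/2}n^{-1/2}$.

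\textbf{Second term.} Write $a=\Phi(\mu_i)$ and $b=\Phi(\hat\mu_i)$. Then
$$a\otimes a-b\otimes b=(a-b)\otimes a+b\otimes(a-b),$$
and $\|f\otimes g\|_{\mathrm{HS}}=\|f\|_{\mathcal H}\|g\|_{\mathcal H}$. The triangle inequality and the fact that all summands share the same distribution give
$$\mathbb{E}\|\tilde\Sigma-\hat\Sigma\|_{\mathrm{HS}}\le \mathbb{E}\big[\|a-b\|(\|a\|+\|b\|)\big].$$
Applying Cauchy--Schwarz to each piece,
$$\mathbb{E}\big[\|a-b\|\,\|a\|\big]\le r_m(\Phi)\,(\mathbb{E}\|a\|^2)^{1/2}\le r_m(\Phi)\,(\mathbb{E}\|a\|^4)^{1/4}\le R^{1/4}r_m(\Phi).$$
The middle step uses Jensen's inequality to pass from second to fourth moments.

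The same bound holds with $b$ in place of $a$. Here we use the second part of Assumption \ref{hyp:4th_moment}, applied to $\hat\mu_i$, which conditionally on $\mu_i$ is an empirical measure of $m$ samples from a copy of $\mu$. The two pieces together give $2R^{1/4}r_m(\Phi)$.

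\textbf{Main point of care.} The argument is elementary, and the only real point of care is measurability and integrability. The HS-valued random elements must be Bochner integrable so that the variance identity in the first term holds and $\Sigma=\mathbb{E}Z_1$ is well defined. This follows from measurability of $\Phi$ and the fourth-moment assumption, using the operator-theoretic facts recalled in Appendix \ref{app:operators}.

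We also need that Assumption \ref{hyp:4th_moment} applies to the two-stage sampled $\hat\mu_i$, with the expectation taken jointly over $\mu$ and the samples. This is how the assumption is stated, so no extra argument is required.
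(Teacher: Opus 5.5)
Your proposal is correct and follows essentially the same route as the paper: the same intermediate operator $\Sigma^n=\frac1n\sum_i\Phi(\mu_i)\otimes\Phi(\mu_i)$, the same Jensen-plus-variance computation giving $R^{1/2}n^{-1/2}$, and the same bilinear identity followed by Cauchy--Schwarz for the sampling term. The only difference is cosmetic: you apply Cauchy--Schwarz to the two pieces separately, whereas the paper applies it once to the sum and then uses $(a+b)^2\le 2(a^2+b^2)$. Both give the same constant $2R^{1/4}$.
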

The statistical error has two sources: (i) sampling variability across the $n$ measures, and (ii) approximation error due to estimating each embedding from $m$ samples. The upper bound in Theorem \ref{th:cov_convergence} illustrates this by separating the $n^{-1/2}$ parametric term from the sampling error term $r_m(\Phi)$. In Table \ref{table:rates_embeddings}, we summarize existing results on the decay of $r_m(\Phi)$ and refer to Appendix \ref{sec:rates_embeddings} for more details.

\begin{sproof}
    Theorem \ref{th:cov_convergence} is proved in Appendix \ref{sec:proof_cov_convergence}.
We decompose the expected HS error into two terms by introducing the intermediate  random covariance operator 
$$\Sigma^n = \frac{1}{n}\sum\limits_{i=1}^n \Phi(\mu_i) \otimes \Phi(\mu_i)$$
 using the triangle inequality
$$
\mathbb{E}\|\hat\Sigma - \Sigma\|_{\mathrm{HS}} \leq  \mathbb{E}\|\Sigma-\Sigma^n\|_{\mathrm{HS}} +   \mathbb{E}\|\Sigma^n - \hat\Sigma\|_{\mathrm{HS}}.
$$
Lemma \ref{lem:distance_error_sample_1} bounds the first term $\mathbb{E}\|\Sigma-\Sigma^n\|_{\mathrm{HS}}$  by $R^{1/2} n^{-1/2}$ using the Jensen's inequality, an expansion of the squared HS norm and Assumption \ref{hyp:4th_moment}. For the second term, Lemma \ref{lem:distance_error_sample_2} provides the bound $\mathbb{E}\|\Sigma^n-\hat\Sigma\|_{\mathrm{HS}}\le 2R^{1/4} r_m(\Phi)$.
\end{sproof}

When $r_m(\Phi) \lesssim n^{-1/2}$ (which corresponds to the dense regime), the upper bound provided by Theorem \ref{th:cov_convergence} is dominated by the parametric rate $n^{-1/2}$. We complement this upper bound with a minimax lower bound showing that the rate $n^{-1/2}$ cannot be improved in general when the per-measure estimation error is negligible, that is when $m$ is sufficiently large with respect to $n$. To clarify notation, we write $\Sigma = \Sigma_{\mu}$ to emphasize the dependence on the random measure $\mu$ in what follows. 

\begin{theorem}\label{th:minimax}

Assume one of the following sufficient conditions on $R$ holds:
$$R \geq \left\{
    \begin{array}{ll}
        4\,(d^2+2d) & \mbox{for the KME and LOT embeddings,} \\
        17 & \mbox{for the SW embedding.}
    \end{array}
\right.$$
Let $\mathcal{D}(R)$ be the class of random measures $\mu$ supported on $\mathbb{R}^d$ satisfying $\mathbb{E}\|\Phi(\mu)\|^4_{\mathcal{H}}\leq R$. Then,
$$
\inf\limits_{\hat{\Sigma}} \sup\limits_{\mu \in \mathcal{D}(R)} \mathbb{E}\Bigl[\| \hat{\Sigma} - \Sigma_{\mu}\|_{\mathrm{HS}} \Bigr]  \geq C   n^{-1/2},
$$
 where $\hat{\Sigma}$ denotes any covariance estimator (a measurable function of the data $(X_{ij})_{1 \leq i \leq n, 1 \leq j \leq m}$) of $\Sigma_{\mu}$ and $C>0$ is an explicit constant depending on $d$ and the embedding $\Phi$, see Equation \eqref{eq:valueC} in the Appendix.
\end{theorem}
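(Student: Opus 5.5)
We will prove the lower bound by a two-point (Le Cam) argument. Since the data $(X_{ij})$ are a measurable function of the underlying random measures together with independent sampling, any estimator based on $(X_{ij})$ can be simulated from an oracle that observes $\mu_1,\dots,\mu_n$ directly. It therefore suffices to lower bound the minimax risk when the $\mu_i$ themselves are observed. We will even restrict to random measures that take only finitely many values, or that are Dirac/Gaussian families, so that $\Phi(\mu)$ is explicit.

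\textbf{Construction.} We construct a one-parameter family $\mu^{(\varepsilon)}$ in which $\Phi(\mu)$ is a scalar random variable times a fixed unit vector. Let $\xi$ be a random sign, or a random variable taking values in $\{-1,1\}$ with $\mathbb{P}(\xi=1)=(1+\varepsilon)/2$. Set $\mu = \delta_{\xi v}$ for some fixed $v\in\mathbb{R}^d$, or a symmetric pair of measures $\nu_{\pm}$ with $\Phi(\nu_+)=-\Phi(\nu_-)$ where possible. Centering matters here, so we will prefer a scale family: $\Phi(\mu)=a\,\xi\, e$ with $\xi\in\{\pm1\}$ always centered, and the two hypotheses differ in the amplitude distribution. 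A cleaner choice is to take $\mu$ uniform over the two measures $\nu_{\pm s}$ for the one-dimensional parameter, with hypotheses $s_0$ and $s_1=s_0(1+\varepsilon)$ mixed with a Bernoulli component. Concretely, we let $\Phi(\mu)=\xi\,\eta\, e$ where $\eta\in\{0,1\}$ has $\mathbb{P}(\eta=1)=p_k$, with $p_0=1/2$ and $p_1=1/2+\varepsilon$. Then $\Sigma_k=p_k\,\|e\|^2 e\otimes e$, so $\|\Sigma_0-\Sigma_1\|_{\mathrm{HS}}=\varepsilon\|e\|^2$.

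For each embedding we realize $\pm e$ and $0$ by concrete measures:
\begin{itemize}
\item for KME we use Diracs or Gaussians centered at $\pm x$ together with a reference point giving the zero embedding;
\item for LOT we use translations of $\rho$ with $T_\mu-\mathrm{Id}=\pm x$, which uses $\rho$ itself for $0$;
\item for SW we use translations, whose quantile functions shift by $\langle\theta,x\rangle$, after taking $\Phi$ relative to a reference, or using symmetric measures.
\end{itemize}
Centering holds because $\xi$ is an independent symmetric sign. The fourth moment $\mathbb{E}\|\Phi(\mu)\|^4\le\|e\|^4$ must be at most $R$. This is where the constants $4(d^2+2d)$ and $17$ enter: they reflect the norms of the chosen vectors, e.g. the Gaussian fourth moment $\mathbb{E}\|X\|^4=d^2+2d$ if the construction uses standard Gaussian measures. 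We will choose the construction so that these norms are $O(1)$ and fit under the stated $R$.

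\textbf{Le Cam step.} The laws of one observation under the two hypotheses differ only through $\eta$, so $\mathrm{KL}(P_1^{\otimes n}\|P_0^{\otimes n})=n\,\mathrm{KL}(\mathrm{Ber}(p_1)\|\mathrm{Ber}(p_0))\le 4n\varepsilon^2/(1-4\varepsilon^2)$. Choosing $\varepsilon=c/\sqrt n$ keeps the KL divergence bounded by a constant below $1$. Le Cam's lemma together with Pinsker's inequality then gives
\[
\inf_{\hat\Sigma}\max_k \mathbb{E}_k\|\hat\Sigma-\Sigma_k\|_{\mathrm{HS}}\ge \tfrac{\varepsilon\|e\|^2}{2}\bigl(1-\sqrt{\mathrm{KL}/2}\bigr)\gtrsim n^{-1/2}.
\]
This yields an explicit constant $C$ depending on $d$ and $\Phi$.

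\textbf{Main obstacle.} The main obstacle is the embedding-specific realization. For each $\Phi$ we must exhibit measures whose embeddings are exactly $0$ and $\pm e$ with controlled norm, and verify that the fourth-moment bound and centering hold. For LOT and SW the embedding is not linear in the measure, so the symmetric pair must be chosen carefully, for instance as translates by $\pm x$ with $\rho$ symmetric. We would first try Gaussian translates $\mathcal{N}(\pm x, I_d)$ and $\mathcal{N}(0,I_d)$, and compute the norms directly.
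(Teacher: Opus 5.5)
Your argument is correct. It is a genuinely different instantiation of the same two-point (Le Cam/Tsybakov) scheme that the paper uses.

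\textbf{The paper's route.} It uses a Gaussian scale family, $\mu^{(k)}=\mathcal{N}(b^{(k)},I_d)$ with $b^{(k)}\sim\mathcal{N}(0,s^{(k)}I_d)$, $s^{(2)}=s\ge 1$ and $s^{(1)}=s+n^{-1/2}$. It computes the exact law of the observed data, $X_i\sim\mathcal{N}(0,I_{dm}+s^{(k)}\mathbf{1}_m\mathbf{1}_m^\top\otimes I_d)$, and bounds the KL by $d/4$ via a Woodbury inversion and a determinant computation. The separation ($\sqrt d\,n^{-1/2}$ for KME/LOT, $n^{-1/2}/\sqrt d$ for SW) is read off closed-form spectral decompositions. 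The thresholds $4(d^2+2d)$ and $17$ come from $\mathbb{E}\|Z\|^4=d^2+2d$ combined with the constraint $s\ge 1$ used in the KL bound.

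\textbf{Your route.} You take a three-point random measure whose embedding is $\xi\eta e$, and your oracle reduction (equivalently, data processing) discards the within-measure sampling, so the KL is that of two Bernoullis. This buys you an elementary, dimension-free KL bound. Your constant $C$ therefore does not deteriorate with $d$, whereas the paper's factor $\max(e^{-d/4}/4,(1-\sqrt{d/8})/2)$ decays exponentially because its KL bound grows linearly in $d$. Your construction also works for every $R>0$ by choosing $\|e\|^4\le R$. So you do not need to ``explain'' the $R$-thresholds; they are artifacts of the Gaussian construction.

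\textbf{What the paper's route buys.} It uses one natural family of absolutely continuous measures for all three embeddings, the same model as in its simulations. Its KL computation also shows the effect of finite $m$, which your reduction rightly discards since it only helps the lower bound.

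\textbf{Details to fix in the write-up.}

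First, $\mathbb{E}[(\xi\eta e)\otimes(\xi\eta e)]=p_k\,e\otimes e$, not $p_k\|e\|^2 e\otimes e$. Your separation $\varepsilon\|e\|^2$ is the correct value for the corrected formula.

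Second, commit to concrete realizations:
\begin{itemize}
\item For LOT, $\rho$ and its translates $(x\mapsto x\pm v)_{\#}\rho$ give exactly $0$ and $\pm v$.
\item For SW, do not ``take $\Phi$ relative to a reference'', since that changes the embedding. Use $\delta_0,\delta_{\pm v}$ instead, for which $\Phi^{\mathrm{SW}}(\delta_w)(t,\theta)=\theta^\top w$ exactly. Gaussian translates also work: the common term $\sqrt2\,\mathrm{erf}^{-1}(2t-1)$ cancels from $\Sigma_1-\Sigma_0$ because $\xi$ is symmetric and independent of $\eta$.
\item For the KME, your Dirac construction needs a kernel such as the linear kernel $k(x,y)=x^\top y$, which is the one the paper's proof uses. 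With a positive kernel such as the RBF kernel, no probability measure embeds to $0$ or to antipodal points. There you would instead use a two-point mixture with $p_0\neq 1/2$.
\end{itemize}
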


\begin{sproof}
The full proof is given in Appendix \ref{sec:proof_minimax}. We follow the classical scheme from nonparametric statistics to obtain lower bounds on a minimax risk \cite{tsybakov2003introduction} by reducing to a two-hypothesis testing problem. We construct two Gaussian models $H_0$ and $H_1$ such that the corresponding covariance operators of the embeddings are separated in HS distance by $2C n^{-1/2}$ for a universal constant $C>0$. We then show that the KL-divergence between these two Gaussian distributions is upper-bounded by a constant. These two conditions, combined with Markov's inequality and information-theoretic arguments from \cite{tsybakov2003introduction}, imply that no estimator can distinguish the two hypotheses with probability strictly better than a positive constant, yielding the desired minimax lower bound. 
\end{sproof}

\begin{remark}
The lower-bound conditions on $R$ in Theorem \ref{th:minimax} ensure that the probability measures used to construct the two-hypothesis testing problem in the above proof belong to the class $\mathcal{D}(R)$.
\end{remark}

\begin{remark}
Theorems  \ref{th:cov_convergence} and \ref{th:minimax} remain true if one replaces the HS norm by the operator norm $\|\cdot\|_{\mathrm{op}}$. Indeed, since $\|A\|_{\mathrm{op}}\leq \|A\|_{\mathrm{HS}}$ for any compact operator $A$, Theorem \ref{th:cov_convergence} immediately yields an operator norm upper bound with the same rate. The minimax lower bound can also be obtained by applying the same two-hypothesis model used for Theorem \ref{th:minimax}. 
\end{remark}


\subsection{PCA excess risk}\label{sec:pca_risk}
 

In addition to Assumption \ref{hyp:4th_moment}, bounding the excess risk requires the embedding $\Phi(\mu)$ to be subgaussian.

\begin{definition}
    A random variable $\phi$ is subgaussian if $\mathbb{E}[\|\phi\|_{\mathcal{H}}^2]$ is finite and there exists a constant $C>0$ such that for all $f\in \mathcal{H}$,
    $$
        \sup\limits_{k\geq 1} k^{-1/2}\mathbb{E}\bigl[|\langle \phi,f\rangle_{\mathcal{H}}|^k \bigr]^{1/k} \leq C \mathbb{E}\bigl[\langle \phi,f\rangle_{\mathcal{H}}^2 \bigr]^{1/2}.
    $$

\end{definition}

The following result provides an upper bound on the excess risk \eqref{def:excess_risk} of the empirical PCA projector relative to the population projector.

\begin{theorem}\label{th:main}
    Under Assumption \ref{hyp:4th_moment} and if $\Phi(\mu)$ is subgaussian, we have that:
    \begin{align*}
        \mathcal{E}_q^{\mathrm{PCA}}  \lesssim
        & \sum\limits_{j=1}^q   \max \left\{ \sqrt{\frac{\lambda_j \sum\limits_{k\geq j}\lambda_k}{n}}, \frac{\sum\limits_{k\geq j}\lambda_k}{n} \right\}\\
        & \qquad + 4 R^{1/4}\sqrt{q}~ r_m(\Phi),
    \end{align*}
where $(\lambda_j)_{j\geq 0}$ are the eigenvalues of the covariance operator $\Sigma$ defined in \eqref{eq:population_cov}.
\end{theorem}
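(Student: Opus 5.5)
We plan to split the excess risk through the oracle covariance $\Sigma^n = \frac{1}{n}\sum_{i=1}^n \Phi(\mu_i)\otimes\Phi(\mu_i)$ already used in the proof of Theorem \ref{th:cov_convergence}. Let $\tilde P_{\leq q}$ denote the projector onto the top-$q$ eigenspace of $\Sigma^n$; this is the PCA projector one would obtain with perfect access to the embeddings $\Phi(\mu_i)$. Since $R(P) = \mathrm{tr}(\Sigma) - \langle \Sigma, P\rangle_{\mathrm{HS}}$, we write
\begin{equation*}
R(\hat P_{\leq q}) - R(P_{\leq q}) = \bigl[R(\tilde P_{\leq q}) - R(P_{\leq q})\bigr] + \langle \Sigma, \tilde P_{\leq q} - \hat P_{\leq q}\rangle_{\mathrm{HS}}.
\end{equation*}
The first bracket is the excess risk of standard empirical PCA from $n$ i.i.d.\ copies of the subgaussian Hilbert-valued variable $\Phi(\mu)$. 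We will not redo this analysis. Instead, we invoke the known sharp excess-risk bound for empirical PCA under subgaussianity (Reiss--Wahl type relative perturbation bounds). That bound gives exactly $\sum_{j\le q}\max\{\sqrt{\lambda_j\sum_{k\ge j}\lambda_k/n},\ \sum_{k\ge j}\lambda_k/n\}$ up to constants. This is where the subgaussian assumption is used, and it produces the first term of the statement.

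The second term is the sampling error, which we handle deterministically and then take expectations. First we replace $\Sigma$ by $\Sigma^n$ at no cost:
\begin{equation*}
\langle \Sigma, \tilde P - \hat P\rangle_{\mathrm{HS}} = \langle \Sigma - \Sigma^n, \tilde P - \hat P\rangle_{\mathrm{HS}} + \langle \Sigma^n, \tilde P - \hat P\rangle_{\mathrm{HS}}.
\end{equation*}
In the last piece we use the optimality of $\tilde P$ for $\Sigma^n$ and of $\hat P$ for $\hat\Sigma$, so that $\langle \Sigma^n,\tilde P\rangle \ge \langle \Sigma^n,\hat P\rangle$ and $\langle \hat\Sigma,\hat P\rangle \ge \langle \hat\Sigma,\tilde P\rangle$. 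Hence
\begin{equation*}
\langle \Sigma^n, \tilde P - \hat P\rangle_{\mathrm{HS}} \le \langle \Sigma^n - \hat\Sigma, \tilde P - \hat P\rangle_{\mathrm{HS}} \le \|\Sigma^n-\hat\Sigma\|_{\mathrm{HS}}\,\|\tilde P - \hat P\|_{\mathrm{HS}}.
\end{equation*}
Because both projectors have rank $q$, $\|\tilde P-\hat P\|_{\mathrm{HS}}\le \sqrt{2q}$. Lemma \ref{lem:distance_error_sample_2} then gives the bound $2\sqrt{2q}\,R^{1/4}r_m(\Phi)$, which is at most $4R^{1/4}\sqrt q\, r_m(\Phi)$.

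The main obstacle is the cross term $\langle \Sigma-\Sigma^n,\tilde P-\hat P\rangle_{\mathrm{HS}}$. A crude Cauchy--Schwarz bound gives $\sqrt{2q}\,R^{1/2}n^{-1/2}$, which carries an extra $\sqrt q$ on the $n$-term and could exceed the eigenvalue-dependent first term. To avoid this, we would first try to rearrange the decomposition so that the oracle term is stated directly with the empirical $\Sigma^n$. Concretely, we write $R(\hat P)-R(P)=[R(\hat P)-R(\tilde P)]+[R(\tilde P)-R(P)]$ and bound $R(\hat P)-R(\tilde P)$ by $\langle \Sigma,\tilde P-\hat P\rangle$. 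We then hope that the cited PCA perturbation bound, which controls $\langle\Sigma,\tilde P - P\rangle$ via local fluctuations of $\Sigma-\Sigma^n$, absorbs the cross term, since $\hat P$ is a small perturbation of $\tilde P$.

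If that fails, we accept the extra term. It is of order $\sqrt q\, n^{-1/2}$, which is dominated under polynomial eigenvalue decay and is hidden in $\lesssim$ in the simplified statement, so the displayed rate $n^{-1/2}+\sqrt q\, m^{-\alpha}$ still follows.
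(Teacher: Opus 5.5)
There is a genuine gap: you never control the cross term $\langle \Sigma-\Sigma^n,\tilde P_{\leq q}-\hat P_{\leq q}\rangle_{\mathrm{HS}}$ at the claimed rate.

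Your hope that the cited bound absorbs it because $\hat P_{\leq q}$ is a small perturbation of $\tilde P_{\leq q}$ has no basis. The theorem assumes no eigengap, so the top-$q$ eigenspaces of $\Sigma^n$ and $\hat\Sigma$ need not be close.

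The fallback does not prove the statement either. Cauchy--Schwarz leaves an extra $\sqrt{2q}\,R^{1/2}n^{-1/2}$. The theorem is stated in its general form with explicit $q$-dependence, so this term cannot be absorbed into $\lesssim$. It is also not dominated under polynomial decay: for $\lambda_j\asymp j^{-\alpha}$ with $\alpha>3/2$, the first term is $\asymp n^{-1/2}$ uniformly in $q$. Your extra term is therefore larger by a factor $\sqrt q$, and it does not give the simplified rate $n^{-1/2}+\sqrt q\,m^{-\alpha}$ unless $q$ is treated as a constant.

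The missing idea is that the fluctuation $\Delta=\Sigma^n-\Sigma$ can be controlled when tested against \emph{any} rank-$q$ projector, data-dependent or not. So you should open up the first bracket instead of black-boxing it. Optimality of $\tilde P_{\leq q}$ for $\Sigma^n$ gives $R(\tilde P_{\leq q})-R(P_{\leq q})\le\langle\Sigma-\Sigma^n,P_{\leq q}-\tilde P_{\leq q}\rangle_{\mathrm{HS}}$. Adding your cross term, $\tilde P_{\leq q}$ telescopes away:
\begin{align*}
\bigl[R(\tilde P_{\leq q})-R(P_{\leq q})\bigr]+\langle \Sigma-\Sigma^n,\tilde P_{\leq q}-\hat P_{\leq q}\rangle_{\mathrm{HS}}
&\le\langle \Sigma-\Sigma^n,P_{\leq q}-\hat P_{\leq q}\rangle_{\mathrm{HS}}\\
&=\langle \Delta,\hat P_{\leq q}\rangle_{\mathrm{HS}}-\langle \Delta,P_{\leq q}\rangle_{\mathrm{HS}}.
\end{align*}
The last term has zero mean, since $P_{\leq q}$ is deterministic and $\mathbb{E}\Sigma^n=\Sigma$. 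For the first term:
\begin{itemize}
\item Ky Fan gives $\langle\Delta,\hat P_{\leq q}\rangle_{\mathrm{HS}}\le\sup_{P\in\mathcal{P}_q}\langle\Delta,P\rangle_{\mathrm{HS}}=\sum_{j\le q}\lambda_j(\Delta)$.
\item Courant--Fischer gives $\lambda_j(\Delta)\le\|P_{\geq j}\Delta P_{\geq j}\|_{\mathrm{op}}$, where $P_{\geq j}$ is the spectral projector of $\Sigma$ onto $\lambda_j,\lambda_{j+1},\dots$.
\item Koltchinskii--Lounici, applied to the subgaussian variable $P_{\geq j}\Phi(\mu)$ (operator norm $\lambda_j$, trace $\sum_{k\geq j}\lambda_k$), yields exactly the first term of the theorem.
\end{itemize}

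This is the paper's argument, which never introduces $\tilde P_{\leq q}$ at all. It uses optimality of $\hat P_{\leq q}$ for $\hat\Sigma$ to get $R(\hat P_{\leq q})-R(P_{\leq q})\le\langle\Sigma-\hat\Sigma,P_{\leq q}-\hat P_{\leq q}\rangle_{\mathrm{HS}}$, then splits $\Sigma-\hat\Sigma$ through $\Sigma^n$. Alternatively, within your own decomposition you can bound the cross term directly by $\sup_{P\in\mathcal{P}_q}\langle\Delta,P\rangle_{\mathrm{HS}}+\sup_{P\in\mathcal{P}_q}\langle-\Delta,P\rangle_{\mathrm{HS}}$ and run the same argument for $\pm\Delta$.

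Your handling of the sampling term is correct, and it even gives the slightly better constant $2\sqrt{2}$ in place of $4$.
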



In the context of PCA, the chosen number of components $q$ is typically small as the primary goal is dimensionality reduction and visualization. In particular, $q$ is often much smaller than $n$, the number of observed measures.  The dependance on $q$ in the upper bound therefore may not significantly affect the overall excess risk in practice.

Mirroring Theorem \ref{th:cov_convergence}, Theorem \ref{th:main} again reveals two distinct sources of error. The first term captures the sampling variability across the $n$ observed measures, expressed through the eigenvalues of the population covariance operator. The second term reflects the intra-measure approximation error arising since each $\mu_i$ is observed through $m$ samples.
\begin{sproof} Theorem \ref{th:main} is proved in Appendix \ref{sec:proof_main_th}. We decompose the excess risk of the empirical PCA projector into two terms by introducing the intermediate reconstruction error and its minimizer
    $$
    R^n(P) = \frac{1}{n} \sum\limits_{i=1}^n \| \Phi(\mu_i) - P \Phi(\mu_i)\|^2_{\mathcal{H}},  
    $$
and $ P^n_{\leq q}\in \argmin\limits_{P\in \mathcal{P}_q} R^n(P) $  in Lemma \ref{lem:2terms_risk}. Then Lemma \ref{lem:error_sample_1} provides a bound on the first term  of this decomposition and reveals the interplay between the global $n^{-1/2}$ rate and the local $n^{-1}$ rate. This result relies on the variational characterization of partial traces and a concentration inequality for covariance operators, following the sketch of the proof  in \cite{reiss2020nonasymptotic}[Proposition 2.5]. Lemma \ref{lem:error_sample_2} focuses on bounding the second term of this decomposition which depends on the convergence properties of the chosen embedding. It does so by bringing forward the 1-Wasserstein distance between the empirical distribution of the $ \Phi(\mu_i)$'s and the empirical distribution of the $\Phi(\hat{\mu}_i)$'s.
\end{sproof}

As a consequence from Theorem \ref{th:main}, we can also control the projection error.

\begin{corollary}\label{cor:projection_error}
    The projection error inherits the bounds of Theorem \ref{th:main} via the following:

    $$
    \|P_{\leq q} - \hat{P}_{\leq q}\|_{\mathrm{HS}}^2 \leq \frac{2 \mathcal{E}^{\mathrm{PCA}}_q}{\lambda_{q+1} - \lambda_q}.
    $$
\end{corollary}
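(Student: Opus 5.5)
The plan is to prove a deterministic inequality: for every rank-$q$ orthogonal projector $P$,
$$
R(P) - R(P_{\leq q}) \geq \tfrac{1}{2}(\lambda_q - \lambda_{q+1})\,\|P_{\leq q} - P\|_{\mathrm{HS}}^2 .
$$
We then apply it with $P = \hat{P}_{\leq q}$ and take expectations. Two reading conventions are needed. First, with eigenvalues ordered decreasingly, the gap in the statement must be read as the positive eigengap $\lambda_q - \lambda_{q+1}$, assumed nonzero. Second, since $\mathcal{E}^{\mathrm{PCA}}_q$ is an expectation, the left-hand side must be read as $\mathbb{E}\|P_{\leq q}-\hat{P}_{\leq q}\|_{\mathrm{HS}}^2$.

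\textbf{Step 1: rewrite the risk linearly in $P$.} Using $\|\phi - P\phi\|^2 = \|\phi\|^2 - \langle P\phi,\phi\rangle$ and $\langle P\phi,\phi\rangle = \langle P, \phi\otimes\phi\rangle_{\mathrm{HS}}$, we get
$$
R(P) = \mathrm{tr}(\Sigma) - \langle \Sigma, P\rangle_{\mathrm{HS}} .
$$
Interchanging expectation and trace is justified by Assumption \ref{hyp:4th_moment}, since $\Sigma$ is trace class. Consequently $P_{\leq q}$ may be taken to be the spectral projector onto the top $q$ eigenvectors $e_1,\dots,e_q$ of $\Sigma$ (Ky Fan), and
$$
R(P) - R(P_{\leq q}) = \langle \Sigma, P_{\leq q} - P\rangle_{\mathrm{HS}} .
$$
This identity holds for any fixed $P$. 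It also holds for the random projector $\hat{P}_{\leq q}$, because $R$ is evaluated with the population expectation and $\hat P_{\leq q}$ is simply plugged in.

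\textbf{Step 2: curvature bound.} Set $p_j = \langle P e_j, e_j\rangle \in [0,1]$, so that $\sum_j p_j = \mathrm{tr}(P) = q$. Expanding $\Sigma = \sum_j \lambda_j e_j\otimes e_j$ gives
$$
\langle \Sigma, P_{\leq q}-P\rangle_{\mathrm{HS}} = \sum_{j\leq q}\lambda_j(1-p_j) - \sum_{j>q}\lambda_j p_j \geq \lambda_q \sum_{j\le q}(1-p_j) - \lambda_{q+1}\sum_{j>q}p_j .
$$
Because $\sum_j p_j = q$, we have $\sum_{j>q}p_j = \sum_{j\leq q}(1-p_j) =: s$. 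Moreover, both projectors have rank $q$, so
$$
\|P_{\leq q}-P\|_{\mathrm{HS}}^2 = 2q - 2\langle P_{\leq q},P\rangle_{\mathrm{HS}} = 2q - 2\sum_{j\le q}p_j = 2s .
$$
Hence
$$
\langle \Sigma, P_{\leq q}-P\rangle_{\mathrm{HS}} \geq (\lambda_q-\lambda_{q+1})\,s = \tfrac12(\lambda_q-\lambda_{q+1})\|P_{\leq q}-P\|_{\mathrm{HS}}^2 ,
$$
which is the claimed deterministic inequality.

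\textbf{Step 3: conclude.} Apply Step 2 with $P=\hat{P}_{\leq q}$, rearrange, and take expectations:
$$
\mathbb{E}\|P_{\leq q}-\hat{P}_{\leq q}\|_{\mathrm{HS}}^2 \leq \frac{2\mathcal{E}^{\mathrm{PCA}}_q}{\lambda_q-\lambda_{q+1}} .
$$
The rates of Theorem \ref{th:main} then transfer directly.

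The main obstacle is not analytic. It is bookkeeping: making sure the identity in Step 1 holds with the population risk evaluated at a data-dependent projector, and handling the sign and indexing conventions of the eigengap correctly. The Step 2 inequality itself is the standard argument of \cite{reiss2020nonasymptotic}.
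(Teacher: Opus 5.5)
Your proof is correct under the two readings you adopt, and it uses the same mechanism the paper invokes. Since $1-p_j=\|P_j\hat{P}_{>q}\|_{\mathrm{HS}}^2$ and $p_k=\|P_k\hat{P}_{\leq q}\|_{\mathrm{HS}}^2$, your Step 2 computation is exactly the identity of Lemma 2.6 in \cite{reiss2020nonasymptotic},
\[
R(\hat{P}_{\leq q})-R(P_{\leq q})=\sum_{j\leq q}(\lambda_j-\lambda_{q+1})(1-p_j)+\sum_{k>q}(\lambda_{q+1}-\lambda_k)\,p_k ,
\]
together with $\|P_{\leq q}-\hat{P}_{\leq q}\|_{\mathrm{HS}}^2=2\sum_{k>q}p_k$.

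Where you differ from the paper is in which block supplies the gap, and your version is the one that works. The paper keeps only the $k>q$ block and bounds it below by $(\lambda_{q+1}-\lambda_q)\sum_{k>q}p_k$. That quantity is nonpositive, and dividing by $\lambda_{q+1}-\lambda_q\le 0$ reverses the inequality, so the displayed bound does not follow. As printed, its right-hand side is in fact nonpositive whenever $\lambda_q>\lambda_{q+1}$. Even with the sign corrected, the $k>q$ block alone cannot give the result, because its coefficient vanishes at $k=q+1$. For instance, taking $\hat{P}_{\leq q}$ to project onto $\mathrm{span}(e_1,\dots,e_{q-1},\cos\theta\,e_q+\sin\theta\,e_{q+1})$ makes that block zero, while $\|P_{\leq q}-\hat{P}_{\leq q}\|_{\mathrm{HS}}^2=2\sin^2\theta>0$.

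You instead extract the positive gap $\lambda_q-\lambda_{q+1}$ from the $j\leq q$ block via $\sum_{j\leq q}(1-p_j)=\sum_{k>q}p_k$. You also prove the inequality pathwise before taking expectations, which avoids the paper's conflation of a deterministic identity with the expected risk $\mathcal{E}^{\mathrm{PCA}}_q$. So your two ``conventions'' are really necessary corrections to the statement, and with them your argument is complete.
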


Corollary \ref{cor:projection_error} is proven in Appendix \ref{sec:projection_error_proof}.

\subsection{Implications of the main theorems}

\subsubsection{Choice of $m$ for a fixed $n$.}

For a fixed number of observed measures $n$, the rate
$$
n^{-1/2} + r_m(\Phi)
$$
highlights how the choice of the number of samples $m$ per measure affects the overall error. Using the rates summarized in Table \ref{table:rates_embeddings}, we can ensure that the embedding approximation error is of the same order as the parametric $n^{-1/2}$ term, as described in the following.

(i) For the \textbf{LOT embedding}, we have $r_m(\Phi^{\mathrm{LOT}}) \asymp m^{-1/d}$. One should then choose $$m \gtrsim n^{d/2}.$$
    
(ii) For the \textbf{KME and SW embeddings}, we have $r_m(\Phi^{\mathrm{KME}}) \asymp r_m(\Phi^{\mathrm{SW}}) \asymp m^{-1/2}$. One should then choose  $$m\gtrsim n.$$

%
%
%
This shows that the LOT embedding  requires a larger sample size per measure for high dimensional data, while empirical PCA using either the SW embedding or KME converges faster with fewer samples. 

\subsubsection{Eigenvalues decay}

To clarify the implications of Theorem \ref{th:main}, we evaluates the first term in the upper bound of the excess risk, which depends on the eigenvalues of the covariance operator $\Sigma$:
$$
\sum\limits_{j=1}^q \max \Bigl\{ \sqrt{\frac{\lambda_j \sum_{k\geq j}\lambda_k}{n}}, \frac{\sum_{k\geq j}\lambda_k}{n} \Bigr\}.
$$

We study this dependence by considering two classical cases of eigenvalue decay: polynomial and exponential. The following corollary gives explicit rates for the excess risk in these two scenarios.

\begin{corollary}\label{cor:eigenvalues_decay}
 Under the assumptions of Theorem \ref{th:main}, the followings hold:

(i) \textbf{Polynomial decay.} If $\lambda_j \asymp j^{-\alpha}$ for some $\alpha > 3/2$ and $q\leq n $, then
$$
\mathcal{E}_q^{\mathrm{PCA}} \lesssim n^{-1/2} + 2R\sqrt{q}~r_m(\Phi).
$$

(ii) \textbf{Exponential decay.} If $\lambda_j \asymp e^{-\alpha j}$ for some $\alpha > 0$, then
$$
\mathcal{E}_q^{\mathrm{PCA}} \lesssim 
\begin{cases}
n^{-1} + 2R\sqrt{q}~r_m(\Phi), & \text{if } n \leq (1-e^{-\alpha})^{-1}, \\
n^{-1/2} + 2R\sqrt{q}~r_m(\Phi), & \text{if } n \geq (1-e^{-\alpha})^{-1}.
\end{cases}
$$
\end{corollary}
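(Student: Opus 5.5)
The plan is to plug the two eigenvalue profiles into the first term of Theorem \ref{th:main},
$$
S_q := \sum_{j=1}^q \max\Bigl\{ \sqrt{\lambda_j T_j/n},\; T_j/n\Bigr\}, \qquad T_j := \sum_{k\geq j}\lambda_k ,
$$
and then bound $S_q$ explicitly. The second term $4R^{1/4}\sqrt{q}\,r_m(\Phi)$ is carried over unchanged. Since it is a constant multiple of $\sqrt q\, r_m(\Phi)$, it is absorbed into the $\lesssim$ together with the stated $2R\sqrt q\, r_m(\Phi)$; this presumes $R$ is bounded below, which I would note explicitly. So everything reduces to estimating tails $T_j$ and then summing over $j\le q$, using $\max\{a,b\}\le a+b$.

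\emph{Polynomial case.} If $\lambda_j\asymp j^{-\alpha}$ with $\alpha>1$, comparison with an integral gives $T_j\asymp j^{1-\alpha}$. Then $\sqrt{\lambda_jT_j}\asymp j^{(1-2\alpha)/2}$. Since $\alpha>3/2$, the exponent $(1-2\alpha)/2$ is $<-1$, so $\sum_{j\ge1} j^{(1-2\alpha)/2}<\infty$ uniformly in $q$. This is exactly where the hypothesis $\alpha>3/2$ enters, and it yields the $n^{-1/2}$ contribution. For the other branch, $\sum_{j\le q}T_j/n\lesssim n^{-1}\sum_j j^{1-\alpha}$, which converges because $\alpha>2$ is not assumed... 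Here I must be careful: for $3/2<\alpha\le 2$ the series diverges and the sum is of order $q^{2-\alpha}$ (or $\log q$ when $\alpha=2$). Using $q\le n$, this gives $n^{-1}q^{2-\alpha}\le n^{1-\alpha}\le n^{-1/2}$ since $\alpha>3/2$, with the log case handled similarly. This bookkeeping is the main obstacle, and the assumption $q\le n$ is designed precisely for it.

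\emph{Exponential case.} If $\lambda_j\asymp e^{-\alpha j}$, then $T_j\asymp e^{-\alpha j}/(1-e^{-\alpha})$. Writing $c=(1-e^{-\alpha})^{-1}$, the two branches become $\sqrt{c/n}\,e^{-\alpha j}$ and $(c/n)e^{-\alpha j}$. Their ratio is $\sqrt{c/n}$, so the second branch dominates exactly when $n\le c$, and the first when $n\ge c$. The threshold therefore applies uniformly in $j$. Summing the geometric series over $j\le q$ gives a factor at most $c\,e^{-\alpha}$, which depends only on $\alpha$ and is absorbed into the constant. This yields $n^{-1}$ when $n\le c$ and $n^{-1/2}$ when $n\ge c$, with the constants depending on $\alpha$ through $c$.
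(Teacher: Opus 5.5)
Your proposal is correct. The exponential case is exactly the paper's argument. The branch ratio is $\sqrt{c/n}$ uniformly in $j$, and a geometric sum then gives an $\alpha$-dependent constant. Your $c^2e^{-\alpha}/n$ for $n\le c$ is in fact the accurate constant; the paper's display drops a factor $(1-e^{-\alpha})^{-1}$, which is harmless after $\asymp$.

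In the polynomial case you handle the maximum differently. The paper never uses $\max\{a,b\}\le a+b$. It compares the branches termwise: $\frac{T_j/n}{\sqrt{\lambda_jT_j/n}}=\sqrt{T_j/(n\lambda_j)}\asymp\sqrt{j/n}\lesssim 1$ for $j\le q\le n$. So the square-root branch is the maximum for every $j\le q$, and only $n^{-1/2}\sum_{j\le q}j^{1/2-\alpha}$ has to be summed, with no case split on $\alpha$ versus $2$. Your route pays for that split, but it shows a bit more. The hypothesis $q\le n$ is unnecessary when $\alpha>2$, and for $3/2<\alpha<2$ it can be weakened to $q\lesssim n^{1/(4-2\alpha)}$. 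The square-root branch is bounded by $n^{-1/2}\sum_{j\ge 1}j^{1/2-\alpha}$ regardless of $q$.

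Your remark on the second term is also right. Theorem~\ref{th:main} gives $4R^{1/4}\sqrt q\,r_m(\Phi)$. Writing $2R\sqrt q\,r_m(\Phi)$ under $\lesssim$ requires the implicit constant to depend on $R$, or $R$ to be bounded below. The paper passes over this point silently.
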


\begin{remark}
Polynomial or exponential eigenvalue decay is standard in FPCA \cite{bosq2000linear, ramsay2005principal}, where the covariance operator can be viewed as an integral operator with some kernel. The smoothness of this kernel governs the decay of its eigenvalues: smoother kernels lead to faster decay. Assuming $\alpha>3/2$ corresponds to a mild smoothness condition on the embedded random object $\Phi(\mu)$. Such assumptions are classical and commonly used to control truncation errors in spectral approximation.
\end{remark}

\begin{table*}[h!]
\caption{Convergence rates of  $\mathbb{E}\|\Phi(\mu) - \Phi(\hat{\mu})\|^2_{\mathcal{H}}$ and $r_m(\Phi) = \sqrt{\mathbb{E}\|\Phi(\mu) - \Phi(\hat{\mu})\|^2_{\mathcal{H}}}$ for different embeddings ($d\geq 5$), where $\hat{\mu}$ denotes an estimator of $\mu$ based on $m$ i.i.d. samples. For the LOT embedding, the reference measure $\rho$ is in practice also approximated from $m_0$ samples. For simplicity, we assume here that $m_0=m$. Detailed assumptions and results are in Appendix \ref{sec:rates_embeddings}.}
\centering
\resizebox{\textwidth}{!}{
\begin{tabular}{c|ccccccc}
\toprule
\toprule
 \textbf{Embedding} & \textbf{Ref} & \textbf{Assumptions} & \textbf{Measure estimator} & \textbf{Estimator of $\Phi$} & $\bs{\mathbb{E}\|\Phi(\mu) - \Phi(\hat{\mu})\|^2_{\mathcal{H}}}$ & $\bs{r_m(\Phi)}$\\
\midrule
\midrule
 KME & \cite{berlinet2011reproducing}  & Bounded kernel & $\mu$: empirical measure & $\frac{1}{m}\sum_{j=1}^m k(x_j,\cdot)$ & $m^{-1}$ & $m^{-1/2}$\\
\midrule
\multirow{13}{*}{LOT} 
    &\cite{manole2024plugin} & \ref{hyp:X_compact_etc}, \ref{hyp:smooth_potential} & $\rho$: true measure   & True OT map & $m^{-2/d}$ & $m^{-1/d}$ \\
 & & & $\mu$: empirical measure &  & \\
\cmidrule(lr){2-7}
    &\cite{manole2024plugin} &  \ref{hyp:smooth_potential} & $\rho$: true measure & True OT map & $m^{-\frac{2s}{2(s-1)+d}}$ & $m^{-\frac{s}{2(s-1)+d}}$ \\
    & & $g$ $s$-smooth & $\mu$: wavelet estimator &  & \\
\cmidrule(lr){2-7}
    &\cite{manole2024plugin} & \ref{hyp:X_compact_etc}, \ref{hyp:X_standard}, \ref{hyp:smooth_potential} & $\rho$: empirical measure   & 1NN & $\log(m) m^{-2/d}$ & $\log(m)^{1/2} m^{-1/d}$ \\
    & & $f$ bounded & $\mu$: empirical measure &  & \\
\cmidrule(lr){2-7}
    &\cite{balakrishnan2025stability} & \ref{hyp:X_compact_etc}, \ref{hyp:smooth_potential} & $\rho$: any a.c. estimator    & True OT map & $m^{-\frac{2(s+1)}{2s+d}}$ & $m^{-\frac{s+1}{2s+d}}$ \\
    & & $f$ and $g$ $s$-smooth & $\mu$: any estimator &  & \\
\cmidrule(lr){2-7}
    &\cite{pooladian2023minimax} & \ref{hyp:X_compact_etc}  & $\rho$: empirical measure   & Barycentric projection & $m^{-1/2}$ & $m^{-1/4}$ \\
    & & $f$ bounded & $\mu$: empirical measure & of the entropic optimal coupling &  & \\
    & &  $\mu$ discrete &  & $\varepsilon \asymp m^{-1/2}$ &  & \\
\midrule
SW & \cite{chewi2024statistical} & Finite 2nd moment & $\mu$: empirical measure & $\Phi^{SW}(\hat{\mu})$ & $m^{-1}$ & $m^{-1/2}$\\
\bottomrule
\bottomrule
\end{tabular}}
\label{table:rates_embeddings}
\end{table*}

%% file: xp.tex
\section{Numerical experiments}\label{sec:experiments}

We conduct experiments on a simulated dataset of Gaussian measures and on real datasets, including flow cytometry datasets and 3D point clouds of objects. Additional experiments on simulated data and RGB images are provided in Appendix \ref{sec:additional_xp}.

\subsection{Numerical experiments on simulated data}\label{sec:simulations_gaussians}

In this section, bold symbols $\bs{\mu}, \bs{b},$...  will denote random objects.
To illustrate the transition phenomenon highlighted in the previous section, we consider the random measure $\bs{\mu} = \mathcal{N}(\bs{b}, \bs{\sigma}^2  I_2)$ where  $\bs{b} = (\bs{b}_1, \bs{b}_2)^T$ is a random vector of $\mathbb{R}^2$ with independent entries and $\bs{\sigma} \in\mathbb{R}$ is a real random variable. This setting allows closed-form expressions for the embeddings and their covariance operators, see Propositions \ref{prop:kme_eigfunc}, \ref{prop:lot_eigfunc} and \ref{prop:sw_eigfunc} in Appendix \ref{sec:gaussian_measures}.

We conduct numerical experiments to validate the theoretical convergence rates established in Theorems \ref{th:cov_convergence} and \ref{th:main}. We generate $n$ random Gaussian measures $\bs{\mu}_1,\ldots,\bs{\mu}_n$, with $\bs{\mu}_i = \mathcal{N}(\bs{b}_i, \bs{\sigma}_i^2  I_2)$, $\bs{b}_i \sim \mathcal{N}(0, \tau_b^2 I_2)$ and $\bs{\sigma}_i \sim \mathcal{N}(1, \tau_{\sigma}^2)$ independently. For each measure $\bs{\mu}_i$, we draw $m$ i.i.d. samples $\bs{X}_{i1},\ldots, \bs{X}_{im}$ and construct the empirical measure $\hat{\bs{\mu}}_i = \frac{1}{m}\sum_{j=1}^m \delta_{\bs{X}_{ij}}$. The embeddings are then computed from these empirical measures.

Since $\Phi^{\mathrm{KME}}, \Phi^{\mathrm{LOT}}$ and $\Phi^{\mathrm{SW}}$ are infinite-dimensional functions, we approximate them in practice by computing their evaluation on a finite set of points. For the KME, we randomly sample $m_0$ points from $\rho = \mathcal{N}(0, I_2)$ and evaluate $\Phi^{\mathrm{KME}}$ on these points. For the LOT embedding, we consider the empirical measure $ \hat{\rho}_{m_0}$ constructed from these same $m_0$ points. When $m_0\neq m$, Monge's problem \eqref{eq:monge} might not admit a solution. However, one can define a transport map from the optimal transport plan $\pi$ between $\hat{\rho}_{m_0}$ and $\hat{\mu}_i$ via the barycentric projection \cite{deb2021rates}:
\begin{equation}\label{eq:barycentric_projection}
    T_i(x) = \int_{\mathcal{X}} y \frac{\mathrm{d} \pi(x,y)}{\mathrm{d}  \hat{\rho}_{m_0}(x) \mathrm{d} \hat{\mu}_i(y)} \mathrm{d}\hat{\mu}_i(y).
\end{equation}

Estimation rates for the barycentric projection do not appear in Table \ref{table:rates_embeddings} but are detailed in Appendix \ref{sec:rates_embeddings}. Finally, for the SW embedding, $\Phi^{\mathrm{SW}}$ is evaluated on $p$ random directions on the unit sphere $\mathbb{S}^{1}$ and $T$ evenly-spaced quantiles for each projection.

All results are averaged over  10 independent trials, and metrics are normalized to $[0,1]$ using min-max scaling. We compute $\mathbb{E}\|\hat{\Sigma} - \Sigma\|_{\mathrm{HS}}$ and the PCA excess risk $\mathcal{E}^{\mathrm{PCA}}_q$ for $q=1$ using the closed-form expression of $\Sigma$ from Propositions \ref{prop:kme_eigfunc}, \ref{prop:lot_eigfunc} and \ref{prop:sw_eigfunc}. We consider two sampling regimes:

(i) \emph{Dense sampling regime} : We fix $m=1000$ and vary $n$ from $10$ to $1000$. The global bound  is dominated by the term in $n$, so the intra-measure sampling error is negligible compared to the inter-measure variability. The discretization parameters are $m_0 = 100$, $T=10$, and $p=10$. Figure \ref{fig:dense_regime} displays the results, and we observe slopes close to $-1/2$ for the covariance risk, and slopes close to $-1$ for the excess risk, which is consistent with our theory.

(ii) \emph{Sparse sampling regime} : We fix $n=500$ and vary $m$ from $10$ to $500$. The global bound is governed by the $r_m(\Phi)$ term so the intra-measure sampling error dominates. The discretization parameters are $m_0 = 500$, $T=40$, and $p=40$. Figure \ref{fig:sparse_regime} displays the results.

Additionally, we study the variation of both risks with regards to the ambient dimension $d$, while keeping $n$ and $m$ fixed, see Table \ref{table:risk_wrt_dimension_d}. The observations are consistent with Table \ref{table:rates_embeddings}. In particular, LOT exhibits a significant degradation as $d$ increases, while KME and SW remain essentially stable.
\begin{table*}[h!]
\caption{Comparison of KME, LOT and SW across dimensions for covariance and excess risks. All reported values are multiplied by $100$ for readability. }
\centering
\begin{tabular}{c|ccc|ccc}
\hline
& \multicolumn{3}{c|}{\textbf{Covariance risk}} & \multicolumn{3}{c}{\textbf{Excess risk}} \\
$d$ & KME & LOT & SW & KME & LOT & SW \\
\hline
2  & $1.43 \pm 0.31$ & $2.40 \pm 0.17$  & $0.71 \pm 0.12$
   & $0.02 \pm 0.03$ & $0.07 \pm 0.02$ & $0.02 \pm 0.01$ \\

5  & $2.60 \pm 0.32$ & $11.94 \pm 0.13$ & $0.66 \pm 0.15$
   & $0.13 \pm 0.12$ & $0.38 \pm 0.05$ & $0.01 \pm 0.01$ \\

10 & $4.29 \pm 0.32$ & $34.72 \pm 0.38$ & $0.55 \pm 0.05$
   & $0.17 \pm 0.11$ & $0.71 \pm 0.05$ & $0.02 \pm 0.01$ \\

50 & $12.64 \pm 0.78$ & $235.64 \pm 0.97$ & $0.52 \pm 0.03$
   & $0.71 \pm 0.11$ & $0.93 \pm 0.01$ & $0.02 \pm 0.01$ \\
\hline
\end{tabular}
\label{table:risk_wrt_dimension_d}
\end{table*}

\begin{figure}[t]
\centering
\begin{subfigure}[t]{0.48\columnwidth}
\centering
\includegraphics[width=\textwidth]{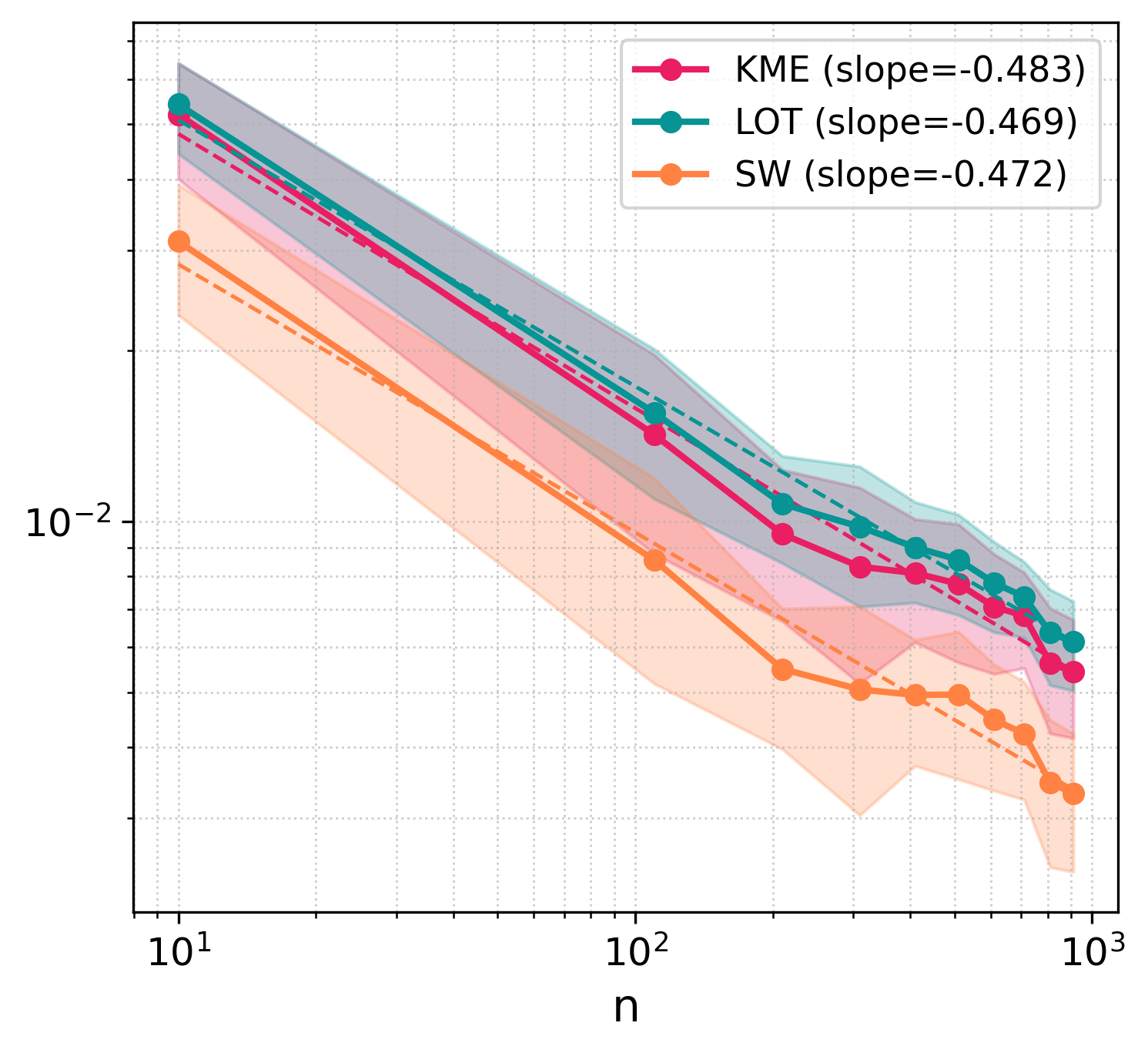}
\caption{$\mathbb{E}\|\hat{\Sigma} - \Sigma\|_{\mathrm{HS}}$}
\end{subfigure}%
\hfill%
\begin{subfigure}[t]{0.48\columnwidth}
\centering
\includegraphics[width=\textwidth]{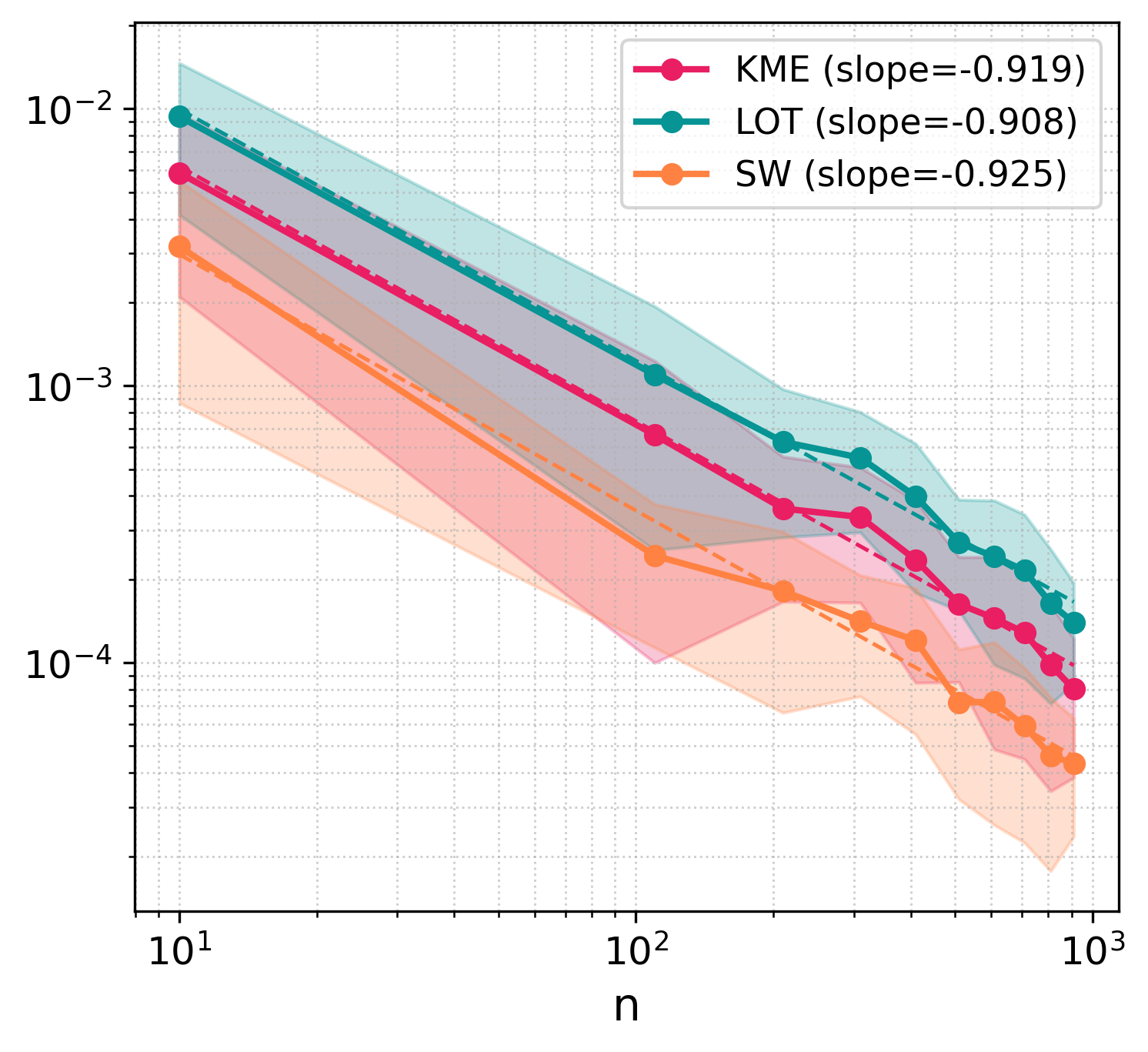}
\caption{$\mathcal{E}_q^{\mathrm{PCA}}$}
\end{subfigure}
\caption{\textbf{Dense sampling regime} ($m=1000$ fixed, $n$ varies from $10$ to $1000$). The figures are plotted in log-log scale.}
\label{fig:dense_regime}
\end{figure}

\begin{figure}[t]
\centering
\begin{subfigure}[t]{0.48\columnwidth}
\centering
\includegraphics[width=\textwidth]{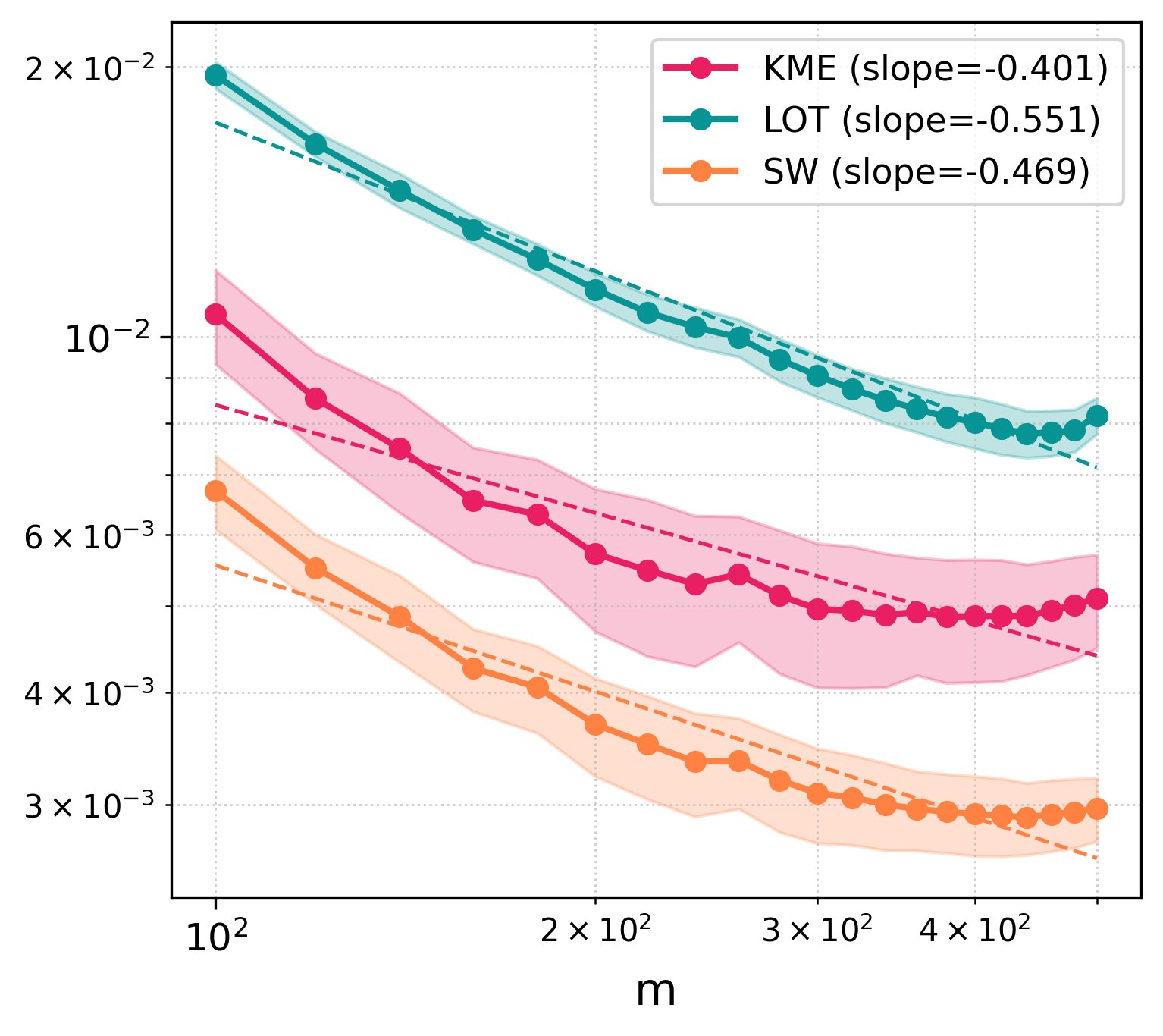}
\caption{$\mathbb{E}\|\hat{\Sigma} - \Sigma\|_{\mathrm{HS}}$}
\end{subfigure}
\hfill
\begin{subfigure}[t]{0.48\columnwidth}
\centering
\includegraphics[width=\textwidth]{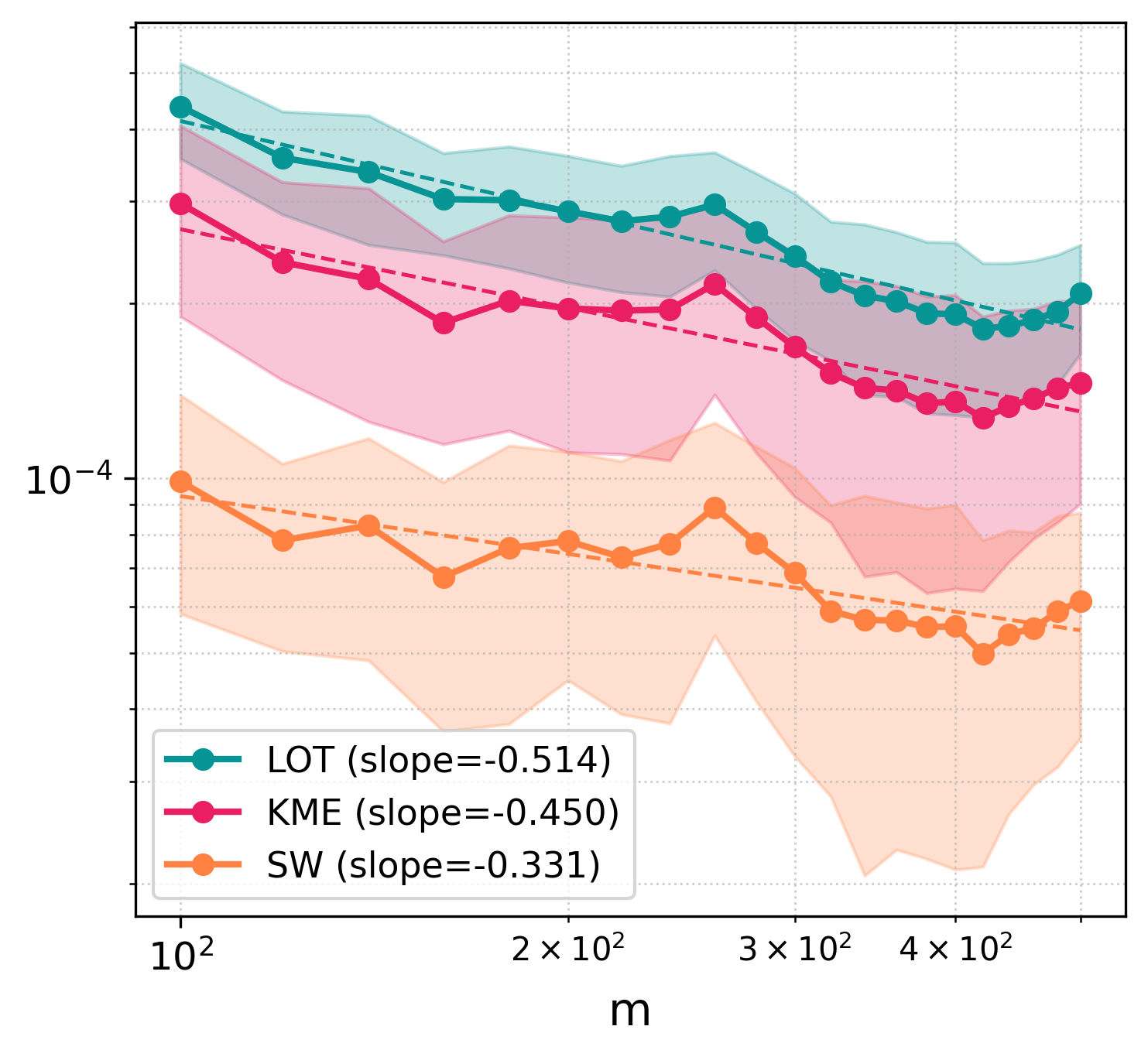}
\caption{$\mathcal{E}_q^{\mathrm{PCA}}$}
\end{subfigure}
\caption{\textbf{Sparse sampling regime} ($n=500$ fixed, $m$ varies from $100$ to $500$). The figures are plotted in log-log scale.}
\label{fig:sparse_regime}
\end{figure}

We perfom an additional experiment to study the behaviour of the PCA excess risk as a function of the number of components $q$, while keeping $n$, $m$ and $d$ fixed, see Table \ref{tab:q_results}.

\begin{table}[h]
\caption{Comparison of KME, LOT and SW for the excess risk across different values of $q$.  All reported values are multiplied by $100$ for readability.}
\centering
\small
\begin{tabular}{c|ccc}
\hline
$q$ 
& KME & LOT & SW\\
\hline

1  & $16.23 \pm 11.66$ & $4.13 \pm 0.11$  & $0.13 \pm 0.07$ \\

2  & $24.03 \pm 8.57$  & $8.28 \pm 0.15$  & $0.44 \pm 0.23$ \\

5  & $49.22 \pm 8.39$  & $20.20 \pm 0.32$ & $2.31 \pm 0.19$ \\

10 & $53.79 \pm 8.54$  & $39.09 \pm 0.47$ & $7.91 \pm 0.00$ \\

\hline
\end{tabular}
\label{tab:q_results}
\end{table}

\subsection{Numerical experiments on real data}\label{sec:real_data_experiments}

In this section, we focus on the results of PCA after subsampling the measures $\hat{\mu}_i$, followed by their embedding into a Hilbert space using the KME, the LOT and the SW embeddings. To assess the stability of PCA representations with respect to random subsampling, we use the following methodology throughout our experiments. For each subsample size $m$, we generate $N=20$ independent random subsamples from each measure, compute the corresponding embedding (KME, LOT, or SW), and apply PCA to obtain a two-dimensional representation. For iteration $k$ and subsample size $m$, let $Y_k^{(m)} \in\mathbb{R}^{n\times 2}$ denote the two-dimensional PCA representation obtained from the $k$-th random subsample of size $m$. We compute the Procrustes \cite{gower1975generalized} disparity $d_{kl}^{(m)} = d(Y_k^{(m)}, Y_l^{(m)})$ between iterations $k$ and $l$, with 
$$
d(Y_k^{(m)}, Y_l^{(m)}) = \min\limits_{R,s,t}\| Y_k^{(m)} - sY_l^{(m)} - \mathbf{1}t^T\|^2,
$$
where the minimum is taken over rotations $R\in\mathbb{R}^{2\times 2}$, scaling factors $s\in\mathbb{R}_+$ and translations $t\in\mathbb{R}^2$. Here, $\|\cdot\|_F$ denotes the Frobenius norm and $\mathbf{1}\in\mathbb{R}^n$ is the vector of ones. The quantity $d$ measures the dissimilarity between two datasets after optimal alignment. The mean Procrustes disparity for subsample size $m$ is then computed as:
$$
\overline{d}^{(m)} = \frac{2}{N(N-1)} \sum_{1 \leq k < l \leq N} d_{kl}^{(m)},
$$
and its standard deviation is computed in a similar way.

\paragraph{Flow cytometry datasets.} We use publicly available flow cytometry datasets \cite{brusic2014computational} from the T-cell panel of the Human Immunology Project Consortium (HIPC). Seven laboratories each stained three replicates of three cryo-preserved biological samples (denoted patient 1, 2, and 3), yielding a total of $n = 7\times 3\times 3 = 63$ datasets. Each dataset consists of measurements of 7 markers for approximately 50,000 cells, which we treat as a point cloud in dimension $d=7$. We perform PCA on these $n$ measures using the three embeddings considered in this work and study the effect of subsampling the cells from each measure. For KME, we use the radial basis function (RBF) kernel $k(x,y) = \exp(-\|x-y\|^2/(2\sigma^2))$ with bandwidth $\sigma = 1$. As in the simulated experiments, we sample $m_0=100$ points from $\rho = \mathcal{N}(0, I_d)$ to approximate the KME and LOT embeddings. For the SW embedding, we sample $p=50$ random projections and $T=100$ quantiles.

We start by visualizing the 2-dimensional PCA representation for each embedding and different subsample sizes in Figure \ref{fig:hipc_pca_2d}. We observe that the representation stabilizes around $m=50$, which is nearly identical to the $m=2000$ case, representing a substantial reduction since only 0.5\% of the approximately 50,000 available cells are needed while maintaining high-quality representations.

\begin{figure}[t]
\centering
\includegraphics[width=\columnwidth]{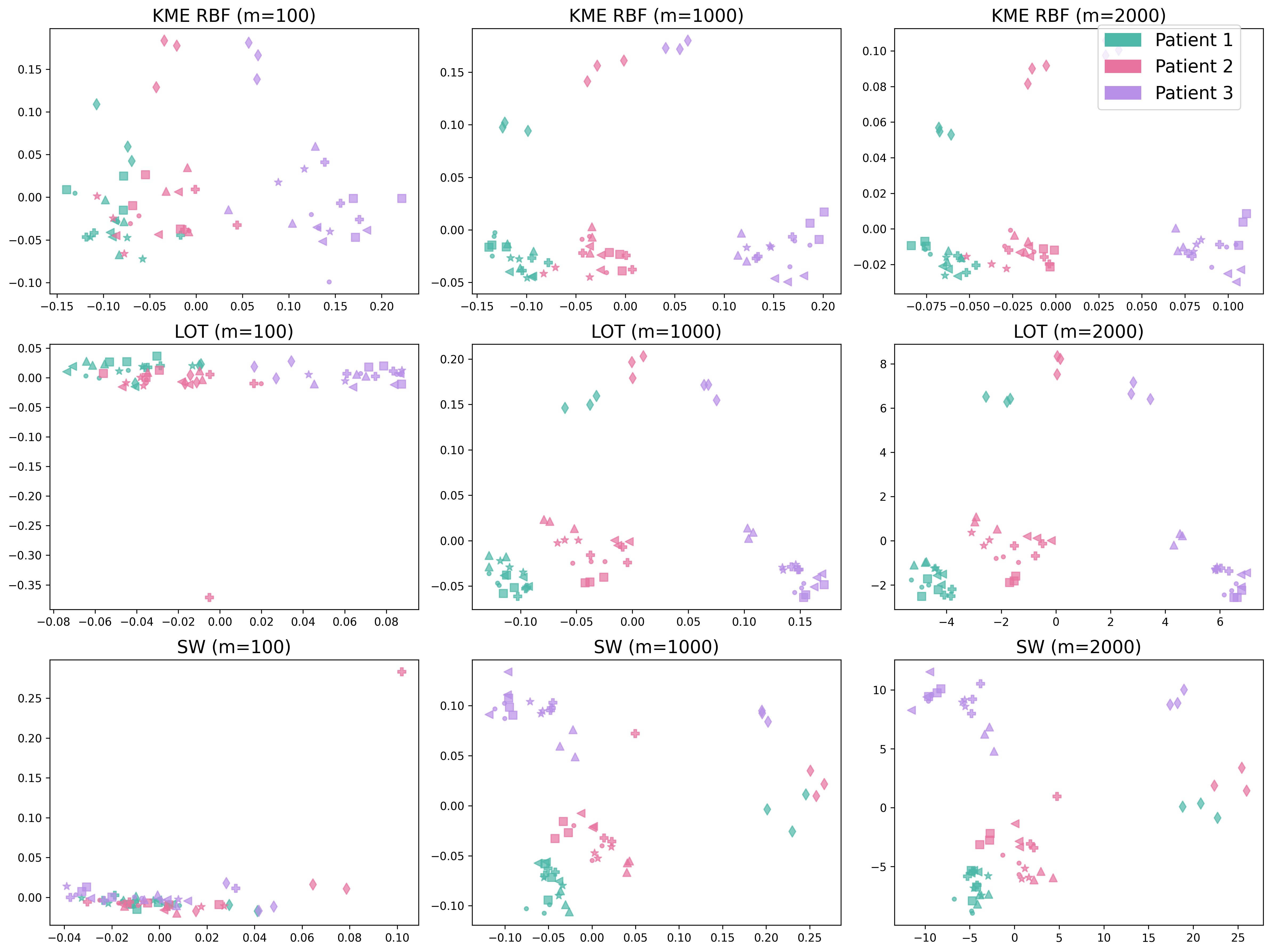}
\caption{2D PCA representation of the $n=63$ HIPC measures for different subsample sizes $m$ (per column) and three embeddings (KME, LOT, SW, per row respectively). Each plot shows the projection onto the first two principal components, with different markers indicating different laboratories.}
\label{fig:hipc_pca_2d}
\end{figure}

The mean Procrustes disparity and corresponding standard deviation are plotted in Figure \ref{fig:hipc_disparity}. We observe the expected decrease in disparity as $m$ increases, with stability around $m=400$, confirming that relatively small subsample sizes are sufficient to obtain stable PCA representations.

\begin{figure}[t]
\centering
\begin{subfigure}[t]{0.48\columnwidth}
\centering
\includegraphics[width=0.95\textwidth]{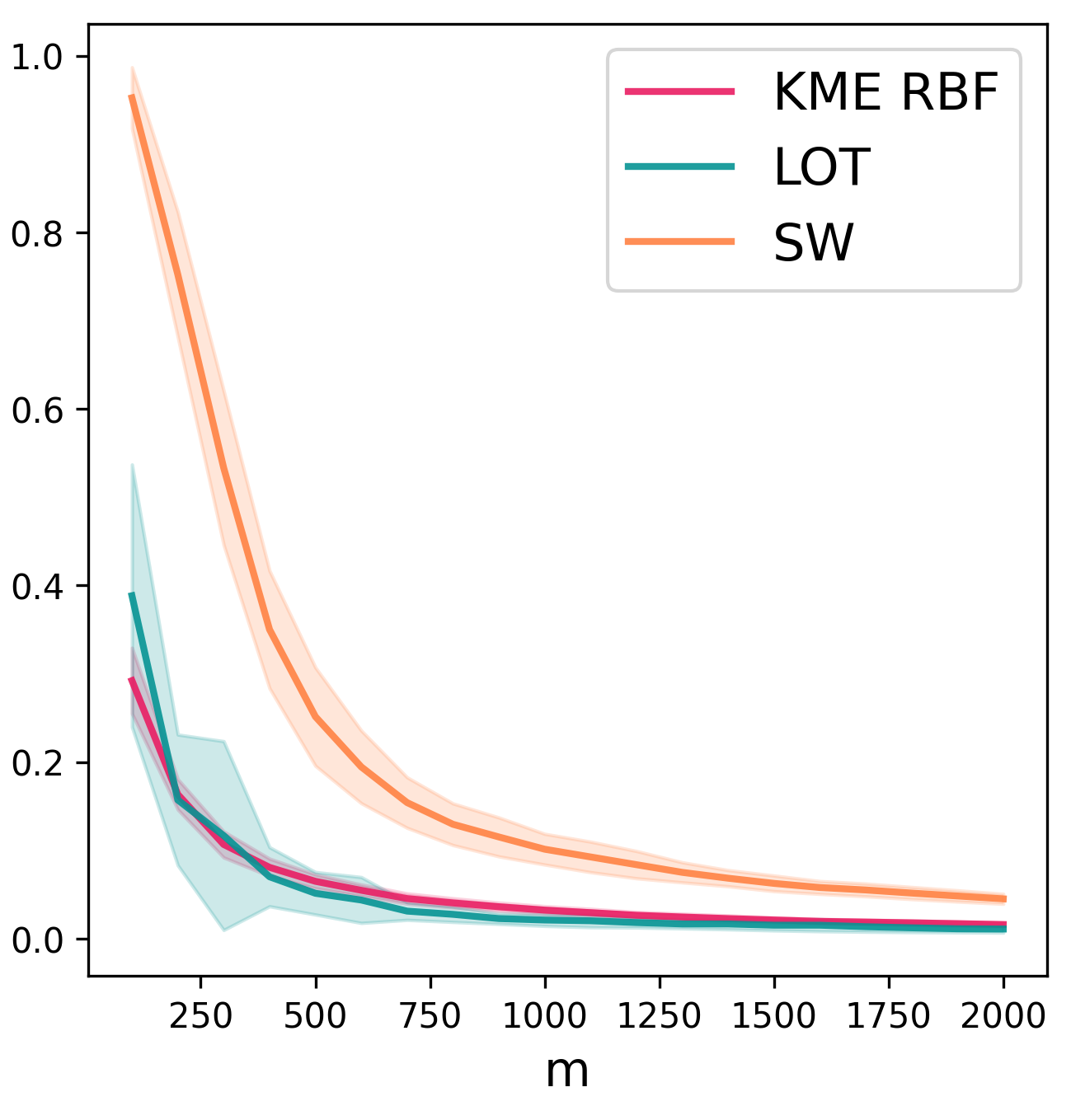}
\caption{HIPC dataset}
\label{fig:hipc_disparity}
\end{subfigure}%
\hfill%
\begin{subfigure}[t]{0.48\columnwidth}
\centering
\includegraphics[width=1.1\textwidth]{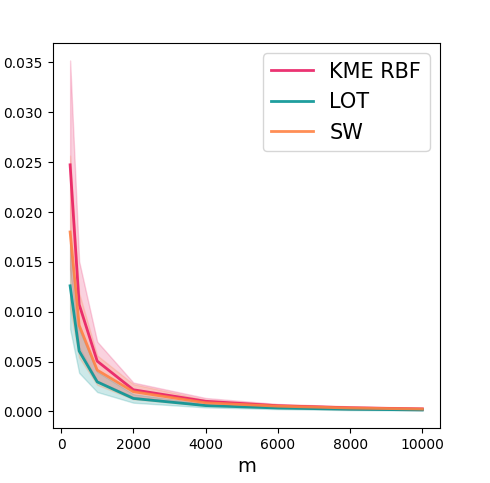}
\caption{3D shapes dataset}
\label{fig:shapes_disparity}
\end{subfigure}
\caption{Evolution of mean (solid line) and standard deviation (shaded region) of Procrustes disparity  for different subsample sizes on two datasets.}
\label{fig:disparity_comparison}
\end{figure}

\paragraph{3D shapes dataset.} We consider the ModelNet10 dataset \cite{wu20153d}, which consists of 3D point clouds representing objects from different categories. We focus on 4 object categories and randomly select 5 shapes from each category, corresponding to the measures $\hat{\mu}_i$. We select shapes of approximately 50,000 points. For KME, we use the RBF kernel with bandwidth $\sigma=0.5$. For the LOT embedding, the reference measure is the uniform measure on $[-1,1]^d$ from which we sample $m_0=100$ points. For the SW embedding, the number of quantiles and the number of projections are respectively $T=20$ and $p=20$. We visualize the 2-dimensional PCA representation for each embedding and different subsample sizes in Figure \ref{fig:shapes_pca_2d}. We also compute the mean and standard deviation of the Procrustes disparity as in the previous section, with results shown in Figure \ref{fig:shapes_disparity}. Similar to the HIPC dataset, we observe that the PCA representation 
stabilizes for relatively small subsample sizes. Figure \ref{fig:shapes_examples} shows examples of shapes sampled with $m=2000$ points, which corresponds to the stabilization regime indicated by the Procrustes analysis in Figure \ref{fig:shapes_disparity}.

\begin{figure}[t]
\centering
\includegraphics[width=\columnwidth]{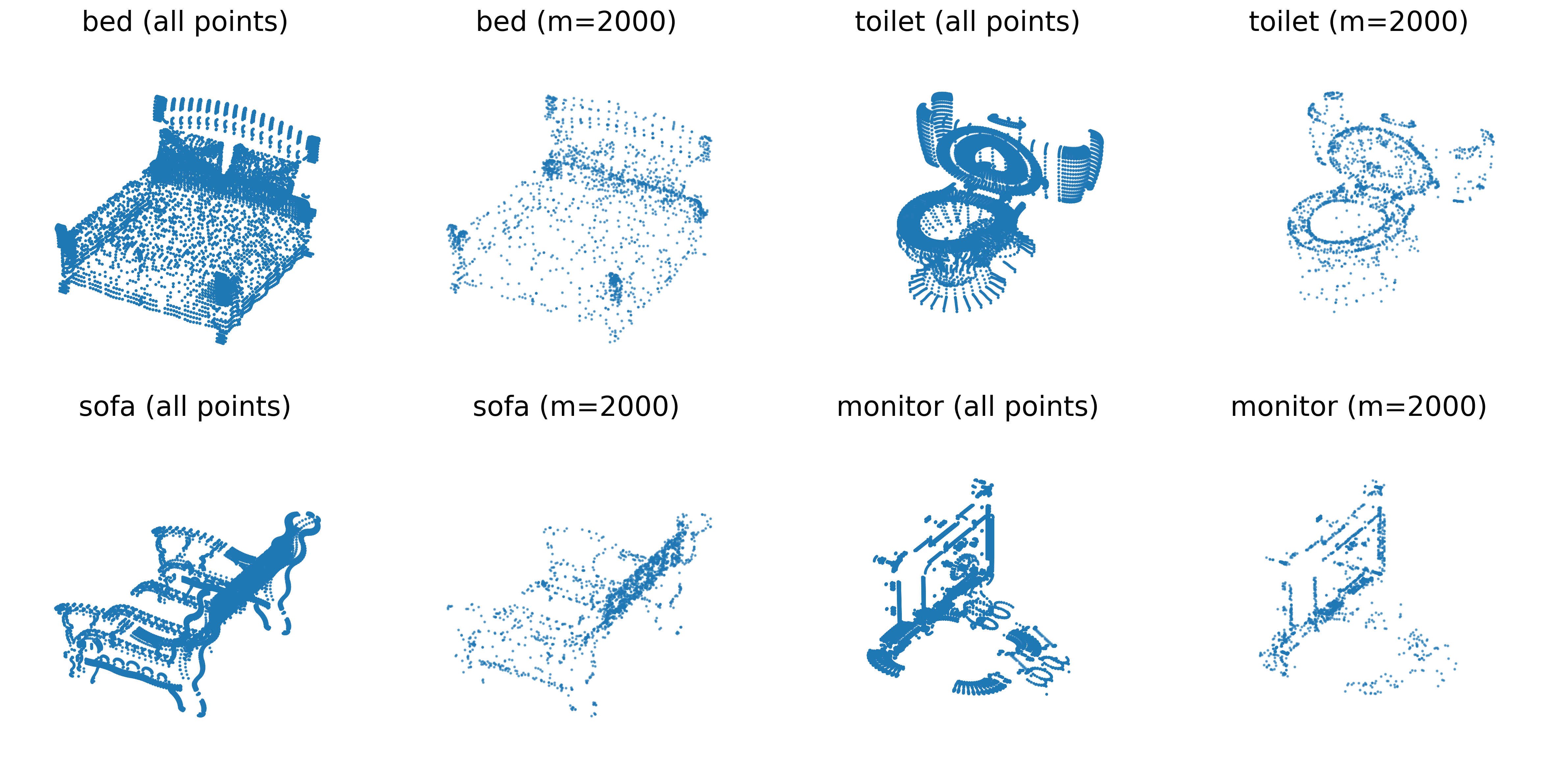}
\caption{Example of 3D shapes from the ModelNet10 dataset and their point cloud representation obtained by sampling $m=2000$ points.}
\label{fig:shapes_examples}
\end{figure}

\begin{figure}[t]
\centering
\includegraphics[width=\columnwidth]{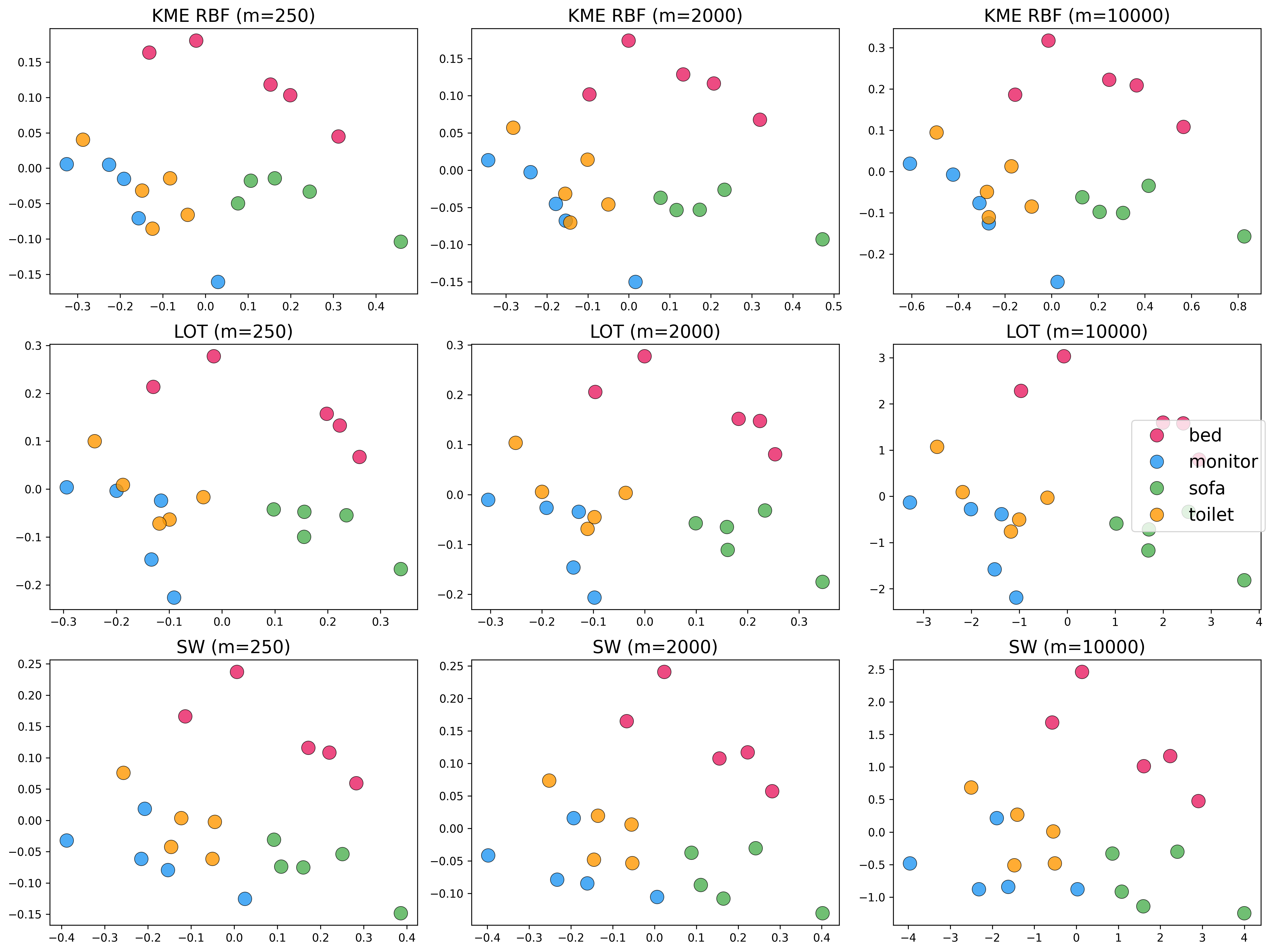}
\caption{2D PCA representation of the ModelNet10 dataset for different subsample sizes $m$ and three embeddings (KME, LOT, SW). Each plot shows the projection onto the first two principal components.}
\label{fig:shapes_pca_2d}
\end{figure}

%% file: conclusion.tex
\section{Conclusion}

Understanding the interplay between the number $n$ of measures and the number $m$ of samples per measure is essential for PCA of probability measures via Hilbert space embeddings. In this work, we characterized the convergence behavior of the empirical covariance operator and the PCA excess risk, establishing rates of the form $n^{-1/2} + m^{-\alpha}$, where $\alpha>0$ depends on the chosen embedding. In practice, the number $n$ of available distributions is often fixed. Our results then provide insight into choosing $m$: it should be large enough to accurately approximate the population PCA, while remaining small to limit computational costs. Through our numerical experiments, we observe that relatively small values of $m$ are sufficient to obtain high-quality PCA estimators.

%% file: proof_upper_bound_cov.tex
\section{Proof of Theorem \ref{th:cov_convergence}}\label{sec:proof_cov_convergence}

To simplify the presentation in the proofs, we use the following notation for the  embeddings of $\mu$, $\mu_i$ and $\hat{\mu}_i$:
$$
\phi := \Phi(\mu) \qquad \phi_i := \Phi(\mu_i) \qquad \hat{\phi}_i := \Phi(\hat{\mu}_i)
$$

Theorem \ref{th:cov_convergence} is obtained by first applying the triangle inequality
$$
\mathbb{E}\|\Sigma -\hat{\Sigma}\|_{\mathrm{HS}} \leq \mathbb{E}\|\Sigma -\Sigma^n \|_{\mathrm{HS}} + \mathbb{E}\|\Sigma^n -\hat{\Sigma} \|_{\mathrm{HS}},
$$
 where $$\Sigma^n = \frac{1}{n}\sum\limits_{i=1}^n \phi_i \otimes \phi_i$$
is the intermediate covariance operator.
Then, the proof consists in bounding each of the above terms using Lemmas \ref{lem:distance_error_sample_1} and \ref{lem:distance_error_sample_2}, whose proofs are deferred to  Sections \ref{sec:proof_distance_error_sample_1} and \ref{sec:proof_distance_error_sample_2}.

\begin{lemma}\label{lem:distance_error_sample_1}
Under Assumption \ref{hyp:4th_moment},
$$
\mathbb{E}\|\Sigma - \Sigma^n\|_{\mathrm{HS}} \leq R^{1/2}n^{-1/2}.
$$

\end{lemma}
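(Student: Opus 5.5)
The plan is a standard second-moment argument in the Hilbert space of Hilbert--Schmidt operators. Write $Z_i := \phi_i\otimes\phi_i - \Sigma$ for $1\leq i\leq n$, so that $\Sigma^n - \Sigma = \frac{1}{n}\sum_{i=1}^n Z_i$.

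The $Z_i$ are i.i.d. random elements of the separable Hilbert space of HS operators. They are centered, since $\mathbb{E}[\phi\otimes\phi] = \Sigma$ by definition \eqref{eq:population_cov}. They are also square integrable, because $\|\phi_i\otimes\phi_i\|_{\mathrm{HS}} = \|\phi_i\|_{\mathcal{H}}^2$ and Assumption \ref{hyp:4th_moment} gives $\mathbb{E}\|\phi_i\|_{\mathcal{H}}^4 \leq R$. As noted after Assumption \ref{hyp:4th_moment}, this also makes $\Sigma$ a well-defined (trace-class) Bochner expectation.

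First, by Jensen's inequality,
$$\mathbb{E}\|\Sigma-\Sigma^n\|_{\mathrm{HS}} \leq \bigl(\mathbb{E}\|\Sigma-\Sigma^n\|_{\mathrm{HS}}^2\bigr)^{1/2}.$$
Next, expand the squared norm using the HS inner product:
$$\mathbb{E}\Bigl\|\frac1n\sum_i Z_i\Bigr\|_{\mathrm{HS}}^2 = \frac{1}{n^2}\sum_{i,k}\mathbb{E}\langle Z_i, Z_k\rangle_{\mathrm{HS}}.$$
For $i\neq k$, independence and centering give $\mathbb{E}\langle Z_i,Z_k\rangle_{\mathrm{HS}} = \langle \mathbb{E}Z_i,\mathbb{E}Z_k\rangle_{\mathrm{HS}} = 0$. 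This step uses that the Bochner expectation commutes with the inner product for independent square-integrable variables. Only the diagonal terms survive, so the squared error equals $\frac1n\mathbb{E}\|Z_1\|_{\mathrm{HS}}^2$.

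Then bound the variance term:
$$\mathbb{E}\|Z_1\|_{\mathrm{HS}}^2 = \mathbb{E}\|\phi\otimes\phi\|_{\mathrm{HS}}^2 - \|\Sigma\|_{\mathrm{HS}}^2 \leq \mathbb{E}\|\phi\|_{\mathcal{H}}^4 \leq R.$$
Here the norm identity $\|f\otimes f\|_{\mathrm{HS}} = \|f\|_{\mathcal{H}}^2$ is checked on an orthonormal basis:
$$\sum_j \|(f\otimes f)e_j\|^2 = \sum_j \langle f,e_j\rangle^2\|f\|^2 = \|f\|^4.$$
Combining the three steps gives $\mathbb{E}\|\Sigma-\Sigma^n\|_{\mathrm{HS}} \leq (R/n)^{1/2}$, which is the claimed bound $R^{1/2}n^{-1/2}$.

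The only delicate point is measure-theoretic: we need the expectation of operator-valued random variables to be well defined, and the cross terms to vanish in infinite dimensions. I would handle this by working in the HS space as a separable Hilbert space and invoking the operator-theory facts of Appendix \ref{app:operators}. The computation itself is routine.
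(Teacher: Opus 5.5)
Your proof is correct and follows essentially the same route as the paper: Jensen's inequality, then an expansion of the squared HS norm in which independence kills the cross terms, leaving $\frac{1}{n}\bigl(\mathbb{E}\|\phi\|_{\mathcal{H}}^4 - \|\Sigma\|_{\mathrm{HS}}^2\bigr)\le R/n$. The only difference is bookkeeping: you center the summands $Z_i=\phi_i\otimes\phi_i-\Sigma$ first, whereas the paper expands $\|\Sigma\|_{\mathrm{HS}}^2 - 2\langle\Sigma,\Sigma^n\rangle_{\mathrm{HS}} + \|\Sigma^n\|_{\mathrm{HS}}^2$ term by term.
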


\begin{lemma}\label{lem:distance_error_sample_2}
Under Assumption \ref{hyp:4th_moment},

\[
\mathbb{E}\|\Sigma^n - \hat{\Sigma}\|_{\mathrm{HS}}
\;\le\;
2R^{1/4}~r_m(\Phi).
\]
\end{lemma}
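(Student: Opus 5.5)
We bound the difference term by term. Since the $n$ pairs $(\phi_i,\hat\phi_i)$ are identically distributed, the triangle inequality gives
$$
\mathbb{E}\|\Sigma^n - \hat{\Sigma}\|_{\mathrm{HS}} \le \frac{1}{n}\sum_{i=1}^n \mathbb{E}\|\phi_i\otimes\phi_i - \hat\phi_i\otimes\hat\phi_i\|_{\mathrm{HS}} = \mathbb{E}\|\phi_1\otimes\phi_1 - \hat\phi_1\otimes\hat\phi_1\|_{\mathrm{HS}}.
$$
No independence across $i$ is needed. It therefore suffices to control a single rank-one difference.

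For this we use the algebraic identity
$$
\phi\otimes\phi - \hat\phi\otimes\hat\phi = (\phi-\hat\phi)\otimes\phi + \hat\phi\otimes(\phi-\hat\phi),
$$
which follows from bilinearity of $\otimes$. We combine it with the fact that $\|f\otimes g\|_{\mathrm{HS}} = \|f\|_{\mathcal{H}}\|g\|_{\mathcal{H}}$ for rank-one operators. This yields
$$
\|\phi_1\otimes\phi_1 - \hat\phi_1\otimes\hat\phi_1\|_{\mathrm{HS}} \le \|\phi_1-\hat\phi_1\|_{\mathcal{H}}\bigl(\|\phi_1\|_{\mathcal{H}} + \|\hat\phi_1\|_{\mathcal{H}}\bigr).
$$

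Next we take expectations and apply Cauchy--Schwarz to each product:
$$
\mathbb{E}\bigl[\|\phi_1-\hat\phi_1\|\,\|\phi_1\|\bigr] \le \bigl(\mathbb{E}\|\phi_1-\hat\phi_1\|^2\bigr)^{1/2}\bigl(\mathbb{E}\|\phi_1\|^2\bigr)^{1/2} = r_m(\Phi)\,\bigl(\mathbb{E}\|\phi_1\|^2\bigr)^{1/2}.
$$
The same bound holds with $\hat\phi_1$ in place of $\phi_1$ in the second factor. By Jensen's inequality, $\mathbb{E}\|\phi_1\|^2 \le (\mathbb{E}\|\phi_1\|^4)^{1/2} \le R^{1/2}$, so $(\mathbb{E}\|\phi_1\|^2)^{1/2}\le R^{1/4}$. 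Likewise $(\mathbb{E}\|\hat\phi_1\|^2)^{1/2}\le R^{1/4}$, using the second half of Assumption~\ref{hyp:4th_moment}; here $\hat\mu_1$ has the same law as $\hat\mu$ in the assumption, since $\mu_1$ is a copy of $\mu$. Summing the two contributions gives the bound $2R^{1/4} r_m(\Phi)$.

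The only delicate points are bookkeeping. First, we must justify that $\hat\phi_1\otimes\hat\phi_1$ is Hilbert--Schmidt with finite expected norm, which the fourth-moment assumption guarantees. Second, we must check that $r_m(\Phi)$ as defined in \eqref{eq:sampling_error} is exactly the $L^2$ norm of $\phi_1-\hat\phi_1$ under the joint law of $\mu_1$ and its samples. Neither should pose a real obstacle.
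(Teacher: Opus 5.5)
Your proof is correct and follows essentially the same route as the paper. Both use the bilinear identity $\phi\otimes\phi-\hat\phi\otimes\hat\phi=(\phi-\hat\phi)\otimes\phi+\hat\phi\otimes(\phi-\hat\phi)$, the rank-one HS norm, Cauchy--Schwarz, and the second-moment bound $R^{1/2}$ from Assumption~\ref{hyp:4th_moment}; the only cosmetic difference is that you apply Cauchy--Schwarz to each product separately, whereas the paper applies it once to $\|\phi_i-\hat\phi_i\|_{\mathcal H}(\|\phi_i\|_{\mathcal H}+\|\hat\phi_i\|_{\mathcal H})$ and then uses $(a+b)^2\le 2(a^2+b^2)$, with both yielding the same constant $2R^{1/4}$.
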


\subsection{Proof of Lemma \ref{lem:distance_error_sample_1}}\label{sec:proof_distance_error_sample_1}

 First, using Jensen's inequality, one has that:
\begin{equation}\label{eq:jensen}
\mathbb{E}\|\Sigma - \Sigma^n\|_{\mathrm{HS}}  \leq \sqrt{\mathbb{E}\bigl[\|\Sigma - \Sigma^n\|^2_{\mathrm{HS}}\bigr]}.
\end{equation}

Let us now focus on $\mathbb{E}\bigl[\|\Sigma - \Sigma^n\|^2_{\mathrm{HS}}\bigr]$,  as follows.
\begin{align*}
\mathbb{E}\bigl[\|\Sigma - \Sigma^n\|^2_{\mathrm{HS}}\bigr]&= \mathbb{E}\biggl[\Bigl\| \Sigma - \frac{1}{n}\sum\limits_{i=1}^n \phi_i \otimes \phi_i \Bigr\|^2_{\mathrm{HS}}\biggr] \\ 
&= \mathbb{E}\biggl[\bigl\|\Sigma \bigr\|^2_{\mathrm{HS}}\biggr] - 2 \mathbb{E}\biggl[\Bigl\langle\Sigma, \frac{1}{n}\sum\limits_{i=1}^n \phi_i \otimes \phi_i\Bigr\rangle_{\mathrm{HS}} \biggr] + \mathbb{E}\biggl[\Bigl\|  \frac{1}{n}\sum\limits_{i=1}^n \phi_i \otimes \phi_i \Bigr\|^2_{\mathrm{HS}}\biggr]
\end{align*}
We have for the first two terms:
\begin{align*}
 \mathbb{E}\biggl[\bigl\|\Sigma \bigr\|^2_{\mathrm{HS}}\biggr] - 2 \mathbb{E}\biggl[\Bigl\langle\Sigma, \frac{1}{n}\sum\limits_{i=1}^n \phi_i \otimes \phi_i\Bigr\rangle_{\mathrm{HS}} \biggr] &= \bigl\|\Sigma \bigr\|^2_{\mathrm{HS}}- 2 \Bigl\langle\Sigma, \frac{1}{n}\sum\limits_{i=1}^n \mathbb{E}\bigl[\phi_i \otimes \phi_i \bigr]\Bigr\rangle_{\mathrm{HS}}\\
&= \bigl\|\Sigma \bigr\|^2_{\mathrm{HS}} - 2 \Bigl\langle\Sigma, \frac{1}{n}\sum\limits_{i=1}^n \Sigma \Bigr\rangle_{\mathrm{HS}}\\
&= - \bigl\|\Sigma \bigr\|^2_{\mathrm{HS}} 
\end{align*}
For the third term, we can write:
\begin{align*}
\mathbb{E}\biggl[\Bigl\|  \frac{1}{n}\sum\limits_{i=1}^n \phi_i \otimes \phi_i \Bigr\|^2_{\mathrm{HS}}\biggr] &= \frac{1}{n^2}\mathbb{E}\biggl[\Bigl\|  \sum\limits_{i=1}^n \phi_i \otimes \phi_i \Bigr\|^2_{\mathrm{HS}}\biggr]\\
&= \frac{1}{n^2} \mathbb{E}\biggl[\sum\limits_{i=1}^n \bigl\|\phi_i \otimes \phi_i \bigr\|_{\mathrm{HS}}^2 \biggr] + \frac{1}{n^2} \sum\limits_{i=1}^n \sum\limits_{ j=1, j\neq i}^n\mathbb{E}\biggl[ \bigl\langle \phi_i \otimes \phi_i, \phi_j \otimes \phi_j \bigr\rangle_{\mathrm{HS}} \biggr]\\
\end{align*}

Using the independence of the $\phi_i$'s, we have:
\begin{align*}
\mathbb{E}\biggl[\Bigl\|  \frac{1}{n}\sum\limits_{i=1}^n \phi_i \otimes \phi_i \Bigr\|^2_{\mathrm{HS}}\biggr] &= \frac{1}{n^2} \sum\limits_{i=1}^n \mathbb{E}\biggl[\|\phi_i \|_{\mathcal{H}}^4   \biggr] + \frac{1}{n^2} \sum\limits_{i=1}^n \sum\limits_{ j=1, j\neq i}^n  \bigl\langle \mathbb{E}\Bigl[\phi_i \otimes \phi_i \Bigr] , \mathbb{E}\Bigl[\phi_j \otimes \phi_j \Bigr] \bigr\rangle_{\mathrm{HS}} \\
&= \frac{1}{n^2} \sum\limits_{i=1}^n \mathbb{E}\biggl[\|\phi_i \|_{\mathcal{H}}^4  \biggr]  + \frac{1}{n^2} \sum\limits_{i=1}^n \sum\limits_{ j=1, j\neq i}^n  \bigl\langle \Sigma , \Sigma \bigr\rangle_{\mathrm{HS}} \\
&= \frac{\mathbb{E}\bigl[\|\phi \|_{\mathcal{H}}^4  \bigr]}{n}    + \frac{n-1}{n} \|\Sigma\|^2_{\mathrm{HS}}\\
\end{align*}

where we used that $\|\phi \otimes \phi\|^2_{\mathrm{HS}} = \|\phi\|^4_{\mathcal{H}}$, which follows from the definition of the HS norm: for any orthonormal basis $(f_j)_{j\geq 1}$ of $\mathcal{H}$, we have
$$
\|\phi \otimes \phi\|^2_{\mathrm{HS}} = \sum_{j\geq 1} \|(\phi \otimes \phi)f_j\|^2_{\mathcal{H}}
= \sum_{j\geq 1} |\langle \phi, f_j \rangle_{\mathcal{H}}|^2 \|\phi\|^2_{\mathcal{H}}
= \|\phi\|^4_{\mathcal{H}}.
$$

Then,
\begin{align*}
\mathbb{E}\bigl[\|\Sigma - \Sigma^n\|^2_{\mathrm{HS}}\bigr] &= \mathbb{E}\biggl[\bigl\|\Sigma \bigr\|^2_{\mathrm{HS}}\biggr] - 2 \mathbb{E}\biggl[\Bigl\langle\Sigma, \frac{1}{n}\sum\limits_{i=1}^n \phi_i \otimes \phi_i\Bigr\rangle_{\mathrm{HS}} \biggr] + \mathbb{E}\biggl[\Bigl\|  \frac{1}{n}\sum\limits_{i=1}^n \phi_i \otimes \phi_i \Bigr\|^2_{\mathrm{HS}}\biggr]\\
&= - \bigl\|\Sigma \bigr\|^2_{\mathrm{HS}} + \frac{\mathbb{E}\bigl[\|\phi \|_{\mathcal{H}}^4  \bigr]}{n}    + \frac{n-1}{n}\bigl\| \Sigma \bigr\|_{\mathrm{HS}}^2  \\
&= \frac{1}{n} \Bigl(\mathbb{E}\bigl[\|\phi \|_{\mathcal{H}}^4\bigr] -  \bigl\|\Sigma \bigr\|^2_{\mathrm{HS}}\Bigr)\\
&\leq \frac{\mathbb{E}\bigl[\|\phi \|_{\mathcal{H}}^4\bigr]}{n}\\
&\leq Rn^{-1}.
\end{align*}
 Thus, we can conclude with \eqref{eq:jensen} that
$$
\mathbb{E}\|\Sigma - \Sigma^n\|_{\mathrm{HS}} \leq R^{1/2} n^{-1/2}
$$

\subsection{Proof of Lemma \ref{lem:distance_error_sample_2}}\label{sec:proof_distance_error_sample_2}

For each $1 \leq i \leq n$ we have, by linearity of the tensor product operator, the identity 
\[
\phi_i\otimes\phi_i - \hat{\phi}_i\otimes\hat{\phi}_i
= (\phi_i - \hat{\phi}_i)\otimes\phi_i
  \;+\;
  \hat{\phi}_i\otimes(\phi_i - \hat{\phi}_i).
\]
Summing over $i$ and dividing by $n$ yields
\[
\Sigma^n - \hat{\Sigma}
=
\frac1n\sum_{i=1}^n 
\left[(\phi_i - \hat{\phi}_i)\otimes\phi_i
  \;+\;
  \hat{\phi}_i\otimes(\phi_i - \hat{\phi}_i)\right].
\]

Since the equality 
$\|u\otimes v\|_{\mathrm{HS}} = \|u\|_{\mathcal H}\,\|v\|_{\mathcal H}$ holds,
it follows that
\[
\|\phi_i\otimes\phi_i - \hat{\phi}_i\otimes\hat{\phi}_i\|_{\mathrm{HS}}
\;\le\;
\|\phi_i - \hat{\phi}_i\|_{\mathcal H}
\bigl(\|\phi_i\|_{\mathcal H} + \|\hat{\phi}_i\|_{\mathcal H}\bigr),
\]
and therefore
\[
\|\Sigma^n - \hat{\Sigma}\|_{\mathrm{HS}}
\le
\frac1n\sum_{i=1}^n 
\|\phi_i - \hat{\phi}_i\|_{\mathcal H}
\bigl(\|\phi_i\|_{\mathcal H} + \|\hat{\phi}_i\|_{\mathcal H}\bigr).
\]

Taking expectations and applying Cauchy--Schwarz's inequality to each term gives
\[
\mathbb{E}\|\Sigma^n - \hat{\Sigma}\|_{\mathrm{HS}}
\le
\frac1n\sum_{i=1}^n
\sqrt{\mathbb{E}\|\phi_i - \hat{\phi}_i\|_{\mathcal H}^2}\;
\sqrt{\mathbb{E}\bigl(\|\phi_i\|_{\mathcal H}+\|\hat{\phi}_i\|_{\mathcal H}\bigr)^2}.
\]

The fourth-moment Assumption \ref{hyp:4th_moment} implies the second moment bound $\mathbb{E}\|\Phi(\mu)\|_{\mathcal{H}}^2 \leq R^{1/2}$. Using this and the inequality $(a+b)^2\leq 2(a^2+b^2)$, we obtain that
\[
\sqrt{\mathbb{E}\bigl(\|\phi_i\|_{\mathcal H}+\|\hat{\phi}_i\|_{\mathcal H}\bigr)^2}
\le
\sqrt{2\,\mathbb{E}\|\phi_i\|_{\mathcal H}^2 + 2\,\mathbb{E}\|\hat{\phi}_i\|_{\mathcal H}^2}
\le
2 R^{1/4}.
\]
Substituting this into the previous bound gives
\begin{align*}
\mathbb{E}\|\Sigma^n - \hat{\Sigma}\|_{\mathrm{HS}}
&\le
2R^{1/4}n^{-1}\sum_{i=1}^n
\sqrt{\mathbb{E}\|\phi_i - \hat{\phi}_i\|_{\mathcal H}^2}\\
&\leq 2R^{1/4}n^{-1}\sum_{i=1}^n r_m(\Phi)\\
&= 2R^{1/4} ~r_m(\Phi), 
\end{align*}
which concludes the proof.

%% file: proof_minimax.tex
\section{Proof of Theorem \ref{th:minimax}}\label{sec:proof_minimax}

 In this section, we will use for clarity bold symbols $\bs{\mu}, \bs{b}, \cdots $ to denote random objects. Furthermore, $\hat{\Sigma} = \hat{\Sigma}((\bs{X}_{ij})_{1 \leq i \leq n, 1 \leq j \leq m})$ denotes an estimator of $\Sigma_{ \mu}$ based on the samples $(\bs{X}_{ij})_{1 \leq i \leq n, 1 \leq j \leq m}$. 

\subsection{General reduction scheme for proving minimax rates}

We will follow the classical scheme for proving minimax bounds on $\mathbb{E} \|\Sigma_{\mu} - \hat{\Sigma}\|_{\mathrm{HS}}$, see Section 2.2 in \cite{tsybakov2003introduction}. We start by using Markov's inequality:

\begin{equation}\label{eq:markov_ineq}
\inf\limits_{\hat{\Sigma}} \sup\limits_{\mu \in \mathcal{D}(R)} \mathbb{E}\Bigl[ n^{1/2} ~\|\Sigma_{\mu} - \hat{\Sigma}\|_{\mathrm{HS}}\Bigr] \geq C~\inf\limits_{\hat{\Sigma}} \sup\limits_{\mu \in \mathcal{D}(R)} \mathbb{P}\Bigl(\|\Sigma_{\mu} - \hat{\Sigma}\|_{\mathrm{HS}} \geq  C n^{-1/2}\Bigr)\end{equation}

And we will reduce the problem to two ($M=2$) hypothesis:

\begin{equation}\label{eq:reduction}
C~\inf\limits_{\hat{\Sigma}} \sup\limits_{\mu \in \mathcal{D}(R)} \mathbb{P}\Bigl(\|\Sigma_{\mu} - \hat{\Sigma}\|_{\mathrm{HS}} \geq Cn^{-1/2}\Bigr) \geq C~\inf\limits_{\hat{\Sigma}} \max\limits_{k\in \{1,2\}} \mathbb{P}^{(k)} \bigl( \|\Sigma^{(k)}-\hat{\Sigma} \|_{\mathrm{HS}}\geq  Cn^{-1/2}\bigr),
\end{equation}

where for $k=1,2$, $\mathbb{P}^{(k)}$ will denote the probability measure of the data $(\bs{X}_{ij})_{1 \leq i \leq n, 1 \leq j \leq m}$ under the hypothesis $H_k$, $\bs{\mu}^{(k)}$ will denote a specific random measure under $H_k$, and $\Sigma^{(k)} = \mathbb{E}[\Phi(\bs{\mu}^{(k)}) \otimes \Phi(\bs{\mu}^{(k)})]$ denotes the covariance operator of the random embedding $\Phi(\bs{\mu}^{(k)})$. These will be defined precisely in Section \ref{sec:probability_model} below. Inequality \cite{tsybakov2003introduction}[Equation 2.8] states that if 
\begin{equation}\label{eq:2s_separation_condition}
\|\Sigma^{(1)}- \Sigma^{(2)}\|_{\mathrm{HS}} \geq 2Cn^{-1/2},
\end{equation}
then for any estimator $\hat{\Sigma}$,
$$
\mathbb{P}^{(j)}\bigl(\|\hat{\Sigma}- \Sigma^{(j)}\|_{\mathrm{HS}} \geq Cn^{-1/2}\bigr) \geq \mathbb{P}^{(j)} \bigl(\psi^* \neq j\bigr), \qquad \forall j\in \{1,2\}
$$

where $\psi^* = \argmin\limits_{k\in \{1,2\}} \|\hat{\Sigma}- \Sigma^{(k)}\|_{\mathrm{HS}}$.  It then follows that if \eqref{eq:2s_separation_condition} holds, then

\begin{equation}\label{eq:4s_separation}
C~\inf\limits_{\hat{\Sigma}} \max\limits_{k\in \{1,2\}} \mathbb{P}^{(k)} \bigl(\|\hat{\Sigma} - \Sigma^{(k)}\|_{\mathrm{HS}} \geq Cn^{-1/2} \bigr) \geq C~\inf\limits_{\psi \in\{1,2\}} \max\limits_{k\in\{1,2\}} \mathbb{P}^{(k)}(\psi \neq k).
\end{equation}

Additionally, Theorem 2.2 in \cite{tsybakov2003introduction} states that if
\begin{equation}\label{eq:kl_bounded}
\mathrm{KL}(\mathbb{P}^{(1)}, \mathbb{P}^{(2)}) \leq \alpha < \infty    
\end{equation}
where $\mathrm{KL}$ is the Kullback-Leibler divergence, then 
\begin{equation}\label{eq:encore_une}
\inf\limits_{ \psi\in\{1,2\}} \max\limits_{k\in \{1,2\}} \mathbb{P}^{(k)}(\psi \neq k)\geq \max \Bigl(\frac{e^{-\alpha}}{4}, \frac{1-\sqrt{\alpha/2}}{2} \Bigr).    
\end{equation}
Therefore, if we construct two hypothesis which satisfy \eqref{eq:2s_separation_condition} and \eqref{eq:kl_bounded}, then combining inequalities \eqref{eq:markov_ineq}, \eqref{eq:reduction}, \eqref{eq:4s_separation}, and \eqref{eq:encore_une} gives:
\begin{equation} 
\inf\limits_{\hat{\Sigma}} \sup\limits_{\mu \in \mathcal{D}(R)} \mathbb{E}\Bigl[ n^{1/2} ~\|\Sigma_{\mu} - \hat{\Sigma}\|_{\mathrm{HS}}\Bigr] \geq C\max \Bigl(\frac{e^{-\alpha}}{4}, \frac{1-\sqrt{\alpha/2}}{2} \Bigr).
\end{equation}

\subsection{Probability model}\label{sec:probability_model}

Let $s\geq 1$ be a real number. Our two hypothesis $H_k, k \in \{1,2\}$, will be separated in the following way:
$$
s^{(2)} = s, \qquad s^{(1)} = s + \varepsilon, \qquad \varepsilon = n^{-1/2}.
$$
For $k \in \{1,2\}$ we define the following random variables (mutually independent):
$$
\mathbf{b}^{(k)} \sim \mathcal{N}(0,~ s^{(k)} I_d) \qquad \mathbf{Z}_{ij}^{(k)} \sim \mathcal{N}(0,~ I_d), \qquad 1\leq i\leq n, ~1\leq j\leq m.
$$
Let $\bs{b}_1^{(k)},\cdots, \bs{b}_n^{(k)}$ be independent copies of $\bs{b}^{(k)}$. The data is sampled according to the following model:

\begin{equation}\label{eq:model_hk}
\bs{X}_{ij}^{(k)} = \bs{b}_i^{(k)} + \bs{Z}_{ij}^{(k)} \in\mathbb{R}^d, \qquad 1\leq i\leq n, \quad 1\leq j\leq m.
\end{equation}

For each $k\in\{1,2\}$, if we denote $\bs{\mu}_i^{(k)}$ the measure from which $(\bs{X}_{ij}^{(k)} | \bs{b}^{(k)}_i)_{1\leq j\leq m}$ are sampled , it follows, from model \eqref{eq:model_hk} that $\bs{\mu}_1^{(k)}, \cdots, \bs{\mu}_n^{(k)}$ are independent copies of the random measure $\bs{\mu}^{(k)} = \mathcal{N}(\bs{b}^{(k)},~ I_d)$. For each $1\leq i \leq n$, we form the concatenated vectors by stacking the $m$ vectors $\bs{X}_{ij}^{(k)}$ and the $m$ vectors $\bs{Z}_{ij}^{(k)}$:
$$
\bs{X}_i^{(k)} = (\bs{X}_{i1}^{(k)},\cdots, \bs{X}_{im}^{(k)})^T = \sum\limits_{j=1}^m e_j \otimes \bs{X}_{ij}^{(k)}\in\mathbb{R}^{dm}  \qquad \bs{Z}_i^{(k)} = (\bs{Z}_{i1}^{(k)},\cdots, \bs{Z}_{im}^{(k)})^T = \sum\limits_{j=1}^m e_j \otimes \bs{Z}_{ij}^{(k)} \in\mathbb{R}^{dm},
$$
where $e_j\in\mathbb{R}^m$ is the $j$-th canonical basis vector and $\otimes$ denotes the Kronecker product. Then the random vectors $\bs{X}_1^{(k)},\cdots, \bs{X}_n^{(k)}$ are independent and can be written as:
\begin{align*}
\bs{X}_i^{(k)} &= \sum\limits_{j=1}^m e_j \otimes \bs{b}_i^{(k)} + \sum\limits_{j=1}^m e_j \otimes \bs{Z}_{ij}^{(k)}\\
&= \mathbbm{1}_m \otimes \bs{b}_i^{(k)} + \bs{Z}_i^{(k)}
\end{align*}
where $\mathbbm{1}_m = (1,\cdots,1)^T \in\mathbb{R}^m$.  For $k\in \{1,2\}$, let $\mathbb{P}^{(k)}$ be the probability measure of the data in model \eqref{eq:model_hk} under hypothesis $H_k$. The following Lemma, proven in Section \ref{sec:proba_model_proof}, allows to bound the Kullback Leibler divergence between $\mathbb{P}^{(1)}$ and $\mathbb{P}^{(2)}$ by a constant.

\begin{lemma}\label{lemma:proba_model}
The  deconditioned law of the $\bs{X}_i^{(k)}$ is:
$$
\bs{X}_{i}^{(k)} \sim \mathcal{N}\Bigl(0,~ I_{dm} + s^{(k)} \mathbbm{1}_m\mathbbm{1}_m^T \otimes I_d\Bigr).
$$
Furthermore, the Kullback Leibler divergence between $\mathbb{P}^{(1)}$ and $\mathbb{P}^{(2)}$ can be bounded as:
\begin{equation}\label{eq:KL}
 \mathrm{KL}(\mathbb{P}^{(1)}, \mathbb{P}^{(2)}) \leq \frac{d}{4}.
\end{equation}

\end{lemma}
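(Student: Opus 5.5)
The plan is to prove the two claims in order: first the marginal law of each stacked vector $\bs{X}_i^{(k)}$, then the KL bound, by tensorizing over $i$ and diagonalizing the covariances simultaneously.

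\textbf{Law of $\bs{X}_i^{(k)}$.} From the representation $\bs{X}_i^{(k)} = \mathbbm{1}_m\otimes \bs{b}_i^{(k)} + \bs{Z}_i^{(k)}$, the vector $\bs{X}_i^{(k)}$ is a linear image of the jointly Gaussian, mutually independent, centered vector $(\bs{b}_i^{(k)},\bs{Z}_i^{(k)})$. It is therefore centered Gaussian. Its covariance is the sum of the two independent contributions:
\begin{itemize}
\item $\mathrm{Cov}(\bs{Z}_i^{(k)}) = I_{dm}$;
\item by the mixed-product rule for Kronecker products,
$$\mathrm{Cov}(\mathbbm{1}_m\otimes \bs{b}_i^{(k)}) = (\mathbbm{1}_m\otimes I_d)\, s^{(k)}I_d\, (\mathbbm{1}_m\otimes I_d)^T = s^{(k)}\,\mathbbm{1}_m\mathbbm{1}_m^T\otimes I_d.$$
\end{itemize}
Adding them gives $\Sigma_k := (I_m + s^{(k)}\mathbbm{1}_m\mathbbm{1}_m^T)\otimes I_d$, which is the first claim.

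\textbf{Tensorization.} Under each hypothesis the $\bs{X}_i^{(k)}$, $1\le i\le n$, are i.i.d. Hence $\mathbb{P}^{(k)} = \mathcal{N}(0,\Sigma_k)^{\otimes n}$, and additivity of KL over products gives
$$\mathrm{KL}(\mathbb{P}^{(1)},\mathbb{P}^{(2)}) = n\,\mathrm{KL}\bigl(\mathcal{N}(0,\Sigma_1),\mathcal{N}(0,\Sigma_2)\bigr).$$
We then use the closed form for centered Gaussians,
$$\mathrm{KL}\bigl(\mathcal{N}(0,\Sigma_1),\mathcal{N}(0,\Sigma_2)\bigr) = \tfrac12\bigl[\mathrm{tr}(\Sigma_2^{-1}\Sigma_1) - dm - \log\det(\Sigma_2^{-1}\Sigma_1)\bigr].$$

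\textbf{Eigenvalues of $\Sigma_2^{-1}\Sigma_1$.} The two covariances commute and are simultaneously diagonalizable. Indeed, $\mathbbm{1}_m\mathbbm{1}_m^T$ has eigenvalue $m$ on $\mathbbm{1}_m/\sqrt m$ and $0$ on its orthogonal complement. So $\Sigma_k$ has eigenvalue $1+s^{(k)}m$ with multiplicity $d$, and eigenvalue $1$ with multiplicity $d(m-1)$. Consequently $\Sigma_2^{-1}\Sigma_1$ has eigenvalue $1$ with multiplicity $d(m-1)$, and eigenvalue $1+t$ with multiplicity $d$, where
$$t = \frac{(s+\varepsilon)m - sm}{1+sm} = \frac{\varepsilon m}{1+sm} \le \frac{\varepsilon}{s}\le \varepsilon,$$
using $s\ge 1$. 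Substituting into the closed form, the per-observation KL equals $\tfrac d2\bigl(t-\log(1+t)\bigr)$.

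\textbf{Bounding the scalar term.} For $t\ge 0$ we have $t-\log(1+t)\le t^2/2$. Therefore
$$\mathrm{KL}(\mathbb{P}^{(1)},\mathbb{P}^{(2)}) \le n\,\frac{d\,t^2}{4} \le \frac{d\,n\,\varepsilon^2}{4} = \frac d4,$$
since $\varepsilon=n^{-1/2}$. This is the bound \eqref{eq:KL}.

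The main care point is the eigenvalue computation. It is what makes the bound independent of $m$: the extra information carried by the $m$ within-measure samples saturates, because $t\le\varepsilon/s$ uniformly in $m$. Everything else is standard Gaussian algebra.
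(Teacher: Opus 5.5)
Your proposal is correct and follows essentially the same route as the paper. Both arguments use the Gaussian marginal of $\bs{X}_i^{(k)}$, tensorization over $i$, and the closed-form Gaussian KL reducing to $\frac{d}{2}\bigl(t-\ln(1+t)\bigr)$ with $t=m\varepsilon/(1+ms)$, followed by $t-\ln(1+t)\le t^2/2$ and $s\ge 1$ to remove the dependence on $m$. The only difference is cosmetic: you read off the spectrum of $\Sigma_2^{-1}\Sigma_1$ by simultaneous diagonalization, whereas the paper computes $(K^{(2)})^{-1}$ via the Woodbury identity and then handles the trace and determinant separately.
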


\subsection{Separation of hypothesis}\label{sec:2s_separation}

In this section, we prove that condition \eqref{eq:2s_separation_condition} holds for the KME, LOT and SW embeddings. First, recall that $\mathcal{D}(R)$ denotes the class of random measures $\mu$ supported on $\mathbb{R}^d$  satisfying $\mathbb{E}\|\Phi(\mu)\|^4_{\mathcal{H}}\leq R$.

\subsubsection{KME}

\begin{lemma}\label{lemma:hypothesis_separation_kme}
For $k\in\{1,2\}$, let $\Sigma^{(k)}$ be the covariance operator of the random embedding $\Phi^{\mathrm{KME}}(\bs{\mu}^{(k)})$ for the linear kernel. If $R\geq 4(d^2+2d)$ and if
\begin{equation}\label{eq:s_constraint_kme}
1\leq s \leq \sqrt{\frac{R}{d^2+2d}} -1,
\end{equation}
then, $\bs{\mu}^{(k)} \in \mathcal{D}(R)$.  Furthermore,
\begin{equation}\label{eq:cov_operators_diff_kme}
\|\Sigma^{(1)} - \Sigma^{(2)}\|_{\mathrm{HS}} \geq 2 C^{\mathrm{KME}} n^{-1/2}
\end{equation}
where $C^{\mathrm{KME}} = \frac{\sqrt{d}}{2}$.
\end{lemma}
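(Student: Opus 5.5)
The plan is to identify the embedding explicitly, after which both claims reduce to Gaussian moment computations. For the linear kernel $k(x,y)=\langle x,y\rangle$ on $\mathbb{R}^d$, the RKHS $\mathcal{H}_k$ consists of the linear functionals $x\mapsto\langle w,x\rangle$. It is isometric to $\mathbb{R}^d$ with the Euclidean inner product, since $\langle k(x,\cdot),k(y,\cdot)\rangle_{\mathcal{H}_k}=\langle x,y\rangle$. Hence $\Phi^{\mathrm{KME}}(\mu)=\int k(x,\cdot)\,\mathrm{d}\mu(x)$ is identified with the mean vector of $\mu$. For $\bs{\mu}^{(k)}=\mathcal{N}(\bs{b}^{(k)},I_d)$ this gives $\Phi^{\mathrm{KME}}(\bs{\mu}^{(k)})=\bs{b}^{(k)}\sim\mathcal{N}(0,s^{(k)}I_d)$. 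This embedding is centered, consistent with the standing assumption of Section \ref{sec:background}.

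\textbf{Membership in $\mathcal{D}(R)$.} We have $\|\bs{b}^{(k)}\|^2=s^{(k)}\chi^2_d$ in distribution, and $\mathbb{E}[(\chi^2_d)^2]=d^2+2d$. Therefore
$$\mathbb{E}\|\Phi^{\mathrm{KME}}(\bs{\mu}^{(k)})\|^4_{\mathcal{H}}=(s^{(k)})^2(d^2+2d).$$
Since $\varepsilon=n^{-1/2}\le 1$, we have $s^{(k)}\le s+1$ for both hypotheses. The upper constraint in \eqref{eq:s_constraint_kme} then gives $(s+1)^2(d^2+2d)\le R$, so both $\bs{\mu}^{(1)}$ and $\bs{\mu}^{(2)}$ belong to $\mathcal{D}(R)$. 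It remains to check that an admissible $s$ exists, i.e. that the interval in \eqref{eq:s_constraint_kme} is nonempty. This requires $\sqrt{R/(d^2+2d)}-1\ge 1$, which is exactly $R\ge 4(d^2+2d)$. This accounts for the hypothesis on $R$.

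\textbf{Separation.} The covariance operators are $\Sigma^{(k)}=\mathbb{E}[\bs{b}^{(k)}\otimes\bs{b}^{(k)}]=s^{(k)}I_d$. Hence $\Sigma^{(1)}-\Sigma^{(2)}=\varepsilon I_d$, and
$$\|\Sigma^{(1)}-\Sigma^{(2)}\|_{\mathrm{HS}}=\varepsilon\sqrt{d}=\sqrt{d}\,n^{-1/2}=2C^{\mathrm{KME}}n^{-1/2}$$
with $C^{\mathrm{KME}}=\sqrt{d}/2$. This is \eqref{eq:cov_operators_diff_kme}, holding with equality.

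The only step needing care is the RKHS identification: one should verify the isometry and that the Bochner integral defining the KME is well defined here. The linear kernel is unbounded, but Gaussian measures have finite first moments, so the integral exists. Everything else is the standard moment formula for $\chi^2_d$.
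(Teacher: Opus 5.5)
Your proof is correct and follows the paper's argument. The paper likewise writes $\Phi^{\mathrm{KME}}(\bs{\mu}^{(k)})$ as $x\mapsto x^T\bs{b}^{(k)}$, obtains the fourth moment $(d^2+2d)(s^{(k)})^2$ from the $\chi^2_d$ moment, and expresses $\Sigma^{(k)}=s^{(k)}\sum_{i=1}^d f_i\otimes f_i$ in the orthonormal basis $f_i(x)=x_i$ of $\mathcal{H}_k$ (your isometry with $\mathbb{R}^d$), which gives $\|\Sigma^{(1)}-\Sigma^{(2)}\|_{\mathrm{HS}}^2=d\,n^{-1}$; your bound $s^{(k)}\le s+1$ via $\varepsilon\le 1$ is stated more cleanly than the paper's $(s^{(1)}+\varepsilon)^2$ step.
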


\subsubsection{LOT embedding}

\begin{lemma}\label{lemma:hypothesis_separation_lot}
For $k\in\{1,2\}$, let $\Sigma^{(k)}$ be the covariance operator of the random embedding $\Phi^{\mathrm{LOT}}(\bs{\mu}^{(k)})$. If $R\geq 4(d^2+2d)$ and if
\begin{equation}\label{eq:s_constraint_lot}
1\leq s \leq \sqrt{\frac{R}{d^2+2d}} -1,
\end{equation}
then, $\bs{\mu}^{(k)} \in \mathcal{D}(R)$.  Furthermore,
\begin{equation}\label{eq:cov_operators_diff_lot}
\|\Sigma^{(1)} - \Sigma^{(2)}\|_{\mathrm{HS}} \geq 2 C^{\mathrm{LOT}}n^{-1/2},
\end{equation}
where $C^{\mathrm{LOT}} = \frac{\sqrt{d}}{2}$.
\end{lemma}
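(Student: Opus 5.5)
The plan is to choose the reference measure so that the LOT embedding of $\bs{\mu}^{(k)} = \mathcal{N}(\bs{b}^{(k)}, I_d)$ is explicit and linear in $\bs{b}^{(k)}$. Then both the moment condition and the Hilbert--Schmidt separation reduce to Gaussian moment computations, exactly as for the linear-kernel KME in Lemma \ref{lemma:hypothesis_separation_kme}.

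\textbf{Reference measure and embedding.} I would take $\rho = \mathcal{N}(0, I_d)$. For a fixed $b\in\mathbb{R}^d$, the translation $T(x) = x + b$ is the gradient of the convex function $x\mapsto \|x\|^2/2 + \langle b, x\rangle$ and pushes $\rho$ onto $\mathcal{N}(b, I_d)$. By Brenier's theorem it is therefore the unique Monge map. Hence
$$\Phi^{\mathrm{LOT}}(\bs{\mu}^{(k)}) = T_{\bs{\mu}^{(k)}} - \mathrm{Id} = \bs{b}^{(k)},$$
viewed as a constant vector field in $L^2(\rho;\mathbb{R}^d)$. Let $(u_l)_{1\le l\le d}$ be the canonical basis of $\mathbb{R}^d$ and let $e_l \in L^2(\rho;\mathbb{R}^d)$ be the constant field $x\mapsto u_l$. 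Since $\rho$ is a probability measure, the $e_l$ are orthonormal, and $\Phi^{\mathrm{LOT}}(\bs{\mu}^{(k)}) = \sum_{l=1}^d \bs{b}^{(k)}_l e_l$. The embedding is centered, consistent with the standing assumption.

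\textbf{Membership in $\mathcal{D}(R)$.} From the expression above, $\|\Phi^{\mathrm{LOT}}(\bs{\mu}^{(k)})\|^2_{L^2(\rho)} = \|\bs{b}^{(k)}\|^2$. Since $\bs{b}^{(k)}\sim\mathcal{N}(0,s^{(k)}I_d)$, we have $\|\bs{b}^{(k)}\|^2 = s^{(k)}\chi^2_d$, and therefore
$$\mathbb{E}\|\Phi^{\mathrm{LOT}}(\bs{\mu}^{(k)})\|^4 = (s^{(k)})^2 (d^2+2d).$$
Because $\varepsilon = n^{-1/2}\le 1$, we have $s^{(k)} \le s+1$. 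The constraint \eqref{eq:s_constraint_lot} gives $(s+1)^2(d^2+2d)\le R$, so $\bs{\mu}^{(k)}\in\mathcal{D}(R)$ for $k\in\{1,2\}$. The assumption $R\ge 4(d^2+2d)$ guarantees $\sqrt{R/(d^2+2d)}-1\ge 1$, so the admissible range for $s$ is nonempty.

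\textbf{Separation.} By independence of the coordinates of $\bs{b}^{(k)}$,
$$\Sigma^{(k)} = \sum_{l,l'}\mathbb{E}\bigl[\bs{b}^{(k)}_l\bs{b}^{(k)}_{l'}\bigr]\, e_l\otimes e_{l'} = s^{(k)}\sum_{l=1}^d e_l\otimes e_l,$$
which is $s^{(k)}$ times the orthogonal projector onto $\mathrm{span}(e_1,\dots,e_d)$. Hence $\Sigma^{(1)}-\Sigma^{(2)} = \varepsilon \sum_l e_l\otimes e_l$. Its HS norm is $\varepsilon\sqrt d = \sqrt d\, n^{-1/2}$, which equals $2C^{\mathrm{LOT}}n^{-1/2}$ with $C^{\mathrm{LOT}}=\sqrt d/2$. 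This is \eqref{eq:cov_operators_diff_lot}.

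\textbf{Main obstacle.} The only delicate point is the choice of the reference measure $\rho$. For a general absolutely continuous $\rho$, translation equivariance still gives $T_{\mathcal{N}(b,I_d)} = T_0 + b$, where $T_0$ is the Monge map from $\rho$ to $\mathcal{N}(0,I_d)$. The embedding then carries the deterministic offset $T_0-\mathrm{Id}$, which breaks centering and adds cross terms to the moment bound. Fixing $\rho=\mathcal{N}(0,I_d)$ removes this offset; the lemma leaves $\rho$ free, and the standard Gaussian is the natural choice consistent with the experiments.
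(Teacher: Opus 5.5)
Your proposal is correct and follows essentially the same route as the paper: fix $\rho=\mathcal{N}(0,I_d)$ so that $\Phi^{\mathrm{LOT}}(\bs{\mu}^{(k)})=\bs{b}^{(k)}$ is a constant field, compute $\mathbb{E}\|\bs{b}^{(k)}\|^4=(d^2+2d)(s^{(k)})^2$ from the $\chi^2_d$ moments, and write $\Sigma^{(k)}=s^{(k)}\sum_{i=1}^d f_i\otimes f_i$ with orthonormal constant fields, which gives $\|\Sigma^{(1)}-\Sigma^{(2)}\|_{\mathrm{HS}}=\sqrt{d}\,n^{-1/2}$. The differences are cosmetic: you justify the Monge map through its convex Brenier potential rather than citing the closed-form Gaussian OT formula, and your bound $s^{(k)}\le s+1$ states cleanly what the paper's chain (written with $s^{(1)}$ in place of $s$) intends.
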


\subsubsection{Sliced-Wasserstein}

\begin{lemma}\label{lemma:hypothesis_separation_sw}
For $k\in\{1,2\}$, let $\Sigma^{(k)}$ be the covariance operator of the random embedding $\Phi^{\mathrm{SW}}(\bs{\mu}^{(k)})$. If $R\geq 17$ and if 
\begin{equation}\label{eq:s_constraint_sw}
1\leq s\leq \frac{-8 + \sqrt{12R - 8}}{6}.
\end{equation}
then, $\bs{\mu}^{(k)} \in \mathcal{D}(R)$.  Furthermore,
\begin{equation}\label{eq:cov_operators_diff_sw}
\|\Sigma^{(1)} - \Sigma^{(2)}\|_{\mathrm{HS}} \geq 2 C^{\mathrm{SW}} n^{-1/2},
\end{equation}
where $C^{\mathrm{SW}} = \frac{1}{2\sqrt{d}}$.
\end{lemma}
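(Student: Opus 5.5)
The approach is to compute everything in closed form, using that for $\bs{\mu}^{(k)} = \mathcal{N}(\bs{b}^{(k)}, I_d)$ every one-dimensional projection is again Gaussian with unit variance. For $\theta \in \mathbb{S}^{d-1}$ we have $P^\theta_{\#}\bs{\mu}^{(k)} = \mathcal{N}(\langle \theta, \bs{b}^{(k)}\rangle, 1)$. Writing $Q$ for the standard normal quantile function, the embedding is therefore
$$\Phi^{\mathrm{SW}}(\bs{\mu}^{(k)})(t,\theta) = \langle \theta, \bs{b}^{(k)}\rangle + Q(t).$$
It is the sum of a random term that is linear in $\theta$ and a deterministic term depending only on $t$. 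Both claims of the lemma then reduce to computing moments of the Gaussian vector $\bs{b}^{(k)} \sim \mathcal{N}(0, s^{(k)} I_d)$. The only spherical facts needed are
$$\int_{\mathbb{S}^{d-1}} \langle\theta,x\rangle^2\,\mathrm{d}\sigma(\theta) = \frac{\|x\|^2}{d}, \qquad \int_{\mathbb{S}^{d-1}}\langle \theta, x\rangle\,\mathrm{d}\sigma(\theta)=0.$$
The only facts about $Q$ needed are $\int_0^1 Q = 0$ and $\int_0^1 Q^2 = 1$.

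\textbf{Membership in $\mathcal{D}(R)$.} First I expand the squared norm:
$$\|\Phi^{\mathrm{SW}}(\bs{\mu}^{(k)})\|^2 = \frac{\|\bs{b}^{(k)}\|^2}{d} + 1,$$
since the cross term vanishes. Using $\mathbb{E}\|\bs{b}\|^2 = s d$ and $\mathbb{E}\|\bs{b}\|^4 = s^2(d^2+2d)$ for $\bs{b}\sim\mathcal{N}(0,sI_d)$, this gives
$$\mathbb{E}\|\Phi^{\mathrm{SW}}(\bs{\mu}^{(k)})\|^4 = s_k^2\Bigl(1+\frac{2}{d}\Bigr) + 2s_k + 1 \le 3s_k^2 + 2s_k + 1, \qquad s_k := s^{(k)}.$$
The right-hand side is increasing in $s_k$, and $s^{(1)} = s + n^{-1/2} \le s+1$. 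Hence both hypotheses lie in $\mathcal{D}(R)$ as soon as
$$3(s+1)^2 + 2(s+1) + 1 = 3s^2+8s+6 \le R.$$
Solving this quadratic inequality in $s$ gives exactly the constraint \eqref{eq:s_constraint_sw}. The requirement $R\ge 17 = 3+8+6$ is precisely what makes the range of $s$ nonempty, i.e.\ allows $s=1$.

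\textbf{Separation.} The covariance operator $\Sigma^{(k)}$ is the integral operator on $L^2([0,1]\times\mathbb{S}^{d-1})$ with kernel
$$K^{(k)}(t,\theta;t',\theta') = \mathbb{E}\bigl[(\langle\theta,\bs{b}\rangle + Q(t))(\langle\theta',\bs{b}\rangle+Q(t'))\bigr] = s^{(k)}\langle\theta,\theta'\rangle + Q(t)Q(t'),$$
because $\bs{b}$ is centered. The $Q(t)Q(t')$ part does not depend on $k$ and cancels in the difference, so $\Sigma^{(1)}-\Sigma^{(2)}$ has kernel $\varepsilon\langle\theta,\theta'\rangle$ with $\varepsilon = n^{-1/2}$. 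Its HS norm is the $L^2$ norm of this kernel:
$$\|\Sigma^{(1)}-\Sigma^{(2)}\|_{\mathrm{HS}}^2 = \varepsilon^2 \iint \langle\theta,\theta'\rangle^2\,\mathrm{d}\sigma(\theta)\,\mathrm{d}\sigma(\theta') = \frac{\varepsilon^2}{d},$$
since the $t,t'$ integrals each contribute $1$. This gives $\|\Sigma^{(1)}-\Sigma^{(2)}\|_{\mathrm{HS}} = n^{-1/2}/\sqrt{d} = 2C^{\mathrm{SW}} n^{-1/2}$ with $C^{\mathrm{SW}} = 1/(2\sqrt d)$.

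\textbf{Main obstacle.} There is no deep difficulty here; the care lies in justifying the kernel representation of $\Sigma^{(k)}$ and its HS norm. I would do this via Fubini, which is licensed by the finite fourth moment just established, together with the standard identification of HS integral operators with $L^2$ kernels from Appendix \ref{app:operators}. I would also note that the embedding is not centered here: the deterministic $Q(t)$ part is harmless because $\Sigma$ is defined as an uncentered second moment and that part cancels in the difference.
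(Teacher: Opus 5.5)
Your proof is correct and follows the paper's argument essentially step for step. You use the same closed-form quantile embedding and the same fourth-moment formula $(1+2/d)(s^{(k)})^2+2s^{(k)}+1$. You bound it the same way, via $1+2/d\le 3$ and $s^{(1)}\le s+1$, to reach $3s^2+8s+6\le R$, and you get the same value $\varepsilon^2/d$ for the squared HS distance. The only cosmetic difference is in the separation step: you take the $L^2$ norm of the covariance kernel $\varepsilon\langle\theta,\theta'\rangle$, whereas the paper expands in the orthonormal eigenfunctions $f_i(t,\theta)=\sqrt{d}\,\theta_i$ of Proposition \ref{prop:sw_eigfunc}. You also treat the uncentered second moment explicitly, noting that the $Q(t)Q(t')$ term cancels; that is slightly more careful than relying on the centered decomposition in that proposition.
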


\subsection{Conclusion of the proof of Theorem \ref{th:minimax}}

Combining Lemma \ref{lemma:proba_model} with Lemmas \ref{lemma:hypothesis_separation_kme}, \ref{lemma:hypothesis_separation_lot} and \ref{lemma:hypothesis_separation_sw}, and choosing $\alpha = d/4$ in \eqref{eq:kl_bounded}, we obtain that for each of the three embeddings considered, choosing $s$ accordingly to respectively \eqref{eq:s_constraint_kme}, \eqref{eq:s_constraint_lot} and \eqref{eq:s_constraint_sw} ensures that the two hypothesis constructed satisfy both \eqref{eq:2s_separation_condition} and \eqref{eq:kl_bounded}. Therefore, for each of the three embeddings, we have:
\begin{equation} \label{eq:valueC}
\inf\limits_{\hat{\Sigma}} \sup\limits_{\mu \in \mathcal{D}(R)} \mathbb{E}\Bigl[ n^{1/2} ~\|\Sigma_{\mu} - \hat{\Sigma}\|_{\mathrm{HS}}\Bigr] \geq C\max \Bigl(\frac{e^{-d/4}}{4}, \frac{1-\sqrt{d/8}}{2} \Bigr),
\end{equation}
where $C = C^{\mathrm{KME}}$ for the KME embedding, $C = C^{\mathrm{LOT}}$ for the LOT embedding and $C = C^{\mathrm{SW}}$ for the SW embedding. This concludes the proof of Theorem \ref{th:minimax}.

\subsection{Proof of Lemma \ref{lemma:proba_model}}\label{sec:proba_model_proof}

\begin{proof}[Proof of Lemma \ref{lemma:proba_model}]

By construction, we have that $\bs{X}_{i}^{(k)} |~ \bs{b}_i^{(k)} \sim \mathcal{N}(\mathbbm{1}_m\otimes \bs{b}_i^{(k)},~  I_{dm})$. As $\bs{b}_i^{(k)}$ and $\bs{Z}_i^{(k)}$ are Gaussian vectors and independent, and as $\bs{X}_i^{(k)}$ is a linear function of that pair, $\bs{X}_i^{(k)}$ is Gaussian in $\mathbb{R}^{dm}$. We can then deduce that $\bs{X}_i^{(k)}$ is also a Gaussian vectors with:
\begin{align*}
\mathbb{E}[\bs{X}_i^{(k)}] &= \mathbb{E}\Bigl[\mathbb{E}\bigl[\bs{X}_i^{(k)} |~ \bs{b}_i^{(k)}\bigr]\Bigr]\\
 &= \mathbb{E}\bigl[\mathbbm{1}_m \otimes \bs{b}_i^{(k)}\bigr]\\
 & = \mathbbm{1}_m \otimes \mathbb{E}\bigl[\bs{b}_i^{(k)}\bigr] = 0
\end{align*}
and by the law of total variance, see \cite{blitzstein2019introduction}[Theorem 9.5.5],
\begin{align*}
\mathrm{Cov}(\bs{X}_i^{(k)}) &= \mathbb{E}\bigl[\mathrm{Cov}(\bs{X}_i^{(k)} |~ \bs{b}_i^{(k)})\bigr] + \mathrm{Cov}\bigl(\mathbb{E}[\bs{X}_i^{(k)}|~ \bs{b}_i^{(k)}]\bigr)\\
&= \mathbb{E}[ I_{dm}] + \mathrm{Cov}(\mathbbm{1}_m\otimes \bs{b}_i^{(k)})\\
&=   I_{dm} + \mathbbm{1}_m \mathbbm{1}_m^T\otimes \mathrm{Cov}(\bs{b}_i^{(k)})\\
&=  I_{dm} + s^{(k)} \mathbbm{1}_m\mathbbm{1}_m^T \otimes I_d.
\end{align*}

Therefore, the probability measure $\mathbb{P}^{(k)}$ of the data vector $(\bs{X}_1^{(k)}, \ldots, \bs{X}_n^{(k)})$ in model \eqref{eq:model_hk} under hypothesis $H_k$ is the product of $n$ independent Gaussian measures $\mathbb{P}_i^{(k)}= \mathcal{N}\Bigl( 0,~  I_{dm} + s^{(k)} \mathbbm{1}_m\mathbbm{1}_m^T \otimes I_d\Bigr), 1\leq i\leq n$. The Kullback Leibler divergence between two Gaussians in $\mathbb{R}^p$ can be written as follows:
\begin{equation}\label{eq:KL_gaussians}
\mathrm{KL}\bigl(\mathcal{N}(\eta^{(1)}, K^{(1)}), \mathcal{N}(\eta^{(2)}, K^{(2)})\bigr) = \frac{1}{2}\Bigl((\eta^{(2)} - \eta^{(1)})^T (K^{(2)})^{-1}(\eta^{(2)}-\eta^{(1)}) + \mathrm{Tr}\bigl((K^{(2)})^{-1}K^{(1)}\bigr) - \ln\frac{|K^{(1)}|}{|K^{(2)}|} -p \Bigr)    
\end{equation}

In our case, the dimension is $p=dm$, the means are $0_{ dm}$  and $K^{(k)} = I_{dm} + s^{(k)} \mathbbm{1}_m\mathbbm{1}_m^T \otimes I_d$. To compute the inverse of $K^{(k)}$, we start by writing:
\begin{align*}
    K^{(k)} &=  I_{dm} + s^{(k)} (\mathbbm{1}_m \otimes I_d)(\mathbbm{1}_m^T \otimes I_d)\\
    &=  I_{dm} +  (\mathbbm{1}_m \otimes I_d)s^{(k)} I_d (\mathbbm{1}_m^T \otimes I_d).
\end{align*}

The Woodbury matrix identity tells us that:
$$
(A+UCV)^{-1} = A^{-1} - A^{-1}U(C^{-1} + VA^{-1}U)^{-1}VA^{-1}.
$$
Applying this identity to $A =  I_{dm}$, $U = \mathbbm{1}_m \otimes I_d$, $C = s^{(k)} I_d$ and $V = \mathbbm{1}_m^T \otimes I_d $ yields:
\begin{align*}
    (K^{(k)})^{-1} &=  I_{dm} - I_{dm} (\mathbbm{1}_m\otimes I_d)\Bigl(\frac{1}{s^{(k)}} I_d + (\mathbbm{1}_m^T\otimes I_d)I_{dm}(\mathbbm{1}_m\otimes I_d) \Bigr)^{-1} (\mathbbm{1}_m^T \otimes I_d)  I_{dm}\\
    &=  I_{dm} - (\mathbbm{1}_m \otimes I_d)\Bigl(\frac{1}{s^{(k)}} I_d + (\mathbbm{1}_m^T \mathbbm{1}_m \otimes I_d) \Bigr)^{-1} (\mathbbm{1}_m^T \otimes I_d)\\
    &=  I_{dm} - (\mathbbm{1}_m \otimes I_d)\Bigl(\frac{1}{s^{(k)}} + m \Bigr)^{-1} (\mathbbm{1}_m^T \otimes I_d)\\
    &= I_{dm} -\frac{s^{(k)}}{1+ms^{(k)}}(\mathbbm{1}_m \mathbbm{1}_m^T \otimes I_d)
\end{align*}
Then the product gives: 

\begin{align*}
(K^{(2)})^{-1}K^{(1)} &= \biggl(  I_{dm} -\frac{s^{(2)}}{1+ms^{(2)}}(\mathbbm{1}_m \mathbbm{1}_m^T \otimes I_d)\biggr) \biggl( I_{dm} + s^{(1)} \mathbbm{1}_m\mathbbm{1}_m^T \otimes I_d \biggr)\\
&= I_{dm} + s^{(1)} \mathbbm{1}_m\mathbbm{1}_m^T \otimes I_d  - \frac{s^{(2)}}{1+ms^{(2)}}(\mathbbm{1}_m \mathbbm{1}_m^T \otimes I_d) - \frac{s^{(1)} s^{(2)}}{1+ms^{(2)}}(\mathbbm{1}_m \mathbbm{1}_m^T \otimes I_d)(\mathbbm{1}_m \mathbbm{1}_m^T \otimes I_d)\\
&= I_{dm} + s^{(1)} \mathbbm{1}_m\mathbbm{1}_m^T \otimes I_d  - \frac{s^{(2)}}{1+ms^{(2)}}(\mathbbm{1}_m \mathbbm{1}_m^T \otimes I_d) - \frac{s^{(1)} s^{(2)}}{1+ms^{(2)}}(\mathbbm{1}_m \mathbbm{1}_m^T \mathbbm{1}_m \mathbbm{1}_m^T \otimes I_d)\\
&= I_{dm} + s^{(1)} \mathbbm{1}_m\mathbbm{1}_m^T \otimes I_d  - \frac{s^{(2)}}{1+ms^{(2)}}(\mathbbm{1}_m \mathbbm{1}_m^T \otimes I_d) - \frac{m s^{(1)} s^{(2)}}{1+ms^{(2)}}(\mathbbm{1}_m \mathbbm{1}_m^T \otimes I_d)\\
&= I_{dm} + \biggl(s^{(1)} - \frac{s^{(2)}}{1 + ms^{(2)}} -  \frac{m s^{(1)} s^{(2)}}{1+ms^{(2)}}\biggr) (\mathbbm{1}_m\mathbbm{1}_m^T \otimes I_d)\\
&= I_{dm} + \frac{s^{(1)} - s^{(2)}}{1+ms^{(2)}}(\mathbbm{1}_m\mathbbm{1}_m^T \otimes I_d).\\
\end{align*}
Then we can compute the trace of this product:
\begin{align*}
\mathrm{Tr} \Bigl((K^{(2)})^{-1}K^{(1)} \Bigr) &= \mathrm{Tr} \Bigl(I_{dm} \Bigr) + \frac{s^{(1)} - s^{(2)}}{1+ms^{(2)}} \mathrm{Tr}\Bigl( \mathbbm{1}_m\mathbbm{1}_m^T \otimes I_d\Bigr)\\
&= dm + \frac{s^{(1)} - s^{(2)}}{1+ms^{(2)}} \mathrm{Tr} \Bigl(\mathbbm{1}_m\mathbbm{1}_m^T \Bigr)\mathrm{Tr}\Bigl(I_d \Bigr)\\
&= dm + \frac{s^{(1)} - s^{(2)}}{1+ms^{(2)}} dm\\
&= d\biggl(m-1 + 1 +\frac{ms^{(1)}- ms^{(2)}}{1 + ms^{(2)}} \biggr)\\
&= d\biggl(m-1 +\frac{1 + ms^{(1)}}{1 + ms^{(2)}} \biggr)
\end{align*}
where we used that for two matrices $A,B,~ \mathrm{Tr(A\otimes B) = \mathrm{Tr}(A) \mathrm{Tr}(B)}$. We now need to determine the determinant $\bigl|K^{(k)} \bigr|$. Using that for two square matrices $A$ and $B$ of respective size $n$ and $m$, $|A\otimes B| = |A|^m \cdot |B|^n$, we have that:
$$
\bigl|K^{(k)} \bigr| = \bigl|( I_m + s^{(k)} \mathbbm{1}_m\mathbbm{1}_m^T)\otimes I_d\bigr| = \bigl|  I_m + s^{(k)} \mathbbm{1}_m\mathbbm{1}_m^T\bigr|^d \cdot 1^m.
$$

The matrix $ I_m + s^{(k)} \mathbbm{1}_m\mathbbm{1}_m^T$ has eigenvalues $1$ with multiplicity $m-1$ and $1 + ms^{(k)}$ with multiplicity $1$. This gives:
\begin{align*}
    \ln\left(\frac{|K^{(1)}|}{|K^{(2)}|} \right) &= \ln\biggl(\frac{1^{d(m-1)}(1 + ms^{(1)})^d}{1^{d(m-1)}(1 +ms^{(2)})^d}\biggr)\\
    &= d \ln\biggl(\frac{1 +ms^{(1)}}{1 +ms^{(2)}}\biggr)
\end{align*}
Putting the terms together, and recalling that $s^{(2)} = s$ and $s^{(1)} = s + \varepsilon$ for some $s\geq 1$, we obtain:
\begin{align*}
\mathrm{KL}(\mathbb{P}_i^{(1)}, \mathbb{P}_i^{(2)}) &= \frac{d}{2} \biggl(m-1 + \frac{1 +ms^{(1)}}{1 +ms^{(2)}} - \ln\Bigl(\frac{1 +ms^{(1)}}{1 +ms^{(2)}}\Bigr) -m \biggr) \\
&= \frac{d}{2} \biggl( \frac{1 +ms^{(1)}}{1 +ms^{(2)}} - \ln\Bigl(\frac{1 +ms^{(1)}}{1 +ms^{(2)}}\Bigr) - 1 \biggr)\\
&= \frac{d}{2} \biggl( \frac{m\varepsilon}{1 +ms} - \ln\Bigl(1 + \frac{ m\varepsilon }{1 +ms}\Bigr) \biggr)\\
\end{align*}
We notice the following inequality:
$$
\forall x\geq 0, \quad x-\ln(1+x) = \int_0^x \frac{t}{t+1}\mathrm{d}t \leq \int_0^x t\mathrm{d}t = \frac{x^2}{2} 
$$
Then:
$$
\mathrm{KL}(\mathbb{P}_i^{(1)}, \mathbb{P}_i^{(2)}) \leq \frac{d}{4}\frac{m^2}{(1 + ms)^2} \varepsilon^2 \leq \frac{d}{4} \varepsilon^2, 
$$
The inequality $\frac{m^2}{(1 + ms)^2} \leq 1$ is verified as we assumed $s\geq 1$. Finally, using that $\varepsilon = n^{-1/2}$,

$$
\mathrm{KL}(\mathbb{P}^{(1)}, \mathbb{P}^{(2)}) = \sum\limits_{i=1}^n \mathrm{KL}(\mathbb{P}_i^{(1)}, \mathbb{P}_i^{(2)}) 
\leq \frac{nd}{4} \varepsilon^2 = \frac{d}{4}.
$$

\end{proof}

\subsection{Proofs of Section \ref{sec:2s_separation}}\label{sec:2s_separation_proofs}

\begin{proof}[Proof of Lemma \ref{lemma:hypothesis_separation_kme}]

The interval $[1, \sqrt{\frac{R}{d^2+2d}}-1]$ is non-empty when $R\geq 4(d^2+2d)$. The KME of the probability measure $\bs{\mu}^{(k)} = \mathcal{N}(\bs{b}^{(k)}, I_d )$ is:
$$
\forall x\in\mathbb{R}^d, \qquad \Phi^{\mathrm{KME}}(\bs{\mu}^{(k)})(x) = \int_{\mathbb{R}^d} x^Ty\mathrm{d}\bs{\mu}^{(k)}(y) = x^T \bs{b}^{(k)}
$$
We have that:
$$
\mathbb{E}\|\Phi^{\mathrm{KME}}(\bs{\mu}^{(k)})\|^4_{L^2(\rho)} = \mathbb{E} \Bigl\langle k(\cdot, \bs{b}^{(k)}), k(\cdot,\bs{b}^{(k)})\Bigr\rangle_{\mathcal{H}}^2 = \mathbb{E}  \bigl[k(\bs{b}^{(k)}, \bs{b}^{(k)})^2\bigr] =\mathbb{E}\|\bs{b}^{(k)}\|^4
$$
 Since $\bs{b}^{(k)} = \sqrt{s^{(k)}} Z$ with $Z \sim \mathcal{N}(0, I_d)$, we have $\|b^{(k)}\|^2 = s^{(k)} \|Z\|^2$. The random variable $\|Z\|^2 \sim \chi^2_d$ follows a chi-squared distribution with $d$ degrees of freedom. Using the fourth moment, we have that $\mathbb{E}\|Z\|^4 = d(d+2)$. Therefore,
$$
\mathbb{E}\|\Phi^{\mathrm{KME}}(\bs{\mu}^{(k)})\|^4_{L^2(\rho)} = (d^2+2d)(s^{(k)})^2  \leq (d^2+2d)(s^{(1)}+ \varepsilon )^2 \leq (d^2+2d)(s^{(1)} + 1)^2.
$$
Therefore, thanks to inequality \eqref{eq:s_constraint_kme}, for $k\in \{1,2\}$, $\mathbb{E}\|\Phi^{\mathrm{KME}}(\bs{\mu}^{(k)})\|^4_{L^2(\rho)}\leq R$ and hence $\bs{\mu}^{(k)} \in \mathcal{D}(R)$.

Using Proposition \ref{prop:kme_eigfunc}, the covariance operator $\Sigma^{(k)}$ admits the following eigendecomposition:
$$
\Sigma^{(k)} = \mathrm{Var}(\bs{b}^{(k)}_1) \sum\limits_{i=1}^d f_i \otimes f_i,
$$
with $f_i(x) = x^Te_i$. Then, we have that:
\begin{align*}
\|\Sigma^{(1)} - \Sigma^{(2)}\|^2_{\mathrm{HS}} &= (s^{(1)} - s^{(2)})^2 \bigl\|\sum\limits_{i=1}^d f_i \otimes f_i \bigr\|^2_{\mathrm{HS}}\\
&= (s^{(1)} - s^{(2)})^2 \sum\limits_{i=1}^d \| f_i \otimes f_i \|^2_{\mathrm{HS}}\\
&= (s^{(1)} - s^{(2)})^2 \sum\limits_{i=1}^d \| f_i  \|^4_{\mathcal{H}}\\
&= (s^{(1)} - s^{(2)})^2d\\
&= d n^{-1}.
\end{align*}
Taking $C = C^{\mathrm{KME}} = \frac{\sqrt{d}}{2}$ concludes the proof.
\end{proof}

\begin{proof}[Proof of Lemma \ref{lemma:hypothesis_separation_lot}]
The interval $[1, \sqrt{\frac{R}{d^2+2d}}-1]$ is non empty when $4(d^2+2d)\leq R$. Using the well-known formula for optimal transport between Gaussians $\rho = \mathcal{N}(0, I_d)$ and $\bs{\mu}^{(k)} = \mathcal{N}(\bs{b}^{(k)}, I_d)$ (see for instance Equation 2.40 in \cite{peyre2019computational}), we have that the LOT embedding of $\bs{\mu}^{(k)}$ is given by:

$$
\forall x\in\mathbb{R}^d, \qquad \Phi^{\mathrm{LOT}}(\bs{\mu}^{(k)})(x) = (I_d - I_d)x + \bs{b}^{(k)} = \bs{b}^{(k)}.
$$
We need to check that for $k\in\{1,2\}, ~\bs{\mu}^{(k)} \in \mathcal{D}(R)$. We have that 
$$
\mathbb{E}\|\Phi^{\mathrm{LOT}}(\bs{\mu}^{(k)})\|^4_{L^2(\rho)} = \mathbb{E}\Bigl(\int_{\mathbb{R}^d} \|\bs{b}^{(k)}\|^2\mathrm{d}\rho(x)\Bigr)^2\\
= \mathbb{E}\|\bs{b}^{(k)}\|^4.
$$

 Using the same argument as in the proof of Lemma \ref{lemma:hypothesis_separation_kme}, we have that $\mathbb{E}\|\bs{b}^{(k)}\|^4 = (d^2+2d)(s^{(k)})^2$. Therefore,

$$
\mathbb{E}\|\Phi^{\mathrm{LOT}}(\bs{\mu}^{(k)})\|^4_{L^2(\rho)} = (d^2+2d)(s^{(k)})^2  \leq (d^2+2d)(s^{(1)}+ \varepsilon )^2 \leq (d^2+2d)(s^{(1)} + 1)^2.
$$
Therefore, thanks to inequality \eqref{eq:s_constraint_lot}, for $k\in \{1,2\}$, $\mathbb{E}\|\Phi^{\mathrm{LOT}}(\bs{\mu}^{(k)})\|^4_{L^2(\rho)}\leq R$ and hence $\bs{\mu}^{(k)} \in \mathcal{D}(R)$.

Using Proposition \ref{prop:lot_eigfunc}, the covariance operator $\Sigma^{(k)}$ admits the following eigendecomposition:
$$
    \Sigma^{(k)} = \mathrm{Var}(\bs{b}^{(k)}) \sum\limits_{i=1}^d f_i \otimes f_i = s^{(k)} \sum\limits_{i=1}^d f_i \otimes f_i
$$
with $f_i(x) = e_i$. Then, we have that:
\begin{align*}
\|\Sigma^{(1)} - \Sigma^{(2)} \|_{\mathrm{HS}}^2 &= \Bigl\|(s^{(1)} - s^{(2)}) \sum\limits_{i=1}^d f_i \otimes f_i\Bigr\|_{\mathrm{HS}}^2\\
&= (s^{(1)} -s^{(2)})^2 \sum\limits_{i=1}^d  \bigl\|f_i \otimes f_i\bigr\|_{\mathrm{HS}}^2\\
&= (s^{(1)} -s^{(2)})^2 \sum\limits_{i=1}^d  \bigl\|f_i\bigr\|_{L^2(\rho)}^4\\
&= (s^{(1)} -s^{(2)})^2d\\
&= \varepsilon^2d\\
&= d n^{-1}.\\
\end{align*}
Taking $C = C^{\mathrm{LOT}} = \frac{\sqrt{d}}{2}$ concludes the proof.
\end{proof}

\begin{proof}[ Proof of Lemma \ref{lemma:hypothesis_separation_sw}]
The Sliced--Wasserstein embedding of \(\bs\mu^{(k)}\) is the function
$$
\forall t\in[0,1], \forall \theta\in\mathbb S^{d-1}, \qquad \Phi^{\mathrm{SW}}(\bs\mu^{(k)})(t,\theta)
= \theta^\top \bs b^{(k)} + \sqrt{2}\,\mathrm{erf}^{-1}(2t-1),
$$
The squared norm of this embedding is given by:
\[
\|\Phi^{\mathrm{SW}}(\bs\mu^{(k)})\|_{L^2([0,1]\times \mathbb{S}^{d-1})}^2
= \int_{\mathbb S^{d-1}}\int_0^1 \big(\theta^\top \bs b^{(k)} + \sqrt{2}\,\mathrm{erf}^{-1}(2t-1)\big)^2\,\mathrm{d}t\,\mathrm{d}\sigma(\theta).
\]
Expanding and using that \(\int_0^1 \mathrm{erf}^{-1}(2t-1)\,\mathrm{d}t=0\) and \(\int_0^1\big(\sqrt{2}\,\mathrm{erf}^{-1}(2t-1)\big)^2\mathrm{d}t
=2\cdot\frac{1}{2}=1\) (the latter follows because \(\sqrt{2}\,\mathrm{erf}^{-1}(2t-1)\) is a standard normal quantile), and that
\(\int_{\mathbb S^{d-1}} \theta\theta^\top \,\mathrm{d}\sigma(\theta)=\tfrac{1}{d}I_d\),
we obtain
\[
\|\Phi^{\mathrm{SW}}(\bs\mu^{(k)})\|_{L^2([0,1]\times \mathbb{S}^{d-1})}^2
= \int_{\mathbb S^{d-1}} (\bs b^{(k)})^\top \theta\theta^\top \bs b^{(k)}\,\mathrm{d}\sigma(\theta) + 1
= \frac{\|\bs b^{(k)}\|^2}{d} + 1.
\]
Hence
\[
\big\|\Phi^{\mathrm{SW}}(\bs\mu^{(k)})\big\|_{L^2}^4
= \frac{\|\bs b^{(k)}\|^4}{d^2} + \frac{2\|\bs b^{(k)}\|^2}{d} + 1,
\]
and taking expectation over \(\bs b^{(k)}\) yields
\[
\mathbb E\big\|\Phi^{\mathrm{SW}}(\bs\mu^{(k)})\big\|_{L^2}^4
= \frac{\mathbb E\|\bs b^{(k)}\|^4}{d^2} + \frac{2\mathbb E\|\bs b^{(k)}\|^2}{d} + 1.
\]
For the Gaussian \(\bs b^{(k)}\sim\mathcal N(0,s^{(k)}I_d)\), we have
\(\mathbb E\|\bs b^{(k)}\|^2 = s^{(k)} d\) and
\(\mathbb E\|\bs b^{(k)}\|^4 = (s^{(k)})^2(d^2+2d)\). Substituting gives
\[
\mathbb E\big\|\Phi^{\mathrm{SW}}(\bs\mu^{(k)})\big\|_{L^2}^4
= \Big(1+\frac{2}{d}\Big)(s^{(k)})^2 + 2s^{(k)} + 1.
\]

To ensure \(\bs\mu^{(k)}\in\mathcal D(R)\) for \(k=1,2\) it suffices to bound this fourth moment by \(R\) for the worst-case variance.  If \(s^{(1)}=s+\varepsilon\) and \(\varepsilon\le 1\) we may use the simple bound \(s^{(1)}\le s+1\) and also the  inequality \(1+\tfrac{2}{d}\le 3\) to obtain the sufficient condition
\[
3(s+1)^2 + 2(s+1) + 1 \le R.
\]
Expanding the left-hand side yields
\[
3(s+1)^2 + 2(s+1) + 1 = 3s^2 + 8s + 6.
\]
Solving the quadratic inequality \(3s^2+8s+6\le R\) gives the (larger) root
\[
s_{\max}=\frac{-8 + \sqrt{12R - 8}}{6},
\]
so all \(s\) with \(s\le s_{\max}\) satisfy the inequality.  Imposing also \(s\ge 1\) we get a nonempty feasible interval \([1,s_{\max}]\) precisely when
\[
s_{\max}\ge 1 \quad\Longleftrightarrow\quad R\ge 17.
\]
Therefore, the condition
\[
R\ge 17\qquad\text{and}\qquad 1\le s\le \frac{-8 + \sqrt{12R - 8}}{6}
\]
is sufficient to guarantee
\(\mathbb E\|\Phi^{\mathrm{SW}}(\bs\mu^{(k)})\|_{L^2}^4\le R\) for \(k=1,2\), hence \(\bs\mu^{(k)}\in\mathcal D(R)\).

Using Proposition \ref{prop:sw_eigfunc}, the covariance operator $\Sigma^{(k)}$ the following eigendecomposition:
$$
\Sigma^{(k)} = \frac{1}{d} \mathrm{Var}(\bs{b}_1^{(k)}) \sum\limits_{i=1}^d f_i\otimes f_i,
$$
with $f_i(t,\theta) = \sqrt{d}\theta_i$. Then, we have that
\begin{align*}
\|\Sigma^{(1)} - \Sigma^{(2)}\|^2_{\mathrm{HS}} &= (s^{(1)} - s^{(2)})^2 \frac{1}{d^2}\Bigl\| \sum\limits_{i=1}^d f_i\otimes f_i\Bigr\|^2_{\mathrm{HS}}\\
&=  (s^{(1)} - s^{(2)})^2 \frac{1}{d^2} \sum\limits_{i=1}^d \bigl\| f_i\otimes f_i\bigr\|^2_{\mathrm{HS}}\\
&=  (s^{(1)} - s^{(2)})^2 \frac{1}{d^2}\sum\limits_{i=1}^d \bigl\| f_i\bigr\|^4_{L^2([0,1]\times \mathbb{S}^{d-1})}\\
&=  \frac{1}{d}(s^{(1)} - s^{(2)})^2\\
&= \frac{1}{d} n^{-1}.
\end{align*}
Taking $C = C^{\mathrm{SW}} = \frac{1}{2\sqrt{d}}$ concludes the proof.
\end{proof}

%% file: proof_excess_risk.tex
\section{Proof of Theorem \ref{th:main}}\label{sec:proof_main_th}

We recall the definitions of the covariance operators associated to $\phi$, $\phi_i$ and $\hat{\phi}_i$:
$$
    \Sigma = \mathbb{E}[\phi \otimes \phi] \qquad \Sigma^n = \frac{1}{n}\sum\limits_{i=1}^n \phi_i \otimes \phi_i \qquad \hat{\Sigma} =\frac{1}{n}\sum\limits_{i=1}^n \hat{\phi}_i \otimes \hat{\phi}_i.
$$
Given Assumption \ref{hyp:4th_moment}, the covariance operator $\Sigma$ is trace-class and therefore compact. The empirical covariance operators $\Sigma^n$ and $\hat{\Sigma}$ are also compact, as they have finite rank (at most $n$). By the spectral theorem for compact self-adjoint operators (see Theorem \ref{th:spectral_th}), these covariance operators admit the following spectral representations:
$$
    \Sigma = \sum\limits_{j\geq 0} \lambda_j P_j, \qquad \qquad  \Sigma^n = \sum\limits_{j\geq 0} \lambda^n_j P^n_j \qquad \hat{\Sigma} = \sum\limits_{j\geq 0} \hat{\lambda}_j \hat{P}_j,
$$
where  $(\lambda_j)_{j\geq0}$, $(\lambda^n_j)_{j\geq0}$ and $(\hat{\lambda}_j)_{j\geq0}$ are positive eigenvalues sorted in decreasing order, and the $P_j$'s, $P^n_j$'s and $\hat{P}_j$'s are rank one projectors. Given $P\in\mathcal{P}_q$, we define the following PCA reconstruction errors
\begin{equation}\label{eq:reconstruction_errors}
R(P) = \mathbb{E}\|\phi - P\phi\|^2_{\mathcal{H}}, \qquad R^n(P) = \frac{1}{n}\sum\limits_{i=1}^n \|\phi_i -P\phi_i\|_{\mathcal{H}}^2, \qquad \hat{R}(P) = \frac{1}{n}\sum\limits_{i=1}^n \|\hat{\phi}_i -P\hat{\phi}_i\|_{\mathcal{H}}^2  
\end{equation}
and note the corresponding minimizers
$$
    P_{\leq q} \in \argmin\limits_{P\in\mathcal{P}_q} R(P),\qquad P^n_{\leq q} \in \argmin\limits_{P\in\mathcal{P}_q} R^n(P), \qquad \hat{P}_{\leq q} \in \argmin\limits_{P\in\mathcal{P}_q} \hat{R}(P).
$$
These optimal projectors can be expressed from the spectral representations of the covariance operators.
\begin{equation*}
    P_{\leq q} = \sum\limits_{j=1}^q P_j, \qquad P^n_{\leq q} = \sum\limits_{j=1}^q P^n_j, \qquad \hat{P}_{\leq q} = \sum\limits_{j=1}^q \hat{P}_j.
\end{equation*}
To establish Theorem \ref{th:main}, we prove the following Lemmas \ref{lem:2terms_risk}, \ref{lem:error_sample_1} and \ref{lem:error_sample_2}. We first start with Lemma \ref{lem:2terms_risk} by rewriting the excess risk $\mathcal{E}_q^{\mathrm{PCA}}$ and splitting it into two terms that we will bound separately.

\begin{lemma}\label{lem:2terms_risk}
    \begin{equation}
        \mathcal{E}_q^{\mathrm{PCA}} \leq \underbrace{\mathbb{E}\Bigl[\langle \Sigma-\Sigma^n, P_{\leq q} - \hat{P}_{\leq q}\rangle_{\mathrm{\mathrm{HS}}}\Bigr]}_{\text{\textbf{(i)}}} + \underbrace{\mathbb{E}\Bigl[\langle \Sigma^n - \hat{\Sigma}, P_{\leq q} - \hat{P}_{\leq q}\rangle_{\mathrm{\mathrm{HS}}}\Bigr]}_{\text{\textbf{(ii)}}}
    \end{equation}

\end{lemma}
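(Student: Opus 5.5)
We rewrite the reconstruction risk as a linear functional of the covariance operator. Since $P$ is an orthogonal projector, for any $\phi\in\mathcal{H}$ we have $\|\phi - P\phi\|_{\mathcal{H}}^2 = \|\phi\|_{\mathcal{H}}^2 - \langle P\phi,\phi\rangle_{\mathcal{H}}$. Moreover, $\langle P\phi,\phi\rangle_{\mathcal{H}} = \langle \phi\otimes\phi, P\rangle_{\mathrm{HS}}$, which follows from computing $\mathrm{Tr}\bigl((\phi\otimes\phi)P\bigr)$ in any orthonormal basis. Taking expectations, and interchanging expectation with the HS inner product (legitimate because $\Sigma$ is trace-class under Assumption \ref{hyp:4th_moment}), gives
\[
R(P) = \mathbb{E}\|\phi\|_{\mathcal{H}}^2 - \langle \Sigma, P\rangle_{\mathrm{HS}}.
\]
The same argument yields $\hat{R}(P) = \frac{1}{n}\sum_i\|\hat{\phi}_i\|_{\mathcal{H}}^2 - \langle \hat{\Sigma}, P\rangle_{\mathrm{HS}}$. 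Hence, pointwise,
\[
R(\hat{P}_{\leq q}) - R(P_{\leq q}) = \langle \Sigma, P_{\leq q} - \hat{P}_{\leq q}\rangle_{\mathrm{HS}}.
\]

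Next we use the optimality of $\hat{P}_{\leq q}$ for $\hat{R}$. Since $P_{\leq q}\in\mathcal{P}_q$, we have $\hat{R}(\hat{P}_{\leq q}) \leq \hat{R}(P_{\leq q})$, which is equivalent to $\langle \hat{\Sigma}, P_{\leq q} - \hat{P}_{\leq q}\rangle_{\mathrm{HS}} \leq 0$. Adding this nonpositive quantity with a minus sign gives
\[
R(\hat{P}_{\leq q}) - R(P_{\leq q}) \leq \langle \Sigma - \hat{\Sigma}, P_{\leq q} - \hat{P}_{\leq q}\rangle_{\mathrm{HS}}.
\]
Inserting $\Sigma^n$, via $\Sigma - \hat{\Sigma} = (\Sigma - \Sigma^n) + (\Sigma^n - \hat{\Sigma})$, and using bilinearity of the inner product splits the right-hand side into the two terms (i) and (ii).

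Taking expectations concludes the argument. Integrability is not an issue: each term is bounded by $\|\cdot\|_{\mathrm{HS}}\cdot\|P_{\leq q}-\hat{P}_{\leq q}\|_{\mathrm{HS}} \leq 2\sqrt{q}\,\|\cdot\|_{\mathrm{HS}}$, and Theorem \ref{th:cov_convergence} shows these HS norms are integrable.

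The only delicate point is the identity $\mathbb{E}\langle \phi\otimes\phi, P\rangle_{\mathrm{HS}} = \langle \Sigma,P\rangle_{\mathrm{HS}}$. It holds because $P$ has finite rank, so $\langle \phi\otimes\phi,P\rangle_{\mathrm{HS}} = \sum_{k\le q}\langle \phi,u_k\rangle_{\mathcal{H}}^2$ for an orthonormal basis $(u_k)$ of the range of $P$, and this is integrable with expectation $\sum_k\langle \Sigma u_k,u_k\rangle_{\mathcal{H}}$. Measurability of $\hat{P}_{\leq q}$ (choosing a measurable minimizer) is a minor technicality we would simply assume.
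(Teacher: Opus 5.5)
Your proof is correct and follows essentially the same route as the paper: you write $R$ and $\hat R$ linearly in $\Sigma$ and $\hat\Sigma$, use the optimality condition $\langle \hat\Sigma, P_{\leq q} - \hat P_{\leq q}\rangle_{\mathrm{HS}} \le 0$ to pass to $\langle \Sigma - \hat\Sigma, P_{\leq q} - \hat P_{\leq q}\rangle_{\mathrm{HS}}$, then insert $\Sigma^n$ and take expectations. Your version is slightly more careful than the paper's: writing $R(P) = \mathbb{E}\|\phi\|_{\mathcal{H}}^2 - \langle \Sigma, P\rangle_{\mathrm{HS}}$ avoids the paper's formal pairing $\langle \Sigma, I - P\rangle_{\mathrm{HS}}$, which involves the identity and is not Hilbert--Schmidt in infinite dimension, and you also justify that the two terms are integrable.
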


Here, we have divided the task into two sources of error: \textbf{(i)} the error due to the sampling of the $n$ measures $\mu_1,\cdots,\mu_n$ and \textbf{(ii)} the error due to the sampling of $m$ points from each measure. We then focus with Lemma \ref{lem:error_sample_1} on bounding \textbf{(i)}.

\begin{lemma}[Bounding \textbf{(i)}]\label{lem:error_sample_1}
    If $\phi$ is subgaussian and Assumption \ref{hyp:4th_moment} is verified, we have
    \begin{equation*}
        \mathbb{E}\bigl[\langle \Sigma-\Sigma^n, P_{\leq q} - \hat{P}_{\leq q}\rangle_{\mathrm{\mathrm{HS}}}\bigr] \lesssim \sum\limits_{j=1}^q   \max \Bigl\{ \sqrt{\frac{\lambda_j \sum_{k\geq j}\lambda_k}{n}}, \frac{\sum_{k\geq j}\lambda_k}{n} \Bigr\}
    \end{equation*}
\end{lemma}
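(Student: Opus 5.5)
The plan is to follow the scheme of \cite{reiss2020nonasymptotic}[Proposition 2.5]. Write $A := \Sigma - \Sigma^n$, which is centered, and $\mathcal{E}(P) := \langle \Sigma, P_{\leq q} - P\rangle_{\mathrm{HS}} = R(P) - R(P_{\leq q}) \geq 0$ for $P\in\mathcal{P}_q$. The idea is to bound $\langle A, P_{\leq q}-P\rangle_{\mathrm{HS}}$ for an arbitrary $P\in\mathcal{P}_q$ by a deterministic random quantity $Z$ built from the blocks of $A$ in the eigenbasis $(u_j)$ of $\Sigma$, and then take $P = \hat{P}_{\leq q}$. The key point is that $Z$ does not depend on $P$. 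This sidesteps the fact that $\hat{P}_{\leq q}$ is the minimizer of $\hat R$ rather than $R^n$, and is correlated with $A$.

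\textbf{Step 1 (variational decomposition).} Using $P_{\leq q}=\sum_{j\leq q}P_j$, I would write $P_{\leq q}-P = P_{\leq q}(I-P)P_{\leq q} - P_{>q}PP_{>q} + (\text{off-diagonal blocks } P_{\leq q}(\cdot)P_{>q})$. Pairing with $A$ splits $\langle A, P_{\leq q}-P\rangle_{\mathrm{HS}}$ into block terms involving $P_jAP_k$. Using $\mathrm{tr}(P_j(I-P)) \leq 1$, the partial-trace (Ky Fan) characterization, and $\mathrm{tr}(P_{>q}P)=\mathrm{tr}(P_{\leq q}(I-P))$, I expect a bound of the form
\begin{equation*}
\langle A, P_{\leq q}-P\rangle_{\mathrm{HS}} \leq \sum_{j=1}^q \Bigl( 2\,\|P_j A P_{\geq j}\|_{\mathrm{HS}} + \bigl|\mathrm{tr}(P_{\geq j} A P_{\geq j})\bigr|\,\tfrac{\cdot}{\cdot} \Bigr),
\end{equation*}
or more precisely a bound by $\sum_{j\leq q}\max\{\|P_jAP_{\geq j}\|_{\mathrm{HS}}, \|P_{\geq j}AP_{\geq j}\|\}$ up to constants. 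To arrive at the $\sum_{j\le q}$ indexing, I would order the block contributions column by column, as in Reiss–Wahl.

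\textbf{Step 2 (moment bounds on the blocks).} These are the only probabilistic inputs. Set $\eta_j := \langle \phi, u_j\rangle$, so that $\mathbb{E}\eta_j^2=\lambda_j$. By independence,
\begin{equation*}
\mathbb{E}\|P_jAP_{\geq j}\|_{\mathrm{HS}}^2 \leq \frac{1}{n}\sum_{k\geq j}\mathbb{E}[\eta_j^2\eta_k^2] \lesssim \frac{1}{n}\sum_{k\geq j}\lambda_j\lambda_k .
\end{equation*}
The last inequality uses Cauchy–Schwarz together with subgaussianity, which gives $\mathbb{E}\eta_j^4\lesssim \lambda_j^2$. This produces the $\sqrt{\lambda_j\sum_{k\geq j}\lambda_k/n}$ term. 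For the tail block, a concentration inequality for covariance operators of subgaussian vectors (Koltchinskii–Lounici type) gives $\mathbb{E}\|P_{\geq j}AP_{\geq j}\|_{\mathrm{op}} \lesssim \sqrt{\lambda_j \sum_{k\geq j}\lambda_k/n} + \sum_{k\geq j}\lambda_k/n$, since $\|P_{\geq j}\Sigma P_{\geq j}\|_{\mathrm{op}}=\lambda_j$ and the effective rank is $\sum_{k\ge j}\lambda_k/\lambda_j$. Summing over $j\leq q$ and taking expectations then yields the claimed $\max\{\cdot,\cdot\}$ form. Assumption \ref{hyp:4th_moment} guarantees that all these quantities are finite.

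\textbf{Main obstacle.} I expect Step 1 to be the hardest part: obtaining a bound that is uniform in $P\in\mathcal{P}_q$ and involves only the operator norm of the tail block $P_{\geq j}AP_{\geq j}$, rather than its trace or HS norm. A crude bound would lose a factor of $q$, or of the effective rank, compared with the stated rate. If the clean uniform bound fails, my fallback is to use $\mathcal{E}(P)\geq 0$ in the style of Reiss–Wahl: absorb part of $\langle A,P_{\leq q}-P\rangle$ into the weighted quantities $(\lambda_j-\lambda_k)\|P_jPP_k\|^2$ that make up $\mathcal{E}(P)$, and then apply Young's inequality blockwise. That route still produces the same two regimes, $n^{-1/2}$ and $n^{-1}$, per index $j$.
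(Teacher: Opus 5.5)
Your overall plan matches the paper's. You reduce to a random bound, uniform in $P\in\mathcal{P}_q$, expressed through the tail blocks $P_{\geq j}(\Sigma^n-\Sigma)P_{\geq j}$ in the eigenbasis of $\Sigma$. You then apply a Koltchinskii--Lounici bound with $\|P_{\geq j}\Sigma P_{\geq j}\|_{\mathrm{op}}=\lambda_j$ and $\mathrm{tr}(P_{\geq j}\Sigma P_{\geq j})=\sum_{k\geq j}\lambda_k$. Your Step 2 is essentially the paper's.

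The gap is Step 1, which is the real content of the lemma, and you do not prove it. The displayed inequality contains a placeholder, and the ``more precisely'' version is only asserted. You yourself flag it as the main obstacle and defer to a Reiss--Wahl localization fallback that is not carried out either. A bare trace term $|\mathrm{tr}(P_{\geq j}AP_{\geq j})|$ would also not work. For a non-Gaussian subgaussian $\phi$ (e.g.\ $\phi=\varepsilon Z$ with $Z$ Gaussian and $\varepsilon$ an independent bounded scalar), it is of order $\sum_{k\geq j}\lambda_k/\sqrt{n}$. That exceeds the target by a square root of the effective rank.

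The missing idea is short:
\begin{itemize}
\item Since $P_{\leq q}$ is deterministic, $\mathbb{E}\langle\Sigma-\Sigma^n,P_{\leq q}\rangle_{\mathrm{HS}}=0$. With $\Delta=\Sigma^n-\Sigma$, you only need to control $\mathbb{E}\langle\Delta,\hat P_{\leq q}\rangle_{\mathrm{HS}}\leq\mathbb{E}\sup_{P\in\mathcal{P}_q}\mathrm{Tr}(\Delta P)=\mathbb{E}\sum_{j\leq q}\lambda_j(\Delta)$, by the Ky Fan characterization. There is no need to handle $P_{\leq q}-P$ jointly or any off-diagonal block.
\item Courant--Fischer does the rest. $\mathrm{Im}(P_{\geq j})$ is the orthogonal complement of the $(j-1)$-dimensional space $\mathrm{Im}(P_{<j})$. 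Hence $\lambda_j(\Delta)\leq\sup_{v\in\mathrm{Im}(P_{\geq j}),\,\|v\|=1}\langle\Delta v,v\rangle=\lambda_1(P_{\geq j}\Delta P_{\geq j})\leq\|P_{\geq j}\Delta P_{\geq j}\|_{\mathrm{op}}$.
\end{itemize}
This is exactly the $P$-free bound you were looking for, with no loss in $q$.

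Your block decomposition can also be closed, at the cost of extra HS terms that your Step 2 already controls. Bound the $P_{\leq q}$-block by its diagonal plus twice its strict upper triangle, treating the latter row by row over $k>j$ via $\sum_k\langle u_k,(I-P)u_j\rangle^2\leq 1$. Bound the off-diagonal blocks via $\|P_{>q}Au_j\|\leq\|P_jAP_{>j}\|_{\mathrm{HS}}$ for $j\leq q$. Bound the $P_{>q}$-block by $q\|P_{>q}AP_{>q}\|_{\mathrm{op}}\leq\sum_{j\leq q}\|P_{\geq j}AP_{\geq j}\|_{\mathrm{op}}$. None of this is in your proposal, though, and the Ky Fan/min-max route is what makes the argument three lines.
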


The second part consists in bounding \textbf{(ii)}, which  is done in Lemma \ref{lem:error_sample_2} below.

\begin{lemma}[Bounding \textbf{(ii)}]\label{lem:error_sample_2}
    Under Assumption \ref{hyp:4th_moment}, we have
    \begin{equation*}
        \mathbb{E}\bigl[\langle \Sigma^n - \hat{\Sigma}, P_{\leq q} - \hat{P}_{\leq q}\rangle_{\mathrm{\mathrm{HS}}}\bigr] \leq 4 R^{1/4}\sqrt{q}~ r_m(\Phi)
    \end{equation*}
\end{lemma}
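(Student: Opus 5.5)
The statement to prove is Lemma \ref{lem:error_sample_2}. The plan is to use the HS Cauchy--Schwarz inequality, exploiting that $P_{\leq q}-\hat P_{\leq q}$ has small HS norm. Both projectors have rank $q$, so
\[
\|P_{\leq q}-\hat P_{\leq q}\|_{\mathrm{HS}}\le \|P_{\leq q}\|_{\mathrm{HS}}+\|\hat P_{\leq q}\|_{\mathrm{HS}}=2\sqrt{q}.
\]
A sharper identity is also available: $\|P-Q\|_{\mathrm{HS}}^2=2q-2\langle P,Q\rangle_{\mathrm{HS}}\le 2q$, since $\langle P,Q\rangle_{\mathrm{HS}}=\mathrm{Tr}(PQ)=\|PQ\|^2_{\mathrm{HS}}\ge 0$. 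This gives the bound $\sqrt{2q}$, which is better than $2\sqrt q$; either suffices.

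From this, pointwise on the sample space,
\[
\langle \Sigma^n-\hat\Sigma, P_{\leq q}-\hat P_{\leq q}\rangle_{\mathrm{HS}}\le \|\Sigma^n-\hat\Sigma\|_{\mathrm{HS}}\,\|P_{\leq q}-\hat P_{\leq q}\|_{\mathrm{HS}}\le 2\sqrt{q}\,\|\Sigma^n-\hat\Sigma\|_{\mathrm{HS}}.
\]
Taking expectations and invoking Lemma \ref{lem:distance_error_sample_2}, which gives $\mathbb{E}\|\Sigma^n-\hat\Sigma\|_{\mathrm{HS}}\le 2R^{1/4}r_m(\Phi)$ under Assumption \ref{hyp:4th_moment}, yields $4R^{1/4}\sqrt q\, r_m(\Phi)$, exactly the claimed bound.

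The only points needing care are the following.
\begin{itemize}
\item All operators involved are Hilbert--Schmidt. This holds since $\Sigma^n,\hat\Sigma$ have finite rank and the projectors have rank $q$.
\item The inner product is integrable, so that the expectation is well defined. This follows from the pointwise domination above together with the finiteness of $\mathbb{E}\|\Sigma^n-\hat\Sigma\|_{\mathrm{HS}}$ from Lemma \ref{lem:distance_error_sample_2}.
\item $\hat P_{\leq q}$ is a measurable choice of minimizer. I take this for granted, as the paper does.
\end{itemize}
There is no real obstacle: the crude bound loses nothing relative to the stated constant. The sketch in the main text mentions a $1$-Wasserstein argument, but the direct Cauchy--Schwarz route already reaches the stated constant.
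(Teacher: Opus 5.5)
Your proof is correct and is essentially the paper's argument: Hilbert--Schmidt Cauchy--Schwarz against projectors of HS norm of order $\sqrt{q}$, followed by Lemma \ref{lem:distance_error_sample_2}. The paper instead splits the inner product into a $P_{\leq q}$ term and a $\hat{P}_{\leq q}$ term and bounds each through a supremum over $\mathcal{P}_q$ using $\|P\|_{\mathrm{HS}}=\sqrt{q}$. Your pointwise bound on the difference is slightly cleaner, since it avoids placing a supremum outside the expectation for the random $\hat{P}_{\leq q}$. Your $\sqrt{2q}$ observation would also improve the constant to $2\sqrt{2}\,R^{1/4}\sqrt{q}\,r_m(\Phi)$.
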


Combining Lemmas \ref{lem:error_sample_1} and \ref{lem:error_sample_2} yields the expected result for Theorem \ref{th:main}.

\subsection{Proof of Lemma \ref{lem:2terms_risk}}\label{sec:2_terms_risk_proof}

The reconstruction errors defined in \eqref{eq:reconstruction_errors} can be expressed in terms of the corresponding covariance operators:

\begin{equation*}
    R(P) = \langle \Sigma,I-P \rangle_{\mathrm{HS}} \qquad R^n(P)  =\langle \Sigma^n,I-P \rangle_{\mathrm{HS}} \qquad \hat{R}(P)  =  \langle \hat{\Sigma},I-P \rangle_{\mathrm{HS}}.
\end{equation*}

These equalities for the reconstruction error is a well known result in PCA which we recall in Lemma \ref{lem:pca} in Section \ref{sec:technical}. As $\hat{P}_{\leq q}$ is a minimizer of $\hat{R}$, we have that

\begin{equation}\label{eq:minimizer2samples}
    \hat{R}(P_{\leq q}) - \hat{R}(\hat{P}_{\leq q}) = \langle \hat{\Sigma}, \hat{P}_{\leq q} - P_{\leq q}\rangle_{\mathrm{\mathrm{HS}}} \geq 0.
\end{equation}

This allows us to split the error into two terms.
\begin{align*}
    R(\hat{P}_{\leq q}) - R(P_{\leq q}) & =  \langle \Sigma, I- \hat{P}_{\leq q} \rangle_{\mathrm{\mathrm{HS}}} - \langle \Sigma, I-P_{\leq q}\rangle_{\mathrm{\mathrm{HS}}}                                                                                \\
                                          & = \langle \Sigma, P_{\leq q} - \hat{P}_{\leq q} \rangle_{\mathrm{\mathrm{HS}}}                                                                                                                                    \\
                                          & \overset{\eqref{eq:minimizer2samples}}{\leq} \langle \Sigma, P_{\leq q} - \hat{P}_{\leq q} \rangle_{\mathrm{\mathrm{HS}}} + \langle \hat{\Sigma}, \hat{P}_{\leq q} - P_{\leq q}\rangle_{\mathrm{\mathrm{HS}}} \\
                                          & = \langle \Sigma-\hat{\Sigma}, P_{\leq q} - \hat{P}_{\leq q}\rangle_{\mathrm{\mathrm{HS}}}                                                                                                                      \\
                                          & = \langle \Sigma-\Sigma^n +\Sigma^n  - \hat{\Sigma}, P_{\leq q} - \hat{P}_{\leq q}\rangle_{\mathrm{\mathrm{HS}}}                                                                                                \\
                                          & = \langle \Sigma-\Sigma^n, P_{\leq q} - \hat{P}_{\leq q}\rangle_{\mathrm{\mathrm{HS}}} + \langle \Sigma^n - \hat{\Sigma}, P_{\leq q} - \hat{P}_{\leq q}\rangle_{\mathrm{\mathrm{HS}}}
\end{align*}

\subsection{Proof of Lemma \ref{lem:error_sample_1}}\label{sec:proof_lemma_1}

We start by splitting the left term into two:
$$
    \mathbb{E}\Bigl[\langle \Sigma-\Sigma^n, P_{\leq q} - \hat{P}_{\leq q}\rangle_{\mathrm{\mathrm{HS}}}\Bigr] = \mathbb{E}\Bigl[\langle \Sigma-\Sigma^n, P_{\leq q} \rangle_{\mathrm{\mathrm{HS}}}\Bigr] +\mathbb{E}\Bigl[\langle \Sigma^n-\Sigma,  \hat{P}_{\leq q}\rangle_{\mathrm{\mathrm{HS}}}\Bigr]
$$
We now adapt the argument of the proof of Proposition 2.5 in \cite{reiss2020nonasymptotic} to our setting. We note $\Delta = \Sigma^n - \Sigma$. As $P_{\leq q}$ is deterministic, we have that $\mathbb{E}[\langle \Sigma-\Sigma^n, P_{\leq q} \rangle_{\mathrm{\mathrm{HS}}}] = \langle \Sigma-\mathbb{E}[\Sigma^n], P_{\leq q} \rangle_{\mathrm{\mathrm{HS}}}=0$. We obtain:
\begin{align*}
    \mathbb{E}\Bigl[\langle \Sigma-\Sigma^n, P_{\leq q} - \hat{P}_{\leq q}\rangle_{\mathrm{\mathrm{HS}}}\Bigr] & =  \mathbb{E}\Bigl[\langle \Delta,  \hat{P}_{\leq q}\rangle_{\mathrm{\mathrm{HS}}}\Bigr]                 \\
                                                                                                                 & \leq \mathbb{E}\Bigl[\sup\limits_{P\in\mathcal{P}_q} \langle \Delta, P\rangle_{\mathrm{\mathrm{HS}}}\Bigr] \\
                                                                                                                 & = \mathbb{E}\Bigl[\sup\limits_{P\in\mathcal{P}_q} \mathrm{Tr}(\Delta P)\Bigr]
\end{align*}

A corollary of the min-max Theorem \ref{th:minmax_principle} provides a variational characterization of partial traces:
\begin{equation}\label{eq:var_charac_partial_trace}
    \sup\limits_{V_q \subset\mathcal{H}} \mathrm{Tr}(\Delta P_{V_q}) = \sum_{j=1}^q \lambda_j(\Delta),
\end{equation}
where $V_q$ denotes a subspace of $\mathcal{H}$ of dimension $q$ and $P_{V_q}$ is the orthogonal projection onto $V_q$. Let $P_{\geq j}$ be the orthogonal complement of $P_{< j}$, i.e. $P_{\geq j} = I - P_{< j}$. Combining \eqref{eq:var_charac_partial_trace} with Lemma \ref{cor:minmax_eig} in Section \ref{sec:technical}, we obtain
\begin{align*}
    \sup\limits_{P\in\mathcal{P}_q} \mathrm{Tr}(\Delta P) & \leq \sum\limits_{j=1}^q \lambda_1(P_{\geq j}\Delta P_{\geq j})                                                             \\
                                                          & \leq \sum\limits_{j=1}^q \max\{|\lambda_1(P_{\geq j}\Delta P_{\geq j})|,| \lambda_2(P_{\geq j}\Delta P_{\geq j})|,\cdots \} \\
                                                          & = \sum\limits_{j=1}^q \|P_{\geq j} \Delta P_{\geq j}\|_{\mathrm{op}},
\end{align*}
where last equality comes from the fact that for self-adjoint bounded linear operators, the spectral radius is equal to the operator norm, denoted $\|\cdot\|_{\mathrm{op}}$. We define the covariance operator $\Sigma_j$ of $P_{\geq j}\phi$ as:
\begin{align*}
    \Sigma_j & = \mathbb{E}\bigl[P_{\geq j} \phi\otimes P_{\geq j}\phi\bigr] \\
             & = P_{\geq j} \mathbb{E}[\phi \otimes \phi] P_{\geq j}         \\
             & = P_{\geq j}\Sigma P_{\geq j},
\end{align*}
where the second equality comes from Lemma \ref{lem:trouver_nom}. Now, let us notice that $P_{\geq j}$ is constructed from the spectral representation of $\Sigma$, and therefore exactly projects on the subspace spanned by the last eigenvectors of $\Sigma$. This means that we have $\lambda_1(\Sigma_j) = \lambda_j(P_{\geq j} \Sigma P_{\geq j}) = \|\Sigma_j\|_{\mathrm{op}}$. We obtain:
$$
    \mathrm{tr}(\Sigma_j) = \sum\limits_{k\geq j} \lambda_k.
$$
We can also define $\Sigma_j^n$ as the empirical covariance operator of the $P_{\geq j}\phi_i$'s and we have that $\Sigma_j^n =  P_{\geq j} \Sigma^n P_{\geq j}$. We can observe that
\begin{align*}
     \Sigma_j^n - \Sigma_j & = P_{\geq j}\Sigma^n P_{\geq j} - P_{\geq j}\Sigma P_{\geq j} \\
                          & = P_{\geq j}\Delta P_{\geq j}.
\end{align*}

We now need the random variable $\phi$ to be pregaussian for using the moment bound \cite{koltchinskii2017concentration}.

\begin{definition}
    Let $\phi$ be a weakly square integrable and centered random variable in $\mathcal{H}$. We say $\phi$ is pregaussian if there exists a centered Gaussian random variable in $\mathcal{H}$ which has the same covariance operator than $\phi$.
\end{definition}

Assumption \ref{hyp:4th_moment} implies that the covariance operator of $\phi$ is trace-class, which itself implies that $\phi$ is pregaussian. Furthermore, subgaussianity of $\phi$ and $\phi_i$ also imply subgaussianity of $P_{\geq j}\phi$ and $P_{\geq j}\phi_i$. We can now apply Theorem 4 in \cite{koltchinskii2017concentration} to $\Sigma^n_j -\Sigma_j$:
\begin{align*}
    \mathbb{E}[\|P_{\geq j}\Delta P_{\geq j}\|_{\mathrm{op}}] & = \mathbb{E}[\|\Sigma_j-\Sigma_j^n\|_{\mathrm{op}}]                                                                                                                                              \\
                                                         & \lesssim \|\Sigma_j\|_{\mathrm{op}} \max \Bigl\{ \sqrt{\frac{(\mathbb{E}\|P_{\geq j}\phi\|_{\mathcal{H}})^2}{\|\Sigma_j\|_{\mathrm{op}}n}}, \frac{(\mathbb{E}\|P_{\geq j}\phi\|_{\mathcal{H}})^2}{\|\Sigma_j\|_{\mathrm{op}}n} \Bigr\} \\
                                                         & =  \max \Bigl\{ \sqrt{\frac{\|\Sigma_j\|_{\mathrm{op}}(\mathbb{E}\|P_{\geq j}\phi\|_{\mathcal{H}})^2}{ n}}, \frac{(\mathbb{E}\|P_{\geq j}\phi\|_{\mathcal{H}})^2}{n} \Bigr\}                                                 \\
                                                         & =  \max \Bigl\{ \sqrt{\frac{\lambda_j(\mathbb{E}\|P_{\geq j}\phi\|_{\mathcal{H}})^2}{ n}}, \frac{(\mathbb{E}\|P_{\geq j}\phi\|_{\mathcal{H}})^2}{ n} \Bigr\}                                                            
\end{align*}
Now, using that $(\mathbb{E}\|X\|)^2\leq \mathbb{E}\|X\|^2$ and that $\mathrm{Tr}(\mathbb{E}[X\otimes X]) = \mathbb{E}\|X\|^2$:
\begin{align*}
    \mathbb{E}[\|P_{\geq j}\Delta P_{\geq j}\|_{\mathrm{op}}] & \lesssim  \max \Bigl\{ \sqrt{\frac{\lambda_j \mathbb{E}\|P_{\geq j}\phi\|_{\mathcal{H}}^2}{n}}, \frac{\mathbb{E}\|P_{\geq j}\phi\|_{\mathcal{H}}^2}{n} \Bigr\} \\
                                                         & = \max \Bigl\{ \sqrt{\frac{\lambda_j \mathrm{Tr}(\Sigma_j)}{n}}, \frac{\mathrm{Tr}(\Sigma_j)}{n} \Bigr\}                                             \\
                                                         & =  \max \Bigl\{ \sqrt{\frac{\lambda_j \sum_{k\geq j}\lambda_k}{n}}, \frac{\sum_{k\geq j}\lambda_k}{n} \Bigr\}                      
\end{align*}
Putting everything together, we can finally bound our first term:
\begin{align*}
    \mathbb{E}\Bigl[\langle \Sigma -\Sigma^n, P_{\leq q} - \hat{P}_{\leq q}\rangle_{\mathrm{\mathrm{HS}}}\Bigr] & \leq \mathbb{E}\Bigl[ \sum\limits_{j=1}^q \| P_{\geq j}\Delta P_{\geq j}\|_{\mathrm{op}}\Bigr]                                                 \\
                                                                                                                  & \lesssim \sum\limits_{j=1}^q   \max \Bigl\{ \sqrt{\frac{\lambda_j \sum_{k\geq j}\lambda_k}{n}}, \frac{\sum_{k\geq j}\lambda_k}{n} \Bigr\}.
\end{align*}

\subsection{Proof of Lemma \ref{lem:error_sample_2}}\label{sec:proof_lemma_2}

We start by writing:
\begin{align*}
    \mathbb{E}\Bigl[\langle \Sigma^n - \hat{\Sigma}, P_{\leq q} - \hat{P}_{\leq q}\rangle_{\mathrm{\mathrm{HS}}}\Bigr] &= \mathbb{E}\Bigl[\langle \Sigma^n - \hat{\Sigma}, P_{\leq q}\rangle_{\mathrm{\mathrm{HS}}} - \langle \Sigma^n - \hat{\Sigma}, \hat{P}_{\leq q}\rangle_{\mathrm{\mathrm{HS}}}\Bigr] \\
    & \leq 2\sup\limits_{P\in\mathcal{P}_q} \mathbb{E}\Bigl[| \langle \Sigma^n - \hat{\Sigma}, P\rangle_{\mathrm{\mathrm{HS}}}\Bigr]
\end{align*}

Using Cauchy-Schwarz, we have that 

\begin{align*}
    \mathbb{E}\Bigl[\langle \Sigma^n - \hat{\Sigma}, P_{\leq q} - \hat{P}_{\leq q}\rangle_{\mathrm{\mathrm{HS}}}\Bigr] &\leq  2\sup\limits_{P\in\mathcal{P}_q} \mathbb{E}\Bigl[ \|\Sigma^n - \hat{\Sigma}\|_{\mathrm{HS}}\|P\|_{\mathrm{HS}}\Bigr]\\
    &= 2\sqrt{q}~\mathbb{E}\|\Sigma^n - \hat{\Sigma}\|_{\mathrm{HS}}
\end{align*}

From Lemma \ref{lem:distance_error_sample_2}, we have that under Assumption \ref{hyp:4th_moment}, 

$$
\mathbb{E}\|\Sigma^n - \hat{\Sigma}\|_{\mathrm{HS}} \leq 2R^{1/4}r_m(\Phi).
$$

Then, under Assumption \ref{hyp:4th_moment}, we have:

$$
\mathbb{E}\Bigl[\langle \Sigma^n - \hat{\Sigma}, P_{\leq q} - \hat{P}_{\leq q}\rangle_{\mathrm{\mathrm{HS}}}\Bigr] \leq 4R^{1/4}\sqrt{q}r_m(\Phi).
$$

\subsection{Proof of Corollary \ref{cor:projection_error}}\label{sec:projection_error_proof}

We have a standard identity for the population and empirical projectors $$ \|P_{\leq q} - \hat{P}_{\leq q}\|^2_{\mathrm{HS}} = 2\|(I-P_{\leq q})\hat{P}_{\leq q}\|^2_{\mathrm{HS}} = 2\|\sum\limits_{k>q}P_k \hat{P}_{\leq q}\|^2_{\mathrm{HS}} = 2\sum\limits_{k>q} \|P_k \hat{P}_{\leq q}\|^2_{\mathrm{HS}}. $$ Using Lemma 2.6 in \cite{reiss2020nonasymptotic}, the PCA excess risk satisfies $$ \mathcal{E}^{\mathrm{PCA}}_q = \sum\limits_{j\leq q} (\lambda_j - \lambda_{q+1})\|P_j\hat{P}_{>q}\|^2_{\mathrm{HS}} + \sum\limits_{k>q}(\lambda_{q+1} - \lambda_k)\|P_k \hat{P}_{\leq q}\|^2_{\mathrm{HS}} $$ In particular, $$ \mathcal{E}^{\mathrm{PCA}}_q \geq \sum\limits_{k>q}(\lambda_{q+1} - \lambda_k)\|P_k \hat{P}_{\leq q}\|^2_{\mathrm{HS}}\\ \geq (\lambda_{q+1} - \lambda_q)\sum\limits_{k>q}\|P_k \hat{P}_{\leq q}\|^2_{\mathrm{HS}}, $$ as we assumed $\lambda_q \geq \lambda_{q+1}$. Combining the two, we obtain $$ \|P_{\leq q} - \hat{P}_{\leq q}\|^2_{\mathrm{HS}} \leq \frac{2\mathcal{E}^{\mathrm{PCA}}_q}{\lambda_{q+1}-\lambda_q}, $$ which is a variant of the Davis-Kahan sin-$\theta$ theorem \cite{davis1969some}.

\subsection{Technical details}\label{sec:technical}

\begin{lemma}\label{lem:pca}

    Let $\phi$ be a centered random variable in the Hilbert space $\mathcal{H}$ and $\Sigma = \mathbb{E}[\phi \otimes \phi]$ its covariance operator. For an orthogonal projection $P:\mathcal{H}\rightarrow \mathcal{H}$, we have:

    $$\mathbb{E}\|\phi - P\phi\|^2_{\mathcal{H}} = \langle \Sigma,I-P \rangle_{\mathrm{HS}}$$
\end{lemma}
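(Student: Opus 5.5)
The plan is to expand the squared norm pointwise, take expectations, and then identify each resulting term as a Hilbert--Schmidt inner product with $\Sigma$. For a fixed realization of $\phi$ we use $P^*=P$ and $P^2=P$, so that $I-P$ is also an orthogonal projection and
$$
\|\phi-P\phi\|_{\mathcal{H}}^2=\langle (I-P)\phi,(I-P)\phi\rangle_{\mathcal{H}}=\langle \phi,(I-P)\phi\rangle_{\mathcal{H}}.
$$
This turns the reconstruction error into a quadratic form in $\phi$ associated with the operator $I-P$.

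The key step is to express $\langle \phi, A\phi\rangle_{\mathcal{H}}$ as $\langle \phi\otimes\phi, A\rangle_{\mathrm{HS}}$, or equivalently as $\mathrm{Tr}((\phi\otimes\phi)A)$. We fix an orthonormal basis $(f_j)_{j\geq1}$ of $\mathcal{H}$. By the paper's convention $(f\otimes g)h=\langle f,h\rangle_{\mathcal{H}} g$, so $(\phi\otimes\phi)$ is a rank-one, trace-class operator and
$$
\mathrm{Tr}\bigl((\phi\otimes\phi)(I-P)\bigr)=\sum_j\langle \phi,(I-P)f_j\rangle_{\mathcal{H}}\langle\phi,f_j\rangle_{\mathcal{H}}=\langle (I-P)\phi,\phi\rangle_{\mathcal{H}}
$$
by Parseval. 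Since $I-P$ is bounded and self-adjoint, the trace pairing $\mathrm{Tr}(AB)$ between the trace-class operator $\phi\otimes\phi$ and the bounded operator $I-P$ is well defined. This trace pairing is what the notation $\langle\cdot,\cdot\rangle_{\mathrm{HS}}$ means here, because $I-P$ is generally not Hilbert--Schmidt when $\mathcal{H}$ is infinite dimensional.

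Next we take expectations and exchange expectation with the trace pairing:
$$
\mathbb{E}\,\mathrm{Tr}\bigl((\phi\otimes\phi)(I-P)\bigr)=\mathrm{Tr}\bigl(\mathbb{E}[\phi\otimes\phi](I-P)\bigr)=\langle\Sigma,I-P\rangle_{\mathrm{HS}}.
$$
The main obstacle is justifying this interchange. We would argue that the map $T\mapsto \mathrm{Tr}(T(I-P))$ is a continuous linear functional on the trace-class operators, with norm at most $\|I-P\|_{\mathrm{op}}\le 1$. Under the standing moment assumption, $\mathbb{E}\|\phi\otimes\phi\|_{1}=\mathbb{E}\|\phi\|_{\mathcal{H}}^2<\infty$, so $\Sigma$ is a Bochner integral in the trace class. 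Continuous linear functionals commute with Bochner integrals, which gives the interchange.

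Alternatively, the same equality follows from a direct series argument. Writing $\langle\phi,(I-P)\phi\rangle_{\mathcal{H}}=\|\phi\|^2-\sum_j\langle P\phi,f_j\rangle^2$, we can choose the basis so that its first $q$ vectors span the range of $P$, when $P$ has finite rank $q$. Each coordinate term then has finite expectation, and monotone convergence lets us sum term by term.
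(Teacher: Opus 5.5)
Your proposal is correct and follows essentially the same route as the paper: reduce $\|\phi-P\phi\|_{\mathcal{H}}^2$ to $\langle\phi,(I-P)\phi\rangle_{\mathcal{H}}$ using $P^2=P=P^*$, rewrite this as a trace against $\phi\otimes\phi$, and pass the expectation inside. Two points the paper's proof leaves implicit are made explicit in yours: the Bochner-integral justification of that interchange, and the observation that $\langle\Sigma,I-P\rangle_{\mathrm{HS}}$ must be read as the trace duality pairing $\mathrm{Tr}(\Sigma(I-P))$, since $I-P$ is not Hilbert--Schmidt in infinite dimensions.
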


\begin{proof}
    \begin{align*}
        \mathbb{E}\|\phi - P\phi\|_{\mathcal{H}}^2 & = \mathbb{E}\Bigl[ \|\phi\|_{\mathcal{H}}^2 + \|P\phi\|_{\mathcal{H}}^2 - 2\langle \phi,P\phi\rangle_{\mathcal{H}} \Bigr]                             \\
                                                   & = \mathbb{E}\Bigl[\langle \phi,\phi\rangle_{\mathcal{H}} + \langle P\phi,P\phi\rangle_{\mathcal{H}} - 2\langle \phi,P\phi\rangle_{\mathcal{H}} \Bigr] \\
                                                   & = \mathbb{E}\Bigl[\langle \phi,\phi\rangle_{\mathcal{H}} + \langle \phi,PP\phi\rangle_{\mathcal{H}} - 2\langle \phi,P\phi\rangle_{\mathcal{H}} \Bigr] \\
                                                   & = \mathbb{E}\Bigl[\langle \phi,\phi\rangle_{\mathcal{H}} + \langle \phi,P\phi\rangle_{\mathcal{H}} - 2\langle \phi,P\phi\rangle_{\mathcal{H}} \Bigr]  \\
                                                   & = \mathbb{E}\Bigl[\langle \phi,\phi\rangle_{\mathcal{H}} - \langle \phi,P\phi\rangle_{\mathcal{H}}  \Bigr]                                            \\
                                                   & = \mathbb{E}\Bigl[\langle \phi,(I-P)\phi\rangle_{\mathcal{H}}  \Bigr]                                                                                 \\
                                                   & \overset{(\ref{eq:trace_tensorprod})}{=} \mathbb{E}\Bigl[\mathrm{Tr}\bigr((I-P)(\phi \otimes \phi)\bigl) \Bigr]                                         \\
                                                   & = \mathbb{E}\Bigl[\langle I-P,\phi \otimes \phi\rangle_{\mathrm{HS}} \Bigr]                                                                           \\
                                                   & = \langle I-P, \Sigma\rangle_{\mathrm{HS}}.
    \end{align*}
\end{proof}

\begin{lemma}\label{lem:trouver_nom}
    Let $P:\mathcal{H}\rightarrow \mathcal{H}$ be an orthogonal projection and $\phi$ be a random variable on $\mathcal{H}$. Then,
    $$
        \mathbb{E}[P\phi \otimes P\phi] = P\mathbb{E}[\phi \otimes \phi] P.
    $$
\end{lemma}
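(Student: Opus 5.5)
The identity $\mathbb{E}[P\phi\otimes P\phi] = P\,\mathbb{E}[\phi\otimes\phi]\,P$ follows from a pointwise (deterministic) identity for rank-one tensors, after which expectation is pulled through the bounded linear maps $A\mapsto PA$ and $A\mapsto AP$.

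\textbf{Step 1 (deterministic identity).} For a fixed $f\in\mathcal{H}$ and any $h\in\mathcal{H}$, the convention $(f\otimes g)h=\langle f,h\rangle_{\mathcal{H}}\,g$ gives
\[
(Pf\otimes Pf)h=\langle Pf,h\rangle_{\mathcal{H}}\,Pf .
\]
Since $P$ is self-adjoint, $\langle Pf,h\rangle_{\mathcal{H}}=\langle f,Ph\rangle_{\mathcal{H}}$. Hence
\[
(Pf\otimes Pf)h=P\bigl(\langle f,Ph\rangle_{\mathcal{H}}\,f\bigr)=P(f\otimes f)Ph ,
\]
that is, $Pf\otimes Pf=P(f\otimes f)P$. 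Only self-adjointness of $P$ is used here; idempotence is not needed.

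\textbf{Step 2 (integrability).} Under Assumption~\ref{hyp:4th_moment}, we have $\mathbb{E}\|\phi\otimes\phi\|_{\mathrm{HS}}=\mathbb{E}\|\phi\|_{\mathcal{H}}^2<\infty$. So $\phi\otimes\phi$ is Bochner integrable in the Hilbert space of HS operators, and in fact also in the trace class. The same holds for $P\phi\otimes P\phi$, because $\|P\phi\|_{\mathcal{H}}\le\|\phi\|_{\mathcal{H}}$.

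\textbf{Step 3 (commuting expectation with $P$).} The map $L:A\mapsto PAP$ is linear on HS operators and bounded, since $\|PAP\|_{\mathrm{HS}}\le\|P\|_{\mathrm{op}}^2\|A\|_{\mathrm{HS}}\le\|A\|_{\mathrm{HS}}$. Bochner integrals commute with bounded linear operators, so
\[
\mathbb{E}[P\phi\otimes P\phi]=\mathbb{E}\bigl[L(\phi\otimes\phi)\bigr]=L\bigl(\mathbb{E}[\phi\otimes\phi]\bigr)=P\Sigma P .
\]

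If one prefers to avoid Bochner integrals, Step 3 can be done weakly instead. For all $g,h\in\mathcal{H}$,
\[
\langle \mathbb{E}[P\phi\otimes P\phi]h,g\rangle_{\mathcal{H}}=\mathbb{E}\bigl[\langle\phi,Ph\rangle_{\mathcal{H}}\langle\phi,Pg\rangle_{\mathcal{H}}\bigr]=\langle\Sigma Ph,Pg\rangle_{\mathcal{H}}=\langle P\Sigma Ph,g\rangle_{\mathcal{H}} .
\]
The middle expectation is finite by Cauchy--Schwarz and the finite second moment.

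The only point needing care is the justification in Step 3 that expectation commutes with $P$. Both routes above are standard, so I expect no real obstacle.
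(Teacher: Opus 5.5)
Your proof is correct and follows essentially the same route as the paper, which evaluates both sides at a fixed $f$, uses self-adjointness to write $\langle P\phi,f\rangle_{\mathcal{H}}=\langle \phi,Pf\rangle_{\mathcal{H}}$, and pulls $P$ out of the expectation; this is your Step~1 combined with Step~3, carried out on the $\mathcal{H}$-valued variable $\langle\phi,Pf\rangle_{\mathcal{H}}\phi$ rather than on the operator $\phi\otimes\phi$. Your Step~2 additionally supplies the integrability justification that the paper leaves implicit.
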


\begin{proof}

    Using the definition of the tensor product, let $f\in \mathcal{H}$. Then,
    \begin{align*}
        \mathbb{E}\bigl[P\phi \otimes P\phi\bigr] f & = \mathbb{E}\bigl[ \langle P\phi, f\rangle_{\mathcal{H}} P\phi\bigr] \\
                                                    & = \mathbb{E}\bigl[ \langle \phi,P f\rangle_{\mathcal{H}} P\phi\bigr] \\
                                                    & = \mathbb{E}\bigl[P \langle \phi, Pf\rangle_{\mathcal{H}} \phi\bigr] \\
                                                    & = P\mathbb{E}\bigl[ \langle \phi, Pf\rangle_{\mathcal{H}} \phi\bigr] \\
                                                    & = P \mathbb{E}[(\phi \otimes \phi)(Pf) ]                             \\
                                                    & = P\mathbb{E}[\phi \otimes \phi] Pf
    \end{align*}
\end{proof}

\subsubsection{Proof of Corollary \ref{cor:eigenvalues_decay}}\label{sec:proof_corollary}

We focus on the first term of the risk bound in Theorem \ref{th:main}, that is:

 $$
\sum\limits_{j=1}^q \max \Bigl\{ \sqrt{\frac{\lambda_j \sum_{k\geq j}\lambda_k}{n}}, \frac{\sum_{k\geq j}\lambda_k}{n} \Bigr\}.
 $$

\paragraph{Polynomial decay.} The case of polynomially decaying eigenvalues corresponds to $\lambda_j \asymp j^{-\alpha}$ for some $\alpha > 1$ and for all $j\geq 1$.

As $\sum\limits_{k\geq j} \lambda_k \asymp \sum\limits_{k \geq j} k^{-\alpha} \asymp j^{1-\alpha}$, we have that:
$$
\sqrt{\frac{\lambda_j \sum_{k\geq j}\lambda_k}{n}} \asymp \sqrt{\frac{j^{-\alpha} j^{1-\alpha}}{n}} = \sqrt{\frac{j^{1-2\alpha}}{n}},\quad
\text{and} \quad
\frac{\sum_{k\geq j}\lambda_k}{n} \asymp \frac{j^{1-\alpha}}{n}.
$$
Notice that the two terms are equal when $j = n$ and the square root term is larger when $j\leq n$. Therefore, if we assume $q\leq n$, which is reasonable as $q$ is usually small compared to $n$, we have that:

\begin{align*}
\sum\limits_{j=1}^q \max \Bigl\{ \sqrt{\frac{\lambda_j \sum_{k\geq j}\lambda_k}{n}}, \frac{\sum_{k\geq j}\lambda_k}{n} \Bigr\} \asymp \sum\limits_{j=1}^q \sqrt{\frac{j^{1-2\alpha}}{n}} \asymp \frac{1}{\sqrt{n}} 
\begin{cases}
1, & \alpha > 3/2, \\
\log(q), & \alpha = 3/2, \\
q^{\frac{3}{2}-\alpha}, & \alpha < 3/2.
\end{cases}
\end{align*}
Therefore, assuming a polynomial decay in the eigenvalues $\lambda_j \asymp j^{-\alpha}$ for $\alpha> 3/2$ and $q\leq n$, we have that the first term of the risk is bounded by a term of order $\sqrt{1/n}$.

\paragraph{Exponential decay.} Let us now assume that the eigenvalues decay exponentially, i.e. $\lambda_j \asymp e^{-\alpha j}$ for some $\alpha > 0$ and for all $j\geq 1$. In that case, we have that $\sum\limits_{k\geq j} \lambda_k \asymp \sum\limits_{k\geq j} e^{-\alpha k} \asymp \frac{e^{-\alpha j}}{1-e^{-\alpha}}$. This gives:
$$
\sqrt{\frac{\lambda_j \sum_{k\geq j}\lambda_k}{n}} \asymp \sqrt{\frac{e^{-\alpha j} e^{-\alpha j}}{n(1-e^{-\alpha})}} = \sqrt{\frac{e^{-2\alpha j}}{n(1-e^{-\alpha})}}, \quad \text{and} \quad
 \frac{\sum_{k\geq j}\lambda_k}{n} \asymp \frac{e^{-\alpha j}}{n(1-e^{-\alpha})}.
$$
The two terms are equal when $n = \frac{1}{1-e^{-\alpha}}$. We therefore have that:
\begin{align*}
\sum\limits_{j=1}^q \max \Bigl\{ \sqrt{\frac{\lambda_j \sum_{k\geq j}\lambda_k}{n}}, \frac{\sum_{k\geq j}\lambda_k}{n} \Bigr\} \asymp  
\begin{cases}
\frac{e^{-\alpha}(1-e^{-\alpha q})}{n(1-e^{-\alpha})}  \asymp \frac{1}{n}, & n\leq \frac{1}{1-e^{-\alpha}}, \\
\frac{e^{-\alpha}(1-e^{-\alpha q})}{\sqrt{n}(1-e^{-\alpha})^{3/2}}  \asymp \frac{1}{\sqrt{n}}, & n\geq \frac{1}{1-e^{-\alpha}}.
\end{cases}
\end{align*}

Hence, assuming an exponential decay in the eigenvalues $\lambda_j \asymp e^{-\alpha j}$ for $\alpha> 0$, we have that the first term of the risk is bounded by a term of order $1/n$ as soon as $n$ is smaller than $\frac{1}{1-e^{-\alpha}}$, and $1/\sqrt{n}$ otherwise.

%% file: gaussian_measures.tex
\section{Gaussian measures}\label{sec:gaussian_measures}

We now study the special case where $\bs{\mu} = \mathcal{N}(\bs{b},\bs{S})$, with $\bs{b}\in\mathbb{R}^d$ a random vector and $\bs{S} = \mathrm{diag}(\bs{s}_1^2,\cdots,\bs{s}_d^2)$ a random diagonal matrix with entries $\bs{s}_1^2,\cdots, \bs{s}_d^2>0$. We assume  the coordinates of $\bs{b}$ (resp. $\bs{s}=(\bs{s}_1,\cdots,\bs{s}_d)$) are  independent random variables and we also assume that $\bs{b}$ and $\bs{s}$ are independent. 
In this section, we study the spectrum of the covariance operator corresponding to the different embeddings $\Phi(\bs{\mu})$ in this special gaussian case.

\begin{proposition}[KME]\label{prop:kme_eigfunc}
    Let $\bs{\mu} = \mathcal{N}(\bs{b}, \bs{S})$ with $\bs{S} = \mathrm{diag}(\bs{s}_1^2,\cdots,\bs{s}_d^2)$ and $\bs{b} = (\bs{b}_1,\cdots,\bs{b}_d)$, where the $\bs{b}_i$'s are mutually independent, the $\bs{s}_i$'s are mutually independent, and $\bs{b}$ and $\bs{s}$ are independent. Then, the KME of $\bs{\mu}$ is
    $$
    \forall x\in\mathcal{X}, \quad \Phi^{\mathrm{KME}}(\bs{\mu})(x) = x^T \bs{b},
    $$
    Furthermore, the covariance operator of $\Phi^{\mathrm{KME}}(\bs{\mu})$ for the linear kernel admits the following eigendecomposition:
    $$
    \Sigma = \sum\limits_{i=1}^d \mathrm{Var}(\bs{b}_i) f_i \otimes f_i,
    $$
    where for all $1\leq i\leq d, f_i(x) = x_i$.
\end{proposition}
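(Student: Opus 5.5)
The argument has two parts: first compute the embedding explicitly, then read off the covariance operator from the fact that the embedding lies in a fixed $d$-dimensional space of linear functions.

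\textbf{Embedding.} For the linear kernel $k(x,y)=x^Ty$ we have
$\Phi^{\mathrm{KME}}(\bs{\mu})(x)=\int x^Ty\,\mathrm{d}\bs{\mu}(y)=x^T\mathbb{E}_{Y\sim\bs{\mu}}[Y]$. Since $\bs{\mu}=\mathcal{N}(\bs{b},\bs{S})$ has mean $\bs{b}$, this gives $x^T\bs{b}$. The covariance $\bs{S}$ plays no role, because the linear kernel only sees first moments.

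\textbf{Geometry of the RKHS.} The RKHS of the linear kernel is the space of linear forms $f_w(x)=w^Tx$. Its inner product is $\langle f_w,f_v\rangle_{\mathcal{H}_k}=w^Tv$, which follows from the reproducing property $\langle k(\cdot,x),k(\cdot,y)\rangle=x^Ty$. Consequently the functions $f_i(x)=x_i=f_{e_i}$ form an orthonormal basis of $\mathcal{H}_k$, and
\[
\Phi^{\mathrm{KME}}(\bs{\mu})=\sum_{i=1}^d \bs{b}_i f_i .
\]

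\textbf{Covariance operator.} Using bilinearity of $\otimes$,
\[
\Sigma=\mathbb{E}\Bigl[\sum_{i,l}\bs{b}_i\bs{b}_l\, f_i\otimes f_l\Bigr]=\sum_{i,l}\mathbb{E}[\bs{b}_i\bs{b}_l]\, f_i\otimes f_l .
\]
Under the standing convention that $\Phi(\bs{\mu})$ is centered, we have $\mathbb{E}\bs{b}=0$. Independence of the coordinates then gives $\mathbb{E}[\bs{b}_i\bs{b}_l]=\delta_{il}\mathrm{Var}(\bs{b}_i)$, so
\[
\Sigma=\sum_{i=1}^d \mathrm{Var}(\bs{b}_i)\, f_i\otimes f_i .
\]
Because the $f_i$ are orthonormal, this is an eigendecomposition: $\Sigma f_i=\mathrm{Var}(\bs{b}_i)f_i$, and $\Sigma$ vanishes on the orthogonal complement (which is trivial here).

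\textbf{Main obstacle.} The only genuinely delicate point is the centering. If $\mathbb{E}\bs{b}\neq 0$, one must instead interpret $\Sigma$ as the centered covariance $\mathbb{E}[(\phi-\mathbb{E}\phi)\otimes(\phi-\mathbb{E}\phi)]$; otherwise the cross terms $\mathbb{E}\bs{b}_i\,\mathbb{E}\bs{b}_l$ survive. I would state this explicitly, appealing to the paper's assumption that the embedding is centered. The remaining step, identifying the RKHS norm, is routine.
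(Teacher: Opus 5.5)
Your proposal is correct and follows essentially the same route as the paper. You compute $\Phi^{\mathrm{KME}}(\bs{\mu})(x)=x^T\bs{b}$ for the linear kernel, write it as $\sum_i \bs{b}_i f_i$ with $(f_i)$ orthonormal in $\mathcal{H}_k$, and expand the covariance bilinearly; the one difference is that the paper uses the centered covariance $\mathbb{E}[\Phi\otimes\Phi]-\mathbb{E}[\Phi]\otimes\mathbb{E}[\Phi]$ directly, obtaining $\sum_{i,j}\mathrm{Cov}(\bs{b})_{ij}\, f_i\otimes f_j$ without needing $\mathbb{E}\bs{b}=0$, which is exactly the fallback you describe in your last paragraph.
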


\begin{proof}
Let us recall that, given a positive definite kernel $k$ and corresponding RKHS $\mathcal{H}_k$, the KME of a probability measure $\mu$ is:
\[
    \Phi^{\mathrm{KME}}(\mu) = \int k(x, \cdot) \, d\mu(x) \in\mathcal{H}_k.
\]
For this example, we choose the linear kernel $k(x,y) = x^T y$. In this case, the KME boils down to:
\[
    \forall x\in\mathcal{X}, \quad \Phi^{\mathrm{KME}}(\bs{\mu})(x) = \int x^T y \, d\bs{\mu}(y) = x^T \bs{b}.
\]
For $1\leq i\leq d$, let $f_i(x) = x_i$. Then $\Phi^{\mathrm{KME}}(\bs{\mu})(x) = \sum\limits_{i=1}^d \bs{b}_i x_i =\sum\limits_{i=1}^d \bs{b}_i f_i(x)$. 
Let us compute its covariance operator
\begin{align*}
\Sigma &= \mathbb{E}\!\left[\Phi^{\mathrm{KME}}(\bs{\mu}) 
\otimes 
\Phi^{\mathrm{KME}}(\bs{\mu})\right]
-
\mathbb{E}\!\left[\Phi^{\mathrm{KME}}(\bs{\mu})\right]
\otimes 
\mathbb{E}\!\left[\Phi^{\mathrm{KME}}(\bs{\mu})\right]\\
&= \mathbb{E}\!\left[\sum\limits_{i=1}^d \bs{b}_i  f_i
\otimes 
\sum\limits_{j=1}^d \bs{b}_j  f_j\right]
-
\mathbb{E}\!\left[\sum\limits_{i=1}^d \bs{b}_i  f_i\right]
\otimes 
\mathbb{E}\!\left[\sum\limits_{j=1}^d \bs{b}_j f_j\right]\\
&= \sum\limits_{i=1}^d \sum\limits_{j=1}^d \mathbb{E}\!\left[ \bs{b}_i \bs{b}_j \right]f_i \otimes f_j
-
\sum\limits_{i=1}^d \sum\limits_{j=1}^d \mathbb{E}\!\left[\bs{b}_i\right] 
  \mathbb{E}\!\left[\bs{b}_j\right] f_i
\otimes f_j\\
&= \sum\limits_{i=1}^d \sum\limits_{j=1}^d \Bigl(\mathbb{E}\!\left[ \bs{b}_i \bs{b}_j \right] - \mathbb{E}\!\left[\bs{b}_i\right] \mathbb{E}\!\left[\bs{b}_j\right]\Bigr)  f_i \otimes f_j\\
&= \sum\limits_{i=1}^d \sum\limits_{j=1}^d \mathrm{Cov}(\bs{b})_{ij} f_i \otimes f_j\\
&= \sum\limits_{i=1}^d \mathrm{Var}(\bs{b}_i)  f_i \otimes f_i.
\end{align*}
The $(f_i)_{1\leq i\leq d}$ are therefore the eigenfunctions of $\Sigma$ with corresponding eigenvalues $\mathrm{Var}(\bs{b}_i)$. Furthermore, one can check that $\forall 1\leq i \leq d, \|f_i\|_{\mathcal{H}_k}=1$ and $\forall 1\leq j\leq d$ with $j\neq i, \langle f_i,f_j\rangle_{\mathcal{H}_k} = 0$. 
\end{proof}

\begin{proposition}[LOT]\label{prop:lot_eigfunc}
    Let $\mu = \mathcal{N}(\bs{b}, \bs{S})$ with $\bs{S} = \mathrm{diag}(\bs{s}_1^2,\cdots,\bs{s}_d^2)$ and $\bs{b} = (\bs{b}_1,\cdots,\bs{b}_d)$, where the $\bs{b}_i$'s are mutually independent, the $\bs{s}_i$'s are mutually independent, and $\bs{b}$ and $\bs{s}$ are independent. Then, the LOT embedding of $\bs{\mu}$ with reference measure $\rho = \mathcal{N}(0,I_d)$ is
    $$
    \forall x\in\mathcal{X}, \quad\Phi^{\mathrm{LOT}}(\bs{\mu})(x) = \sum\limits_{i=1}^d (\bs{s}_i - 1)x_ie_i + \bs{b}.
    $$
    Furthermore, the covariance operator of $\Phi^{\mathrm{LOT}}(\bs{\mu})$ admits the following eigendecomposition:
    $$
    \Sigma = \sum\limits_{i=1}^d \mathrm{Var}(\bs{b}_i) f_i \otimes f_i + \sum\limits_{i=1}^d \mathrm{Var}(\bs{s}_i) g_i \otimes g_i,$$
    where for all $1\leq i\leq d$, $f_i(x) = e_i$ and $g_i(x) = x_ie_i$.
\end{proposition}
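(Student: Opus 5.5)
The plan is to verify the closed form of the LOT embedding first, and then compute the covariance operator directly from that closed form.

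\emph{Step 1: the Monge map.} The reference measure is $\rho=\mathcal{N}(0,I_d)$ and the target is $\bs{\mu}=\mathcal{N}(\bs{b},\bs{S})$ with $\bs{S}$ diagonal. I will invoke the standard Gaussian optimal transport formula (Equation 2.40 in \cite{peyre2019computational}, already used in the proof of Lemma \ref{lemma:hypothesis_separation_lot}). Since the reference covariance is $I_d$, it gives
$$T_{\bs{\mu}}(x)=\bs{b}+\bs{S}^{1/2}x,$$
where $\bs{S}^{1/2}=\mathrm{diag}(\bs{s}_1,\ldots,\bs{s}_d)$. Here I take $\bs{s}_i>0$, so the map is the gradient of a convex quadratic and is therefore the Brenier map. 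Subtracting the identity gives
$$\Phi^{\mathrm{LOT}}(\bs{\mu})(x)=\sum_{i=1}^d(\bs{s}_i-1)x_ie_i+\bs{b},$$
that is, $\Phi^{\mathrm{LOT}}(\bs{\mu})=\sum_i \bs{b}_i f_i+\sum_i(\bs{s}_i-1)g_i$.

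\emph{Step 2: orthonormality in $L^2(\rho)$.} Because $\rho$ is standard Gaussian, the following hold:
\begin{itemize}
\item $\langle f_i,f_j\rangle=\delta_{ij}$;
\item $\langle f_i,g_j\rangle=\delta_{ij}\,\mathbb{E}_\rho[x_i]=0$;
\item $\langle g_i,g_j\rangle=\delta_{ij}\,\mathbb{E}_\rho[x_i^2]=\delta_{ij}$.
\end{itemize}
So $\{f_i,g_i\}_{1\le i\le d}$ is an orthonormal family.

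\emph{Step 3: the covariance operator.} As in Proposition \ref{prop:kme_eigfunc}, I interpret $\Sigma$ as the centered covariance $\mathbb{E}[\Phi\otimes\Phi]-\mathbb{E}\Phi\otimes\mathbb{E}\Phi$. Expanding by bilinearity, $\Sigma$ becomes a sum over pairs of basis elements weighted by covariances of the random coefficients $(\bs{b}_1,\ldots,\bs{b}_d,\bs{s}_1-1,\ldots,\bs{s}_d-1)$. The coefficient pairs behave as follows:
\begin{itemize}
\item $(\bs{b}_i,\bs{b}_j)$ with $i\neq j$ are independent, so their cross-covariances vanish;
\item $(\bs{s}_i,\bs{s}_j)$ with $i\neq j$ are independent, so their cross-covariances vanish;
\item every pair $(\bs{b}_i,\bs{s}_j)$ is independent, since $\bs{b}$ and $\bs{s}$ are independent, so these cross-covariances vanish;
\item the diagonal terms give $\mathrm{Var}(\bs{b}_i)$ and $\mathrm{Var}(\bs{s}_i-1)=\mathrm{Var}(\bs{s}_i)$.
\end{itemize}
This yields the stated decomposition. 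Since the family from Step 2 is orthonormal, this is a genuine eigendecomposition: $f_i$ has eigenvalue $\mathrm{Var}(\bs{b}_i)$ and $g_i$ has eigenvalue $\mathrm{Var}(\bs{s}_i)$.

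The only delicate point is Step 1, namely justifying that the affine map is the optimal (Brenier) map. This requires positivity of the $\bs{s}_i$, which holds by assumption. The remaining steps are routine algebra.
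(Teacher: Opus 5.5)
Your proposal is correct and follows essentially the same route as the paper: the closed-form Gaussian Monge map $T_{\bs{\mu}}(x)=\bs{S}^{1/2}x+\bs{b}$, expansion of the centered covariance over the family $\{f_i,g_i\}$ with the cross terms removed by independence, and an orthonormality check in $L^2(\rho)$. You are in fact more explicit than the paper about the within-$\bs{b}$ and within-$\bs{s}$ cross terms and about why the affine map is the Brenier map. One small caveat: positivity of the $\bs{s}_i$ is an implicit convention rather than a stated hypothesis, since only $\bs{s}_i^2>0$ is assumed, and the paper's proof relies on it just as yours does --- otherwise $\bs{S}^{1/2}=\mathrm{diag}(|\bs{s}_1|,\ldots,|\bs{s}_d|)$ and $\mathrm{Var}(\bs{s}_i)$ would have to be replaced by $\mathrm{Var}(|\bs{s}_i|)$.
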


\begin{proof}

The LOT embedding with reference measure $\rho$ of a probability measure $\mu$ is:
$$
    \Phi^{\mathrm{LOT}}(\mu) = T_{\mu} - \mathrm{Id} \in L^2(\rho).
$$
In the Gaussian case, with $\rho = \mathcal{N}(0,I_d)$ and $\mu = \mathcal{N}(\bs{b},\bs{S})$, the LOT embedding boils down to:
$$
    \Phi^{\mathrm{LOT}}(\mu)(x) = (\bs{S}^{1/2}-I_d)x+\bs{b} = \sum\limits_{i=1}^d (\bs{s}_i - 1)x_ie_i + \bs{b}
$$
For $1\leq i\leq d$, let $f_i(x) = e_i$ and $g_i(x) = x_ie_i$.
$$
\Phi^{\mathrm{LOT}}(\bs{\mu}) = \sum\limits_{i=1}^d \bs{b}_i f_i + \sum\limits_{i=1}^d (\bs{s}_i - 1) g_i.
$$
We can now compute the covariance operator $\Sigma$ of the random variable $\Phi^{\mathrm{LOT}}(\bs{\mu})$.
\begin{align*}
 \Sigma &= \mathbb{E} \Bigl[\Phi^{\mathrm{LOT}}(\bs{\mu}) \otimes \Phi^{\mathrm{LOT}}(\bs{\mu})\Bigr]- \mathbb{E}\Bigl[\Phi^{\mathrm{LOT}}(\bs{\mu})\Bigr] \otimes \mathbb{E}\Bigl[\Phi^{\mathrm{LOT}}(\bs{\mu})\Bigr]\\
  &= \mathbb{E}\Bigl[\bigl(\sum\limits_{i=1}^d \bs{b}_i f_i + \sum\limits_{i=1}^d (\bs{s}_i - 1) g_i\bigr) \otimes \bigl(\sum\limits_{j=1}^d \bs{b}_j f_j + \sum\limits_{j=1}^d (\bs{s}_j - 1) g_j\bigr)\Bigr] - \mathbb{E}\Bigl[\sum\limits_{i=1}^d \bs{b}_i f_i + \sum\limits_{i=1}^d (\bs{s}_i - 1) g_i\Bigr] \otimes \mathbb{E}\Bigl[\sum\limits_{j=1}^d \bs{b}_j f_j + \sum\limits_{j=1}^d (\bs{s}_j - 1) g_j\Bigr]
\end{align*}
Since $\bs{b}$ and $\bs{s}$ are independent, all cross-terms between $f_i$ and $g_j$ vanish and we obtain:
\begin{align*}
 \Sigma &= \sum\limits_{i=1}^d \mathbb{E}[\bs{b}_i^2] f_i \otimes f_i +  \sum\limits_{i=1}^d \mathbb{E}[(\bs{s}_i - 1)^2] g_i \otimes g_i - \sum\limits_{i=1}^d \mathbb{E}[\bs{b}_i]^2 f_i \otimes f_i - \sum\limits_{i=1}^d \mathbb{E}[(\bs{s}_i - 1)]^2 g_i \otimes g_i\\
 &= \sum\limits_{i=1}^d \mathrm{Var}(\bs{b}_i) f_i \otimes f_i + \sum\limits_{i=1}^d \mathrm{Var}(\bs{s}_i - 1) g_i \otimes g_i\\
  &= \sum\limits_{i=1}^d \mathrm{Var}(\bs{b}_i) f_i \otimes f_i + \sum\limits_{i=1}^d \mathrm{Var}(\bs{s}_i) g_i \otimes g_i\\
\end{align*}
One can check that $\|f_i\|_{L^2(\rho)} = \|g_i\|_{L^2(\rho)} =1$, and that $\forall i\neq j, \langle f_i,g_j\rangle_{L^2(\rho)} = \langle f_i,f_j\rangle_{L^2(\rho)} = \langle g_i,g_j\rangle_{L^2(\rho)} = \langle f_i,g_i\rangle_{L^2(\rho)} = 0.$

\end{proof}

\begin{proposition}[SW]\label{prop:sw_eigfunc}
    Let $\mu = \mathcal{N}(\bs{b}, \bs{s}^2 I_d)$ with $\bs{b} = (\bs{b}_1,\cdots,\bs{b}_d)$, where the $\bs{b}_i$'s are mutually independent, and $\bs{b}$ and $\bs{s}$ are independent. Then, the SW embedding of $\bs{\mu}$ is
    $$
    \forall t\in[0,1], \forall \theta\in\mathbb{S}^{d-1}, \quad \Phi^{\mathrm{SW}}(\bs{\mu})(t,\theta) = \theta^T \bs{b} + \sqrt{2}\bs{s} ~\mathrm{erf}^{-1}(2t-1).
    $$
    Furthermore, the covariance operator of $\Phi^{\mathrm{SW}}(\bs{\mu})$ admits the following eigendecomposition:
   $$
    \Sigma = \frac{1}{d}\sum\limits_{i=1}^d \mathrm{Var}(\bs{b}_i) f_i \otimes f_i + \mathrm{Var}(\bs{s}) g \otimes g,
   $$
   where $g(t,\theta) = \sqrt{2}\mathrm{erf}^{-1}(2t-1)$ and for all $1\leq i\leq d, f_i(t,\theta) = \sqrt{d} _i$.
\end{proposition}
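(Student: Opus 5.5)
The proof has three parts: derive the closed form of the SW embedding, expand it in a fixed family of functions, and read the covariance off the coefficients, as in Propositions \ref{prop:kme_eigfunc} and \ref{prop:lot_eigfunc}.

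\textbf{Closed form.} Fix $\theta\in\mathbb{S}^{d-1}$. Since $\bs{\mu}=\mathcal{N}(\bs{b},\bs{s}^2 I_d)$, the pushforward is $P^\theta_\#\bs{\mu}=\mathcal{N}(\theta^T\bs{b},\bs{s}^2\|\theta\|^2)=\mathcal{N}(\theta^T\bs{b},\bs{s}^2)$, because $\|\theta\|=1$. The quantile function of a univariate $\mathcal{N}(a,\sigma^2)$ is $a+\sigma\sqrt{2}\,\mathrm{erf}^{-1}(2t-1)$, which gives the stated formula for $\Phi^{\mathrm{SW}}(\bs{\mu})(t,\theta)$. (Here $\bs{s}>0$ is understood as the standard deviation.)

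\textbf{Expansion.} Write $\Phi^{\mathrm{SW}}(\bs{\mu})=\frac{1}{\sqrt{d}}\sum_{i=1}^d \bs{b}_i f_i+\bs{s}\,g$ with $f_i(t,\theta)=\sqrt{d}\,\theta_i$ and $g(t,\theta)=\sqrt{2}\,\mathrm{erf}^{-1}(2t-1)$. Next I check orthonormality in $L^2([0,1]\times\mathbb{S}^{d-1})$, with $\mathrm{d}t\,\mathrm{d}\sigma(\theta)$ and $\sigma$ a probability measure.
\begin{itemize}
\item $\langle f_i,f_j\rangle=d\int\theta_i\theta_j\,\mathrm{d}\sigma=\delta_{ij}$, using $\int\theta\theta^T\mathrm{d}\sigma=\frac1d I_d$ (as in the proof of Lemma \ref{lemma:hypothesis_separation_sw}).
\item $\|g\|^2=1$, since $g(U)$ is standard normal for $U$ uniform.
\item $\langle f_i,g\rangle=\sqrt{d}\int\theta_i\,\mathrm{d}\sigma\int_0^1 g\,\mathrm{d}t=0$. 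Each factor vanishes: $\int\theta_i\,\mathrm{d}\sigma=0$ by symmetry of $\sigma$, and $\int_0^1 g\,\mathrm{d}t=0$ because $g$ is odd about $t=1/2$.
\end{itemize}

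\textbf{Covariance.} As in the KME and LOT cases, use the centered covariance $\mathbb{E}[\Phi\otimes\Phi]-\mathbb{E}\Phi\otimes\mathbb{E}\Phi$. Bilinearity of $\otimes$ gives
\[
\Sigma=\sum_{i,j}\tfrac1d\mathrm{Cov}(\bs{b}_i,\bs{b}_j)f_i\otimes f_j+\tfrac{1}{\sqrt d}\sum_i\mathrm{Cov}(\bs{b}_i,\bs{s})(f_i\otimes g+g\otimes f_i)+\mathrm{Var}(\bs{s})\,g\otimes g.
\]
Independence of the $\bs{b}_i$ among themselves kills the off-diagonal $f_i\otimes f_j$ terms. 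Independence of $\bs{b}$ and $\bs{s}$ kills the cross terms. What remains is the stated form.

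Because $\{f_1,\dots,f_d,g\}$ is orthonormal, this expression is a spectral decomposition: $\Sigma f_i=\frac1d\mathrm{Var}(\bs{b}_i)f_i$, $\Sigma g=\mathrm{Var}(\bs{s})g$, and $\Sigma$ vanishes on the orthogonal complement.

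The only step needing care is the orthogonality computation on the sphere, namely the second moment $\int\theta\theta^T\mathrm{d}\sigma=\frac1d I_d$. It follows from rotation invariance, which forces this matrix to be a multiple of $I_d$, combined with $\mathrm{Tr}\,\theta\theta^T=1$. The rest is routine bilinear bookkeeping.
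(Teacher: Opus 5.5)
Your proof is correct and follows the same argument as the paper. You project $\bs{\mu}$ onto $\theta$, apply the Gaussian quantile formula, expand $\Phi^{\mathrm{SW}}(\bs{\mu}) = \frac{1}{\sqrt{d}}\sum_i \bs{b}_i f_i + \bs{s}\,g$, and remove the cross terms of the centered covariance by independence.

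Beyond the paper, you actually carry out the orthonormality check of $\{f_1,\dots,f_d,g\}$, using $\int \theta\theta^T\,\mathrm{d}\sigma = \frac{1}{d}I_d$ and the fact that $g(U)$ is standard normal for $U$ uniform on $[0,1]$, which the paper leaves as ``one can check.'' Your remark that $\bs{s}$ must be read as a nonnegative standard deviation is also a worthwhile caveat.
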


\begin{proof} 
The Sliced Wasserstein embedding of a probability measure is:
$$
\Phi^{\mathrm{SW}}(\mu)(t, \theta) = F^{-1}_{\mu,\theta}(t),
$$
where $F^{-1}_{\mu, \theta}$ is the quantile function of $P^{\theta}_{\#}\mu$. When $\bs{\mu} = \mathcal{N}(\bs{b}, \bs{S})$, the projection has a closed form $P^{\theta}_{\#}\bs{\mu} = \mathcal{N}(\theta^T \bs{b}, \theta^T \bs{S} \theta)$. Furthermore, for a one-dimensional Gaussian measure $\nu = \mathcal{N}(m, \sigma)$ with $m\in\mathbb{R}$ the mean and $\sigma^2\in\mathbb{R}_+$ the variance, the quantile function also has a closed form $F^{-1}_{\nu}(t) = \theta^T b + \sigma\sqrt{2} \mathrm{erf}^{-1}(2t-1)$, where $\mathrm{erf}^{-1}$ is the inverse error function. This gives:
$$
\forall \theta \in\mathbb{S}^{d-1}, \forall t\in [0,1], \qquad F^{-1}_{\mu,\theta}(t) = \theta^T \bs{b} + \sqrt{2\theta^T  \bs{S} \theta} ~\mathrm{erf}^{-1}(2t-1).
$$
    If $\bs{S} = \bs{s}^2 I_d$, then
$$
\forall \theta \in\mathbb{S}^{d-1}, \forall t\in [0,1], \qquad \Phi^{\mathrm{SW}}(\bs{\mu})(t,\theta) = \theta^T \bs{b} + \sqrt{2}\bs{s} ~\mathrm{erf}^{-1}(2t-1).
$$
For $1\leq i\leq d$, let $f_i(t,\theta) = \sqrt{d} \theta_i$ and $g(t,\theta) = \sqrt{2} \mathrm{erf}^{-1}(2t-1)$. We can rewrite the embedding as:
$$
\Phi^{\mathrm{SW}}(\bs{\mu})(t,\theta) = \frac{1}{\sqrt{d}}\sum\limits_{i=1}^d \bs{b}_i f_i(\theta) + \bs{s} g(t,\theta).
$$
We can now compute the covariance operator $\Sigma$ of the random variable $\Phi^{\mathrm{SW}}(\bs{\mu})$.
\begin{align*}
\Sigma &= \mathbb{E} \Bigl[\Phi^{\mathrm{SW}}(\bs{\mu}) \otimes \Phi^{\mathrm{SW}}(\bs{\mu})\Bigr]- \mathbb{E}\Bigl[\Phi^{\mathrm{SW}}(\bs{\mu})\Bigr] \otimes \mathbb{E}\Bigl[\Phi^{\mathrm{SW}}(\bs{\mu})\Bigr]\\
&= \mathbb{E}\Bigl[\bigl(\frac{1}{\sqrt{d}}\sum\limits_{i=1}^d \bs{b}_i f_i + \bs{s} g\bigr) \otimes \bigl(\frac{1}{\sqrt{d}}\sum\limits_{j=1}^d \bs{b}_j f_j + \bs{s} g\bigr)\Bigr] - \mathbb{E}\Bigl[\frac{1}{\sqrt{d}}\sum\limits_{i=1}^d \bs{b}_i f_i + \bs{s} g\Bigr] \otimes \mathbb{E}\Bigl[\frac{1}{\sqrt{d}}\sum\limits_{j=1}^d \bs{b}_j f_j + \bs{s} g\Bigr]
\end{align*}
Since $\bs{b}$ and $\bs{s}$ are independent, all cross-terms between $f_i$ and $g$ vanish and we obtain:
\begin{align*}
\Sigma &= \frac{1}{d}\sum\limits_{i=1}^d \mathbb{E}[\bs{b}_i^2] f_i \otimes f_i +  \mathbb{E}[\bs{s}^2] g \otimes g - \frac{1}{d}\sum\limits_{i=1}^d \mathbb{E}[\bs{b}_i]^2 f_i \otimes f_i - \mathbb{E}[\bs{s}]^2 g \otimes g\\
&= \frac{1}{d}\sum\limits_{i=1}^d \mathrm{Var}(\bs{b}_i) f_i \otimes f_i + \mathrm{Var}(\bs{s}) g \otimes g.
\end{align*}
One can check that $\|f_i\|_{L^2([0,1] \times \mathbb{S}^{d-1})} = \|g\|_{L^2([0,1] \times \mathbb{S}^{d-1})} =1$, and that $\forall i\neq j, \langle f_i,g\rangle_{L^2([0,1] \times \mathbb{S}^{d-1})} = \langle f_i,f_j\rangle_{L^2([0,1] \times \mathbb{S}^{d-1})} = 0.$

\end{proof}

%% file: operathor_theory.tex
\section{Operator theory on Hilbert spaces}\label{app:operators}

In this section, we take a closer look at linear continuous maps on Hilbert spaces, often called \emph{bounded linear operators}. For a more general treatment, we refer to   \cite{conway1990course} and \cite{reed1978iv}. Let $\mathcal{H}$ be a Hilbert space with inner-product $\langle \cdot,\cdot\rangle_{\mathcal{H}}$ and norm $\|\cdot\|_{\mathcal{H}}$. For a bounded linear operator $A:\mathcal{H}\rightarrow \mathcal{H}$, the operator norm $\|\cdot\|_{\mathrm{op}}$ can be defined in several ways:
\begin{align*}
    \|A\|_{\mathrm{op}} & = \inf\limits_{f\in\mathcal{H}}\{ c\geq 0~:~ \|Af\|_{\mathcal{H}} \leq c\|f\|_{\mathcal{H}} \}   \\
                        & = \sup\limits_{f\in\mathcal{H}}\{\|Af\|_{\mathcal{H}} : \|f\|_{\mathcal{H}}\leq 1\}         \\
                        & = \sup\limits_{f\in\mathcal{H}}\{\|Af\|_{\mathcal{H}} ~:~ \|f\|_{\mathcal{H}}= 1\}              \\
                        & = \sup\limits_{f\in\mathcal{H}}\Bigl\{\frac{\|Af\|_{\mathcal{H}}}{\|f\|_{\mathcal{H}}} ~:~ f\neq 0 \Bigr\} \\
\end{align*}
 A useful definition for a positive bounded linear operator $A:\mathcal{H}\rightarrow \mathcal{H}$ is its trace:
$$
    \mathrm{Tr}(A) = \sum\limits_{i\geq 0} \langle Ae_i,e_i\rangle_{\mathcal{H}},
$$
where $e_1,e_2,\cdots$ is an orthonormal basis of $\mathcal{H}$. A bounded linear operator $A$ is then called \emph{trace class} when
$$
    \mathrm{Tr}(|A|) < +\infty.
$$
Another norm defined for a bounded operator $A:\mathcal{H}\rightarrow \mathcal{H}$ is the \emph{Hilbert-Schmidt norm:}
$$
    \|A\|_{\mathrm{HS}}^2 = \sum\limits_{i\geq 0}\|Ae_i\|_{\mathcal{H}}^2,
$$
where $e_1,e_2,\cdots$ is an orthonormal basis of $\mathcal{H}$. For finite-dimensional Euclidean spaces, the Hilbert-Schmidt norm corresponds to the Frobenius norm. An \emph{Hilbert-Schmidt operator} is a bounded linear operator which has finite Hilbert-Schmidt norm. One can then define the \emph{Hilbert-Schmidt inner product} between two Hilbert-Schmidt operators $A$ and $B$ as
$$
    \langle A,B\rangle_{\mathrm{HS}} = \mathrm{Tr}(B^* A) = \sum\limits_{i\geq 0} \langle Ae_i,Be_i\rangle_{\mathcal{H}},
$$
where $B^*$ denotes the adjoint operator of $B$. For $f,g\in\mathcal{H}$, define the operator $f\otimes g:\mathcal{H}\rightarrow \mathcal{H}$ such that for $h\in\mathcal{H}$, $(f\otimes g)(h) = \langle f,h\rangle_{\mathcal{H}}g$. This rank-one operator is Hilbert-Schmidt and satisfies

\begin{equation}\label{eq:trace_tensorprod}
    \mathrm{Tr}(A(f\otimes g)) = \langle Ag,f\rangle_{\mathcal{H}}
\end{equation}
for any bounded linear operator $A:\mathcal{H}\rightarrow \mathcal{H}$.

We now introduce compact operators, which are closely analogous to matrices acting on finite-dimensional spaces.
\begin{definition}
    Let $A:\mathcal{H}\rightarrow \mathcal{H}$ be a bounded linear operator. The operator $A$ is compact if for every bounded set $B\subset\mathcal{H}$, the closure of $A(B)$ is compact in $\mathcal{H}$.
\end{definition}

Equivalently, the operator $A$ is compact if for every bounded sequence $f_n$ in $\mathcal{H}$, $Af_n$ contains a convergent subsequence. One can summarize the relationships between some classes of operators for an infinite-dimensional Hilbert space $\mathcal{H}$ in the following way:
$$
    \bigl\{ \mathrm{finite~rank} \bigr\}\subseteq
    \bigl \{\mathrm{trace~class}\bigr\}\subseteq \bigl\{\mathrm{Hilbert-Schmidt}\bigr\} \subseteq \bigl\{\mathrm{compact}\bigr\}
$$
As this work deals with PCA, we are particularly concerned with projections, which are linear operators $P:\mathcal{H}\rightarrow \mathcal{H}$ that verify $P^2 = P$. The following definition allows us to introduce orthogonal projections.

\begin{definition}
    An operator $A:\mathcal{H}\rightarrow \mathcal{H}$ is called self-adjoint if for every $f,g\in\mathcal{H}$,
    $$
        \langle Af,g\rangle_{\mathcal{H}} = \langle f,Ag\rangle_{\mathcal{H}}
    $$
\end{definition}

In Hilbert spaces, a projection is orthogonal if and only if it is self-adjoint. An orthogonal projection is bounded operator.

Let us now discuss spectral theory. In linear algebra, a real symmetric matrix is diagonalizable via an orthogonal matrix. This result, known as the spectral theorem, can be extended to self-adjoint compact operators.

\begin{theorem}\label{th:spectral_th}
    Let $A:\mathcal{H}\rightarrow \mathcal{H}$ be a self-adjoint compact operator. Then, there is an orthonormal basis $e_1,e_2,\cdots$ of $\mathcal{H}$ consisting of eigenvectors of $A$. Each eigenvalue is real.
\end{theorem}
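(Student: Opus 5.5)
This is the spectral theorem for self-adjoint compact operators, which the paper states as a standard result. The proof I would give follows the classical route in three steps: (1) produce one eigenvector whose eigenvalue has modulus $\|A\|_{\mathrm{op}}$; (2) peel off eigenvectors inductively on orthogonal complements; (3) handle the kernel to complete the basis. Reality of the eigenvalues is immediate. If $Af=\lambda f$ with $f\neq 0$, then self-adjointness gives $\lambda\|f\|^2=\langle Af,f\rangle=\langle f,Af\rangle=\bar\lambda\|f\|^2$, so $\lambda\in\mathbb{R}$. The same identity shows that eigenvectors for distinct eigenvalues are orthogonal.

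\textbf{Step 1 (an extremal eigenvector).} Assume $A\neq 0$. I would first prove that for self-adjoint $A$,
\[
\|A\|_{\mathrm{op}}=\sup_{\|f\|=1}|\langle Af,f\rangle|.
\]
This follows from the polarization identity applied to $\langle A(f\pm g),f\pm g\rangle$ together with the parallelogram law. Next, choose unit vectors $f_k$ with $\langle Af_k,f_k\rangle\to\lambda$, where $|\lambda|=\|A\|_{\mathrm{op}}$. Expanding the square gives
\[
\|Af_k-\lambda f_k\|^2\le \|A\|_{\mathrm{op}}^2-2\lambda\langle Af_k,f_k\rangle+\lambda^2\to 0 .
\]
By compactness, a subsequence $Af_{k}$ converges to some $g$. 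Then $\lambda f_k\to g$, so $f_k\to g/\lambda$, which is a unit vector. Continuity of $A$ gives $A(g/\lambda)=\lambda (g/\lambda)$. This is the main obstacle: it is the only place where compactness enters, and the norm identity above has to be established carefully first.

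\textbf{Step 2 (induction on orthogonal complements).} Let $e_1$ be the eigenvector from Step 1 and set $\mathcal{H}_1=\{e_1\}^\perp$. This subspace is invariant under $A$, since $\langle Af,e_1\rangle=\langle f,Ae_1\rangle=\lambda_1\langle f,e_1\rangle=0$ for $f\in\mathcal{H}_1$. The restriction $A|_{\mathcal{H}_1}$ is again compact and self-adjoint, so Step 1 applies to it. Iterating yields orthonormal eigenvectors $e_1,e_2,\dots$ with $|\lambda_1|\ge|\lambda_2|\ge\cdots$. The process stops if some restriction vanishes; otherwise it runs forever, and then $\lambda_k\to 0$. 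To see this, note that if $|\lambda_k|\ge\delta>0$ for all $k$, then $\|Ae_k-Ae_l\|^2=\lambda_k^2+\lambda_l^2\ge 2\delta^2$ for $k\neq l$. So $(Ae_k)$ has no convergent subsequence, contradicting compactness.

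\textbf{Step 3 (completion).} Let $M=\overline{\mathrm{span}}\{e_k\}$. For $f\in M^\perp$, $f$ lies in every $\mathcal{H}_k$, so $\|Af\|\le|\lambda_{k+1}|\,\|f\|$ for all $k$. Letting $k\to\infty$ gives $Af=0$; in the terminating case the last restriction is already zero. Hence $M^\perp\subseteq\ker A$. I then adjoin an orthonormal basis of $M^\perp$, each element an eigenvector with eigenvalue $0$. This basis is countable because $\mathcal{H}$ is separable, as assumed throughout the paper. Together with the $e_k$, this gives an orthonormal basis of $\mathcal{H}$ consisting of eigenvectors of $A$, all with real eigenvalues.
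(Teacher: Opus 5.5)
Your proof is correct and complete. The paper does not prove this statement: it is quoted as a classical fact in the operator-theory appendix, with a pointer to the Conway and Reed--Simon texts, so there is no in-paper argument to compare against. What you give is the standard variational (Rayleigh-quotient) proof. You also correctly identify separability as what makes the basis countable, which the enumeration $e_1,e_2,\dots$ in the statement requires.

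For comparison, the other common textbook route works as follows. It first develops Riesz--Schauder theory, which says every nonzero spectral value of a compact operator is an eigenvalue of finite multiplicity. It then takes $M$ to be the closed span of all eigenvectors. Finally it kills $A|_{M^\perp}$ by combining $\sigma(A|_{M^\perp})\subseteq\{0\}$ with the identity $\|B\|_{\mathrm{op}}=r(B)$ for self-adjoint $B$. That route leans on general spectral machinery, which also covers non-self-adjoint compact operators.

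Your route avoids spectral theory entirely; it needs only $\|A\|_{\mathrm{op}}=\sup_{\|f\|=1}|\langle Af,f\rangle|$ and sequential compactness. It also yields, as by-products, the ordering $|\lambda_1|\ge|\lambda_2|\ge\cdots$, the limit $\lambda_k\to0$, and the identity $|\lambda_{k+1}|=\|A|_{\mathcal{H}_k}\|_{\mathrm{op}}$ with $\mathcal{H}_k=\{e_1,\dots,e_k\}^\perp$. This is the same extremal mechanism that underlies the min--max principle (Theorem \ref{th:minmax_principle}) the paper uses next.

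Two small remarks. First, in Step 3 you should state explicitly why $\|A|_{\mathcal{H}_k}\|_{\mathrm{op}}=|\lambda_{k+1}|$: Step 1, applied to that restriction, produces an eigenvalue whose modulus equals its norm. Second, your comment that Step 1 is the only place compactness enters is inaccurate. Step 2 uses compactness again, both for compactness of the restrictions and for $\lambda_k\to0$, and Step 3 depends on the latter.
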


Therefore, for any self-adjoint compact operator $A$, there exists an orthonornmal basis $e_1,e_2,\cdots\in\mathcal{H}$ and eigenvalues $\lambda_1,\lambda_2,\cdots \in\mathbb{R}$ such that
$$
    A = \sum\limits_{i\geq 0} \lambda_i \cdot e_i \otimes e_i,
$$
The following result is known under the name min-max theorem or Courant-Fischer-Weyl min-max principle and gives a variational characterization of eigenvalues of compact, self-adjoint operators on Hilbert spaces.

\begin{theorem}\label{th:minmax_principle}
    Let $A$ be a compact, self-adjoint operator on a Hilbert space $\mathcal{H}$ with eigenvalues $\lambda_1 \geq \lambda_2\geq\cdots$. Then,
    $$
        \lambda_k = \max\limits_{V_k} \min\limits_{\substack{v\in V_k\\\|v\|_{\mathcal{H}}=1}} \langle Av,v\rangle_{\mathcal{H}}
    $$
    and, 
    $$
    \lambda_k = \min\limits_{V_{k-1}} \max\limits_{\substack{v \perp V_{k-1}\\\|v\|_{\mathcal{H}}=1}} \langle Av,v\rangle_{\mathcal{H}},
    $$
    where $V_k \subseteq \mathcal{H}$ denotes a $k$-dimensional subspace of $\mathcal{H}$.

\end{theorem}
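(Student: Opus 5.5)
The plan is to reduce everything to the eigenbasis given by Theorem \ref{th:spectral_th}. Let $(e_i)_{i\ge1}$ be an orthonormal family of eigenvectors with $Ae_i=\lambda_i e_i$ and $\lambda_1\ge\lambda_2\ge\cdots$. As in the paper, I read $\lambda_k$ as the $k$-th largest eigenvalue counted with multiplicity, and I assume the part of the spectrum lying above any eigenvalue considered is listed first. This is automatic for the positive covariance-type operators used in the paper; if $A$ has infinitely many negative eigenvalues, the statement is understood for the $k$ such that $\lambda_k$ is attained in this ordering. For $v=\sum_i c_i e_i+v_0$ with $v_0\in\ker A$, the quadratic form is
\[
\langle Av,v\rangle_{\mathcal H}=\sum_i \lambda_i |c_i|^2 .
\]
This expansion is the only tool needed. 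Write $E_j=\mathrm{span}(e_1,\dots,e_j)$.

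For the max-min formula I prove two inequalities. For the lower bound, take $V_k=E_k$. Any unit $v=\sum_{i\le k}c_ie_i$ satisfies $\langle Av,v\rangle_{\mathcal H}=\sum_{i\le k}\lambda_i|c_i|^2\ge\lambda_k$, with equality at $e_k$, so the inner minimum equals $\lambda_k$. For the upper bound, fix an arbitrary $k$-dimensional $V_k$ and let $W=E_{k-1}^{\perp}$, a closed subspace of codimension $k-1$. Then $V_k\cap W\neq\{0\}$: the map $V_k\to E_{k-1}$, $v\mapsto P_{E_{k-1}}v$, goes from a $k$-dimensional space to a $(k-1)$-dimensional one, so it has nontrivial kernel. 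Pick a unit $v$ in this intersection. Its expansion involves only $e_i$ with $i\ge k$ together with kernel vectors, and all of these carry eigenvalue $\le\lambda_k$. Hence $\langle Av,v\rangle_{\mathcal H}\le\lambda_k$, so the inner minimum over $V_k$ is at most $\lambda_k$.

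Both optima are genuinely attained. The inner minimum is over the unit sphere of a finite-dimensional space, which is compact, and the outer maximum is attained at $E_k$.

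For the min-max formula the argument is dual. With $V_{k-1}=E_{k-1}$, every unit $v\perp E_{k-1}$ has $\langle Av,v\rangle_{\mathcal H}\le\lambda_k$ by the expansion, and $v=e_k$ attains $\lambda_k$. For an arbitrary $(k-1)$-dimensional $V_{k-1}$, the same dimension count applied to $E_k\to V_{k-1}$ gives a unit $v\in E_k$ with $v\perp V_{k-1}$, and this $v$ satisfies $\langle Av,v\rangle_{\mathcal H}\ge\lambda_k$. So the inner supremum is always at least $\lambda_k$, and the minimum over $V_{k-1}$ is exactly $\lambda_k$.

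The main obstacle is not the algebra but the bookkeeping in infinite dimensions. First, the upper-bound step must treat the orthogonal complement of $E_{k-1}$ as containing $\ker A$ and possibly negative eigenvectors; this is harmless only because all of them have eigenvalue $\le\lambda_k$ under the ordering convention, which is why I fix that convention first. Second, the ``max'' in the second formula must be justified as a maximum rather than a supremum. When $\lambda_k$ is attained by $e_k$ this is immediate, and I would note this explicitly rather than invoke weak compactness.
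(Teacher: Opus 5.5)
The paper does not prove this statement. It quotes it as the classical Courant--Fischer--Weyl principle, with a pointer to the operator-theory references, and only Corollary~\ref{cor:minmax_eig} is derived from it. There is therefore no paper route to compare with. Your argument is the standard self-contained proof: expand $\langle Av,v\rangle_{\mathcal H}$ in the eigenbasis of Theorem~\ref{th:spectral_th}, test on $E_k$ and $E_{k-1}$, and use the dimension counts $V_k\cap E_{k-1}^{\perp}\neq\{0\}$ and $E_k\cap V_{k-1}^{\perp}\neq\{0\}$. It is correct.

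A few refinements follow.

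\emph{The ordering convention.} It is necessary, not cosmetic. For $A=\mathrm{diag}(1,-\tfrac12,-\tfrac13,\dots)$ on $\ell^2$ there is no second term in a decreasing enumeration, and $\sup_{V_2}\min_{v\in V_2,\|v\|=1}\langle Av,v\rangle=0$ is not attained. In infinite dimensions the convention actually forces $\lambda_k\ge 0$. If $\lambda_k<0$, all eigenvalues above it would appear among $\lambda_1,\dots,\lambda_{k-1}$ with multiplicity. Then $\ker A$ would be finite-dimensional and only finitely many eigenvalues would lie in $(\lambda_k,\infty)$. The eigenvalues in $[-\|A\|_{\mathrm{op}},\lambda_k]$ are also finitely many with finite multiplicity, so $\mathcal H$ would be finite-dimensional. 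This is the precise reason the kernel vectors in $E_{k-1}^{\perp}$ are harmless.

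Your remark that the convention is automatic in the paper is only half right. The place where the paper invokes the principle is for $\Delta=\Sigma^n-\Sigma$. That operator is indefinite and can have fewer than $q$ positive eigenvalues and a trivial kernel, so there $\lambda_j(\Delta)$ has to be read as the max-min value with $\sup$.

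\emph{Attainment of the inner max.} In the min-max formula you only show that the inner $\max$ is attained for $V_{k-1}=E_{k-1}$; for other $V_{k-1}$ you have a supremum. This closes quickly. If the supremum over unit $v\perp V_{k-1}$ equals $\lambda_k$, the vector of $E_k\cap V_{k-1}^{\perp}$ you constructed attains it. If it is larger then, in infinite dimensions where $\lambda_k\ge 0$, it is positive. It therefore equals the largest eigenvalue of the compact self-adjoint compression of $A$ to $V_{k-1}^{\perp}$ and is attained at an eigenvector. In finite dimensions, compactness of the unit sphere suffices.

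\emph{The expansion.} Writing $v=\sum_i c_ie_i+v_0$ with $v_0\in\ker A$ omits eigenvectors of negative eigenvalues that never enter the enumeration, for instance when there are infinitely many positive eigenvalues. Write it over a full orthonormal eigenbasis, as your last paragraph implicitly does.
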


From this theorem, we deduce a corollary which will be useful in our proofs.

\begin{corollary}\label{cor:minmax_eig}
    Let $A:\mathcal{H}\rightarrow \mathcal{H}$ be a self-adjoint, compact operator. Let $u_1,u_2,\cdots$ be an orthonormal basis of $\mathcal{H}$ and define the rank-one projectors $P_k = u_k \otimes u_k$. Define $P_{<j} = \sum\limits_{1\leq k< j} P_k$ and $P_{\geq j} = \sum\limits_{k\geq j} P_k$. Then,
    $$
        \lambda_j(A) \leq \lambda_1(P_{\geq j}A P_{\geq j})
    $$
\end{corollary}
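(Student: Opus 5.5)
We prove Corollary \ref{cor:minmax_eig} with the second (min-max) characterization in Theorem \ref{th:minmax_principle}, taking the subspace $V_{j-1} = \mathrm{span}(u_1,\dots,u_{j-1})$, i.e. the range of $P_{<j}$. Since $\lambda_j(A)$ is a minimum over all $(j-1)$-dimensional subspaces, evaluating at this particular subspace gives
$$
\lambda_j(A) \leq \sup_{\substack{v\perp V_{j-1}\\ \|v\|_{\mathcal{H}}=1}} \langle Av, v\rangle_{\mathcal{H}}.
$$
If $\mathcal{H}$ is finite dimensional of dimension less than $j$, the statement is vacuous. Otherwise the constraint set is nonempty.

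The second step identifies the right-hand side with $\lambda_1(P_{\geq j}AP_{\geq j})$. The vectors $v\perp V_{j-1}$ are exactly those with $P_{\geq j}v = v$, because $P_{\geq j} = I - P_{<j}$ is the orthogonal projector onto $V_{j-1}^\perp$. For such $v$,
$$
\langle Av,v\rangle_{\mathcal{H}} = \langle AP_{\geq j}v, P_{\geq j}v\rangle_{\mathcal{H}} = \langle P_{\geq j}AP_{\geq j}v, v\rangle_{\mathcal{H}},
$$
using that $P_{\geq j}$ is self-adjoint. The operator $B := P_{\geq j}AP_{\geq j}$ is self-adjoint and compact, since a product of a compact operator with bounded ones is compact. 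Hence the first case of Theorem \ref{th:minmax_principle} with $k=1$ gives $\lambda_1(B) = \sup_{\|v\|=1}\langle Bv,v\rangle_{\mathcal{H}}$, and this supremum is at least the supremum over unit vectors in $V_{j-1}^\perp$. Chaining the two displays yields $\lambda_j(A)\le \lambda_1(B)$.

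The main obstacle is the sign convention for eigenvalues. Theorem \ref{th:minmax_principle} lists eigenvalues in decreasing order, but for a compact operator that is not positive, the top of the spectrum can be $0$ (an accumulation point) rather than an eigenvalue. I would adopt the convention, standard for the min-max principle, that $\lambda_k$ denotes the $k$-th largest value in the min-max sense, with $0$ filling in when there are fewer positive eigenvalues. Under this convention $\lambda_1(B) = \max\{\sup \text{spec},0\}$ may exceed the supremum of $\langle Bv,v\rangle_{\mathcal{H}}$ only when that supremum is negative, and this only helps the inequality. Note also that $B$ vanishes on $V_{j-1}$, so $\sup_{\|v\|=1}\langle Bv,v\rangle_{\mathcal{H}}\ge 0$ whenever $j\ge 2$, and the identification in the second step is consistent. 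No other subtlety arises.
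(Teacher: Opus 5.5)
Your proof is correct and follows the paper's argument. Both take $V_{j-1}=\mathrm{Im}(P_{<j})=\mathrm{span}(u_1,\dots,u_{j-1})$ in the min-max characterization of $\lambda_j(A)$ and use that $\langle Av,v\rangle_{\mathcal{H}}=\langle P_{\geq j}AP_{\geq j}v,v\rangle_{\mathcal{H}}$ for unit $v\in\mathrm{Im}(P_{\geq j})$. Both then bound this restricted supremum by the $k=1$ characterization of $\lambda_1(P_{\geq j}AP_{\geq j})$; the paper simply runs the chain of inequalities starting from $\lambda_1$ instead of $\lambda_j$, and your $\sup$ is slightly more careful than its $\max$, which need not be attained in infinite dimensions.
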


\begin{proof}
    The min-max Theorem \ref{th:minmax_principle} applied to $k=1$ implies that
    $$
        \lambda_1(P_{\geq j} A P_{\geq j}) = \min\limits_{V_{0} \subset \mathcal{H}} \max\limits_{\substack{u\perp V_{0}\\ \|u\|_{\mathcal{H}}=1}}  \langle P_{\geq j} A P_{\geq j} u, u\rangle_{\mathcal{H}},
    $$
    where $V_{0}$ is a $0$-dimensional subspace of $\mathcal{H}$. It is therefore necessarily the set $\{0_{\mathcal{H}}\}$.
    \begin{align*}
        \lambda_1(P_{\geq j} A P_{\geq j}) & = \max\limits_{\substack{u\in\mathcal{H} \\ \|u\|_{\mathcal{H}}=1}}\langle P_{\geq j} A P_{\geq j} u, u\rangle_{\mathcal{H}}\\
                                           & = \max\limits_{\substack{u\in\mathcal{H} \\ \|u\|_{\mathcal{H}}=1}}\langle  A P_{\geq j} u, P_{\geq j} u\rangle_{\mathcal{H}}\\
                                           & = \max\limits_{\substack{u\in\mathcal{H} \\ \|u\|_{\mathcal{H}}=1\\ v=P_{\geq j}u}}\langle  A v, v\rangle_{\mathcal{H}}
    \end{align*}
    We notice that:
    $$\Bigl\{v\in\mathcal{H} ~|~ \exists u\in\mathcal{H}, \|u\|_{\mathcal{H}}=1, v=P_{\geq j} u  \Bigr\} \supseteq \Bigl\{ v\in\mathrm{Im}(P_{\geq j}) ~|~ \|v\|_{\mathcal{H}} = 1\Bigr\}
    $$
    Indeed, if we take $v\in \mathrm{Im}(P_{\geq j})$ with $\|v\|_{\mathcal{H}} = 1$, then $v=P_{\geq j}v$ and we directly have the inclusion. This gives:
    \begin{align*}
        \lambda_1(P_{\geq j} A P_{\geq j}) & \geq \max\limits_{\substack{v\in \mathrm{Im}(P_{\geq j}) \\ \|v\|_{\mathcal{H}}=1}} \langle A v,v\rangle_{\mathcal{H}}
    \end{align*}
    Now, let us observe that the orthogonal set to $\mathrm{Im}(P_{\geq j})$ is $\mathrm{Im}(P_{<j})$ which is of dimension $j-1$.
    \begin{align*}
        \lambda_1(P_{\geq j} A P_{\geq j}) & \geq  \max\limits_{\substack{v\perp \mathrm{Im}(P_{< j})         \\ \|v\|_{\mathcal{H}}=1}} \langle A v,v\rangle_{\mathcal{H}}\\
                                           & \geq  \min\limits_{V_{j-1}}\max\limits_{\substack{v\perp V_{j-1} \\ \|v\|_{\mathcal{H}}=1}} \langle A v,v\rangle_{\mathcal{H}}\\
                                           & = \lambda_j(A)
    \end{align*}
    which concludes the proof.

\end{proof}

%% file: convergence_rates.tex
\section{Rates of convergence of $\mathbb{E}\|\phi_i-\hat{\phi}_i\|^2_{\mathcal{H}}$}\label{sec:rates_embeddings}

Recall that Lemma \ref{lem:error_sample_2} bounds the second term of the PCA risk by
our bounds given in Theorems \ref{th:cov_convergence} and \ref{th:main} that both depend on the quantity $\mathbb{E}\|\phi_i-\hat{\phi}_i\|^2_{\mathcal{H}}$, where $\hat{\phi_i} = \Phi(\hat{\mu}_i)$, and $\hat{\mu}_i$ is an estimator of $\mu_i$ based on $m_i$ samples. In this appendix, we wish to review existing rates of convergence of $\mathbb{E}\|\phi_i-\hat{\phi}_i\|^2_{\mathcal{H}}$, which will depend on the chosen embedding $\Phi$. We denote $B(x,r)$ the ball centered at $x$ of radius $r$. For any $a,b\in\mathbb{R}, a \land b = \min\{a,b\}$.

\subsection{Kernel mean embedding}

\begin{theorem}
  Let $k$ be a continuous positive definite kernel on a separable topological space $\mathcal{X}$ such that $\sup\limits_{x\in\mathcal{X}} k(x,x) = K < \infty$. Let $Y_1,\cdots,Y_m$ be $m$ i.i.d. samples from a probability measure $\mu$ on $\mathcal{X}$. Define the empirical estimator of the kernel mean embedding as:
  $$
  \Phi^{\mathrm{KME}}(\hat{\mu}) = \frac{1}{m}\sum\limits_{j=1}^m k(Y_j,\cdot).
  $$
  Then, 
  $$
  \mathbb{E}[\|\Phi^{\mathrm{KME}}(\mu) - \Phi^{\mathrm{KME}}(\hat{\mu})\|^2_{\mathcal{H}_k}] \lesssim m^{-1}.
  $$
\end{theorem}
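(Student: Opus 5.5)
The argument is a direct variance computation in the RKHS $\mathcal{H}_k$, treating $\Phi^{\mathrm{KME}}(\hat{\mu})$ as an empirical mean of i.i.d. Hilbert-valued random variables. Set $\xi_j = k(Y_j,\cdot) \in \mathcal{H}_k$. By the reproducing property, $\|\xi_j\|_{\mathcal{H}_k}^2 = k(Y_j,Y_j) \le K$, so each $\xi_j$ is Bochner integrable. The first step is to check that the Bochner expectation satisfies $\mathbb{E}[\xi_j] = \int_{\mathcal{X}} k(x,\cdot)\,\mathrm{d}\mu(x) = \Phi^{\mathrm{KME}}(\mu)$. This holds because, for every $f \in \mathcal{H}_k$,
\[
\langle \mathbb{E}\xi_j, f\rangle_{\mathcal{H}_k} = \mathbb{E} f(Y_j) = \langle \Phi^{\mathrm{KME}}(\mu), f\rangle_{\mathcal{H}_k}.
\]
Continuity of $k$ and separability of $\mathcal{X}$ ensure that $x \mapsto k(x,\cdot)$ is measurable into a separable RKHS, so these expectations are well defined. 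This measurability and well-definedness check is the only step requiring any care; the remainder is routine.

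Next, write $\Phi^{\mathrm{KME}}(\hat{\mu}) - \Phi^{\mathrm{KME}}(\mu) = \frac{1}{m}\sum_{j=1}^m (\xi_j - \mathbb{E}\xi_j)$ and expand the squared norm. The cross terms vanish: by independence, for $j \neq l$,
\[
\mathbb{E}\langle \xi_j - \mathbb{E}\xi_j, \xi_l - \mathbb{E}\xi_l\rangle_{\mathcal{H}_k} = 0,
\]
which follows by conditioning on $Y_j$ and using linearity of the inner product together with Bochner expectation. This gives
\[
\mathbb{E}\|\Phi^{\mathrm{KME}}(\mu) - \Phi^{\mathrm{KME}}(\hat{\mu})\|_{\mathcal{H}_k}^2 = \frac{1}{m}\,\mathbb{E}\|\xi_1 - \mathbb{E}\xi_1\|_{\mathcal{H}_k}^2 .
\]

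Finally, bound the variance by the second moment:
\[
\mathbb{E}\|\xi_1 - \mathbb{E}\xi_1\|^2 = \mathbb{E}\|\xi_1\|^2 - \|\mathbb{E}\xi_1\|^2 \le \mathbb{E}\, k(Y_1,Y_1) \le K.
\]
This yields the explicit bound $\mathbb{E}\|\Phi^{\mathrm{KME}}(\mu) - \Phi^{\mathrm{KME}}(\hat{\mu})\|^2_{\mathcal{H}_k} \le K/m$, which is the claimed $m^{-1}$ rate with constant $K$.
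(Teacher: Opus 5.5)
Your proposal is correct and follows essentially the same route as the paper: you expand the squared RKHS norm of the centered empirical mean, use independence to kill the cross terms and reduce to $\frac{1}{m}\mathbb{E}\|k(Y_1,\cdot)-\Phi^{\mathrm{KME}}(\mu)\|^2_{\mathcal{H}_k}$, then bound this variance by $\mathbb{E}\,k(Y_1,Y_1)\le K$. Your added remarks on Bochner integrability and measurability (via continuity of $k$ and separability of $\mathcal{X}$) fill in details the paper leaves implicit.
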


The following proof is inspired by the proof of Theorem 2.22 in \cite{chewi2024statistical}, which we rewrite here for completeness.

\begin{proof} 
We have that
\begin{align*}
\mathbb{E}[\|\Phi^{\mathrm{KME}}(\mu) - \Phi^{\mathrm{KME}}(\hat{\mu})\|^2_{\mathcal{H}_k}] &= \mathbb{E}\biggl[\Bigl\| \int_{\mathcal{X}} k(y,\cdot)\mathrm{d}\mu(y) - \frac{1}{m}\sum\limits_{j=1}^m k(Y_j,\cdot) \Bigr\|^2_{\mathcal{H}_k}\biggr] \\
&= \mathbb{E}\biggl[\Bigl\| \frac{1}{m}\sum\limits_{j=1}^m k(Y_j,\cdot) - \int_{\mathcal{X}} k(y,\cdot)\mathrm{d}\mu(y)   \Bigr\|^2_{\mathcal{H}_k}\biggr]\\
&= \frac{1}{m} \mathbb{E}\biggl[\Bigl\|  k(Y_1,\cdot) - \int_{\mathcal{X}} k(y,\cdot)\mathrm{d}\mu(y)   \Bigr\|^2_{\mathcal{H}_k}\biggr]\\
&= \frac{1}{m} \biggl(\mathbb{E}\Bigl\|  k(Y_1,\cdot)\Bigr\|^2_{\mathcal{H}_k} - \Bigl\|\int_{\mathcal{X}} k(y,\cdot)\mathrm{d}\mu(y)   \Bigr\|^2_{\mathcal{H}_k}\biggr)\\
&\leq \frac{1}{m} \mathbb{E}\bigl\|  k(Y_1,\cdot)\bigr\|^2_{\mathcal{H}_k}\\
&= \frac{1}{m} \mathbb{E} k(Y_1,Y_1)\\
&\leq \frac{K}{m}
\end{align*}

\end{proof}

\subsection{LOT}

In this section, we consider two absolutely continuous probability measures $\rho$ and $\mu$ supported on $\mathcal{X} \subset \mathbb{R}^d$, with respective densities $f$ and $g$. According to Brenier's theorem \cite{brenier1991polar}, there exists an OT map $T$ between $\rho$ and $\mu$ which is the gradient of a convex function $\varphi$. In practice, we do not directly observe $\rho$ and $\mu$ but samples of $\rho$ and $\mu$. From these samples, one can derive estimators $\hat{T}$ of $T$ and study the rate of convergence in terms of the following loss function
$$
\mathbb{E}\|\phi_i-\hat{\phi}_i\|^2_{\mathcal{H}} = \mathbb{E}\|\hat{T}-T\|^2_{L^2(\rho)}.
$$
We list a few assumptions which will be mentionned in some of those rates. 

\begin{assumption}\label{hyp:X_compact_etc}
The set $\mathcal{X}$ is compact and convex with nonempty interior such that $\mathcal{X}\subseteq [0,1]^d.$
\end{assumption}

\begin{assumption}\label{hyp:X_standard}
There exists $\varepsilon_0,\delta_0>0$ such that for all $x\in\mathcal{X}$ and $\varepsilon\in (0,\varepsilon_0)$, we have $\mathcal{L}(B(x,\varepsilon) \cap \mathcal{X})\geq \delta_0 \mathcal{L}(B(x,\varepsilon))$, with $\mathcal{L}$ the Lebesgue measure on $\mathbb{R}^d$.
\end{assumption}

\begin{assumption}\label{hyp:smooth_potential}
The Brenier potential $\varphi$ is in $C^2(\mathcal{X})$ and for some $\lambda>0$, $(1/\lambda)I_d\preceq \nabla^2\varphi(x) \preceq \lambda I_d$ for all $x\in\mathcal{X}$.
\end{assumption}

\subsubsection{One-sample problem}

In the one-sample problem, we suppose that $\rho$ is a known distribution and $\mu$ is an unknown distribution from which an i.i.d. sample $Y_1,\cdots, Y_m$ is observed. As $\rho$ is absolutely continuous, there exists an OT map between $\rho$ and any sample-based estimator of $\mu$. The first natural estimator is the empirical measure 
$$
\hat{\mu} = \frac{1}{m}\sum_{j=1}^m \delta_{Y_j}.
$$
Let $\hat{T}$ denote the OT map between $\rho$ and $\hat{\mu}$. In this semi-discrete setup, \cite{manole2024plugin} show the following bound on the risk of $\hat{T}$:

\begin{theorem}[Corollary 7 in \cite{manole2024plugin}]
If Assumptions \ref{hyp:X_compact_etc} and \ref{hyp:smooth_potential} hold, then
\[
\mathbb{E}\|\hat{T} - T\|^2_{L^2(\rho)} \lesssim 
\begin{cases}
m^{-1/2}, & d \leq 3, \\
m^{-1/2} \log m, & d = 4, \\
m^{-2/d}, & d \geq 5.
\end{cases}
\]
\end{theorem}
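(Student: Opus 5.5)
This statement is cited from \cite{manole2024plugin}, so the plan is to reconstruct its proof in the semi-discrete setting. Write $T=\nabla\varphi$ for the Brenier map from $\rho$ to $\mu$, and $\hat T=\nabla\hat\varphi$ for the semi-discrete OT map from $\rho$ to $\hat\mu$. The route has three steps: a stability bound that converts the map error into a gap of dual (Kantorovich) objectives, a reduction of that gap to an empirical process, and an empirical-process bound over smooth potentials.

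\emph{Stability.} Under Assumption \ref{hyp:smooth_potential}, $\varphi$ is $\lambda$-smooth and $1/\lambda$-strongly convex on $\mathcal{X}$. Define the dual functional
$$
S_\nu(\psi)=\int \psi\,\mathrm{d}\rho+\int \psi^*\,\mathrm{d}\nu .
$$
By the standard strong-convexity argument (Hütter--Rigollet type),
$$
\|\nabla\hat\varphi-\nabla\varphi\|^2_{L^2(\rho)}\lesssim \lambda\bigl(S_\mu(\hat\varphi)-S_\mu(\varphi)\bigr).
$$
Since $\hat\varphi$ minimizes $S_{\hat\mu}$, we have $S_{\hat\mu}(\hat\varphi)\le S_{\hat\mu}(\varphi)$. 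Hence
$$
S_\mu(\hat\varphi)-S_\mu(\varphi)\le \int(\hat\varphi^*-\varphi^*)\,\mathrm{d}(\mu-\hat\mu).
$$
The map error is therefore controlled by an empirical process indexed by conjugate potentials.

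\emph{Empirical process.} The conjugates $\hat\varphi^*$ range over convex, Lipschitz functions on a bounded set, since $\mathcal{X}\subseteq[0,1]^d$ by Assumption \ref{hyp:X_compact_etc}. This class has metric entropy of order $\varepsilon^{-d/2}$. A crude uniform bound is not enough, so I would localize: the function $\hat\varphi^*-\varphi^*$ is small in $L^2(\mu)$ when $\hat T$ is close to $T$, and strong convexity links this $L^2$ size back to the excess dual objective. A chaining argument at this localization radius should then yield a fixed-point inequality of the form
$$
\delta^2\lesssim m^{-1/2}\int_{\delta^2}^{1}\varepsilon^{-d/4}\,\mathrm{d}\varepsilon .
$$
Solving it gives $m^{-1/2}$ for $d\le3$, $m^{-1/2}\log m$ for $d=4$, and $m^{-2/d}$ for $d\ge5$. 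When the entropy integral diverges ($d\ge 4$), I would truncate it at scale $\delta^2$, which is what produces the logarithm at $d=4$ and the $m^{-2/d}$ rate beyond.

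\emph{Main obstacle.} The hardest step is the localization. The potential $\hat\varphi$ is only piecewise affine, since $\hat\mu$ is discrete, so the regularity of $\hat\varphi^*$ must come from $\rho$ and the smoothness of $\varphi$ rather than from $\hat\mu$. I would first try the \emph{one-sided} stability estimate of Manole et al., which needs regularity of only one potential, together with a uniform Lipschitz bound on all conjugates over the compact set $\mathcal{X}$. As a fallback for $d\ge5$, I would bound the process directly by $W_1$/$W_2$ empirical convergence rates $\mathbb{E}W_2^2(\hat\mu,\mu)\lesssim m^{-2/d}$ combined with stability. This already gives the stated rate in that regime.
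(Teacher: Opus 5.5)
The paper does not prove this statement; it imports it from \cite{manole2024plugin}. Your overall architecture is right: a stability bound, reduction to an empirical process over conjugate potentials, and Bronshtein entropy $\varepsilon^{-d/2}$. But your first inequality does not follow from the argument you invoke. Using $\mu=T_\#\rho$ and $\hat\varphi(x)+\hat\varphi^*(\hat T(x))=\langle x,\hat T(x)\rangle$ $\rho$-a.e.,
\[
S_\mu(\hat\varphi)-S_\mu(\varphi)=\int\Bigl[\hat\varphi^*(T(x))-\hat\varphi^*(\hat T(x))-\langle x,T(x)-\hat T(x)\rangle\Bigr]\,\mathrm{d}\rho(x).
\]
This is a Bregman divergence of $\hat\varphi^*$ alone. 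The H\"utter--Rigollet strong-convexity argument bounds it below by a multiple of $\|\hat T-T\|^2_{L^2(\rho)}$ only when $\hat\varphi^*$ is strongly convex, i.e.\ when the \emph{estimated} potential is smooth. The semi-discrete $\hat\varphi$ is piecewise affine, so this fails. The Hessian bounds on $\varphi$ enter only through the evaluation points $T(x)$, never as curvature.

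You sense this under ``main obstacle'', but the remedy is not localization. Put the curvature on $\varphi$ by working in the \emph{empirical} dual problem. Since $\varphi^*$ is $1/\lambda$-strongly convex with $\nabla\varphi^*(T(x))=x$, and $\hat T_\#\rho=\hat\mu$,
\[
\frac{1}{2\lambda}\|\hat T-T\|^2_{L^2(\rho)}\le\int\Bigl[\varphi(x)+\varphi^*(\hat T(x))-\langle x,\hat T(x)\rangle\Bigr]\,\mathrm{d}\rho(x)=S_{\hat\mu}(\varphi)-S_{\hat\mu}(\hat\varphi).
\]
Adding the nonnegative quantity $S_\mu(\hat\varphi)-S_\mu(\varphi)$ gives exactly your process $\int(\hat\varphi^*-\varphi^*)\,\mathrm{d}(\mu-\hat\mu)$, so the rest of your chain survives. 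Alternatively, taking expectations ($\varphi$ is deterministic) gives $\mathbb{E}\|\hat T-T\|^2_{L^2(\rho)}\le\lambda\bigl(\mathbb{E}W_2^2(\rho,\hat\mu)-W_2^2(\rho,\mu)\bigr)$. This reduces the statement to a bias bound for the empirical $W_2^2$.

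Second, no localization is needed, and your claim that a crude uniform bound is insufficient is incorrect. No regularity of $\hat\varphi^*$ beyond convexity and Lipschitz continuity is ever used. After subtracting constants (which $\mu-\hat\mu$ annihilates), both $\hat\varphi^*$ and $\varphi^*$ lie in the class of convex, $\sqrt d$-Lipschitz, uniformly bounded functions on the compact convex set $\mathcal{X}$. That class has sup-norm entropy of order $\varepsilon^{-d/2}$, and the plain truncated Dudley bound
\[
\mathbb{E}\sup_{g}\Bigl|\int g\,\mathrm{d}(\mu-\hat\mu)\Bigr|\lesssim\inf_{\delta>0}\Bigl\{\delta+m^{-1/2}\int_\delta^1\varepsilon^{-d/4}\,\mathrm{d}\varepsilon\Bigr\}
\]
already gives $m^{-1/2}$, $m^{-1/2}\log m$ and $m^{-2/d}$ in the three regimes. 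Your displayed ``fixed-point'' inequality is this same global bound with truncation level $\delta^2$; no localized class appears in it. So the arithmetic is right, but the obstacle is illusory.

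Finally, your fallback for $d\ge5$ does not give the stated rate. With Lipschitz integrands the process is only bounded by a multiple of $W_1(\mu,\hat\mu)$, which is of order $m^{-1/d}$. The estimate $\mathbb{E}W_2^2(\hat\mu,\mu)\lesssim m^{-2/d}$ could only be used through a bound $\|\hat T-T\|^2_{L^2(\rho)}\lesssim W_2^2(\hat\mu,\mu)$, which you do not have. Only the reverse inequality is automatic, since $(\hat T,T)_\#\rho$ is a coupling of $\hat\mu$ and $\mu$.
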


The authors of \cite{manole2024plugin} also focus on the case where $\mu$ admits a smooth density $g$. We define for any $M, \gamma>0$:
$$
C^{s}(\mathcal{X};M,\gamma) = \{f\in C^{s}(\mathcal{X}) ~:~ \|f\|_{C^s(\mathcal{X})} \leq M, f\geq 1/\gamma ~\text{over}~ \mathcal{X}\},
$$
where $C^{s}(\mathcal{X})$ is the H\"older space. Let $\hat{\mu}$ be the measure of the wavelet density estimator $\hat{g}$. Let $\hat{T}$ be the OT map between $\rho$ and $\hat{\mu}$.

\begin{theorem}[Theorem 10 in \cite{manole2024plugin}]
Let $s>1$ and $M,\gamma>0$. Let $\rho$ and $\mu$ be two absolutely continuous probability measures on $\mathcal{X}=[0,1]^d$ and assume the density $g$ of $\mu$ is in $C^{s-1}([0,1]^d; M,\gamma)$. Let $J_m$ be the truncation level of the truncated wavelet estimator of $g$ and assume $2^{J_m} \asymp m^{1/(d+2(s-1))}.$ If Assumption \ref{hyp:smooth_potential} holds, then
\[
\mathbb{E}\|\hat{T} - T\|^2_{L^2(\rho)} \lesssim 
\begin{cases}
1/m, & d =1, \\
(\log m)^2 / m , & d = 2, \\
m^{-\frac{2s}{2(s-1)+d}}, & d \geq 3.
\end{cases}
\]
\end{theorem}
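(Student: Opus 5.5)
The plan is a \emph{stability + density estimation} argument. Step one reduces the $L^2(\rho)$ error of the plug-in map $\hat{T}$ to a negative Sobolev distance between the densities $\hat{g}$ and $g$. Step two bounds that distance for the truncated wavelet estimator through a bias--variance decomposition across resolution levels. Throughout, $\hat{g}$ is the truncated wavelet estimator, post-processed (thresholded below by $1/(2\gamma)$ and renormalized) so that it is a genuine probability density on $[0,1]^d$ bounded above and below. The event where this post-processing changes $\hat{g}$ should have probability decaying faster than any power of $m$, so it can be absorbed using the trivial bound $\|\hat{T}-T\|_{L^2(\rho)}^2\lesssim \mathrm{diam}([0,1]^d)^2$.

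For the stability step I would start from the semi-dual formulation of OT with fixed source $\rho$. Let $\varphi$ be the Brenier potential from $\rho$ to $\mu$ and $\hat\varphi$ the one from $\rho$ to $\hat{\mu}$. Assumption \ref{hyp:smooth_potential} makes $\varphi$ strongly convex and smooth with constant $\lambda$, and likewise its conjugate $\varphi^*$. Comparing the optimality of $\hat\varphi$ with the quadratic growth of the semi-dual objective around $\varphi$ should give
\[
\|\hat{T}-T\|_{L^2(\rho)}^2 \lesssim \int (\varphi^* - \hat\varphi^*)\,\mathrm{d}(\hat\mu-\mu).
\]
Writing $\hat\mu-\mu=(\hat g-g)\,\mathrm{d}x$ and using $H^1$--$H^{-1}$ duality, the right-hand side is at most $\|\nabla(\varphi^*-\hat\varphi^*)\|_{L^2}\,\|\hat g-g\|_{\dot H^{-1}}$. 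Since both densities are bounded above and below, $\|\nabla(\varphi^*-\hat\varphi^*)\|_{L^2}$ is comparable to $\|\hat{T}^{-1}-T^{-1}\|$, which in turn is comparable to $\|\hat{T}-T\|_{L^2(\rho)}$ by the bi-Lipschitz property of $T$. Dividing through then yields
\[
\|\hat T-T\|_{L^2(\rho)}^2\lesssim \|\hat g-g\|_{\dot H^{-1}}^2 .
\]
This step is where the smoothness of $\varphi$ and the lower bound $1/\gamma$ are used. I expect it to be the \textbf{main obstacle}, because one must control the regularity of the \emph{estimated} potential $\hat\varphi^*$ (or avoid needing it) on the good event. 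If the direct argument fails, I would try a Caffarelli-type bound on $\hat\varphi$, using that $\hat g$ is bounded above and below.

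For the density step, I would expand $\hat g-g$ in the wavelet basis and use the fact that the $\dot H^{-1}$ norm weights level $j$ coefficients by $2^{-2j}$. The bias consists of the levels $j>J_m$. Since $g\in C^{s-1}$, its coefficients at level $j$ are of size $2^{-j(s-1+d/2)}$, so the bias contributes
\[
\sum_{j>J_m}2^{-2j}\,2^{jd}\,2^{-2j(s-1+d/2)}\asymp 2^{-2J_m s}.
\]
The variance comes from the empirical coefficients at levels $j\le J_m$. Each has variance of order $1/m$, and there are $2^{jd}$ of them at level $j$, so the variance contributes
\[
\frac1m\sum_{j\le J_m}2^{j(d-2)} .
\]
This sum equals $1/m$ for $d=1$, $J_m/m\asymp \log m/m$ for $d=2$, and $2^{J_m(d-2)}/m$ for $d\ge3$.

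With $2^{J_m}\asymp m^{1/(d+2(s-1))}$, the two terms balance for $d\ge3$ and give $m^{-2s/(2(s-1)+d)}$. For $d=1,2$ the bias is negligible and the variance dominates. In the $d=2$ case the extra logarithmic factor should appear from the same sum, possibly combined with a $\log m$ loss when handling the good/bad-event truncation, which would explain the stated $(\log m)^2/m$.

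Finally, I would combine the two steps by taking expectations on the good event and adding the exponentially small contribution of the bad event, giving all three cases of the stated bound.
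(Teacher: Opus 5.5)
Your argument is correct, but it is not the route behind the quoted result; the paper gives no proof and only cites Manole et al.

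There, one first proves a Wasserstein stability bound valid for an \emph{arbitrary} target estimate, $\|\hat T-T\|_{L^2(\rho)}^2\lesssim W_2^2(\hat\mu,\mu)$. It is obtained from the same semi-dual gap you start from, rewritten as $\tfrac12W_2^2(\rho,\hat\mu)-\tfrac12W_2^2(\rho,\mu)-\int\psi\,\mathrm{d}(\hat\mu-\mu)$ with $\psi=\tfrac12|\cdot|^2-\varphi^*$. One couples $\rho$ with $\hat\mu$ through $T^{-1}$ and an optimal coupling of $(\mu,\hat\mu)$, and uses the upper Hessian bound on $\varphi^*$. The stated rates are then $W_2^2$ risk bounds for wavelet density estimators, which themselves pass through a negative Besov norm using $g\ge 1/\gamma$. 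You instead bypass $W_2$ and prove stability directly in $\dot H^{-1}$.

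The step you flag as the main obstacle is not actually one. First bound $\|\nabla(\varphi^*-\hat\varphi^*)\|_{L^2(\mathrm{d}x)}\le\sqrt{2\gamma}\,\|\nabla\varphi^*-\nabla\hat\varphi^*\|_{L^2(\hat\mu)}$, using the lower bound on $\hat g$ rather than on $g$. Then substitute $y=\hat T(x)$ and use $\nabla\hat\varphi^*\circ\hat T=\mathrm{Id}$ $\rho$-a.e.\ together with $\nabla\varphi^*\circ T=\mathrm{Id}$. This gives $\|\nabla\varphi^*\circ\hat T-\nabla\varphi^*\circ T\|_{L^2(\rho)}\le\lambda\|\hat T-T\|_{L^2(\rho)}$. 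Only the Lipschitz property of $T^{-1}=\nabla\varphi^*$ is used, and only this one-sided inequality is needed, not comparability. Weighting by $\mu$ instead would require exactly the regularity of $\nabla\hat\varphi^*$ you were worried about, so no Caffarelli-type argument is needed.

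On the logarithm, the bad event contributes only a superpolynomially small term, so no extra $\log m$ comes from the truncation. Summing levels in $\ell^2$, which needs a genuine wavelet characterization of the dual of $H^1([0,1]^d)/\mathbb{R}$ with boundary-adapted wavelets, gives $\log m/m$ in $d=2$, stronger than stated. The cruder per-level Jackson--Poincar\'e bound summed in $\ell^1$ gives exactly $(\log m)^2/m$.

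As for what each route buys: the $W_2$ stability bound needs no regularity or lower bound on the estimate, so it also covers plug-in estimators built from empirical measures. Your inequality uses $\hat g\ge 1/(2\gamma)$, hence your post-processing and good/bad-event split. In exchange, your route is self-contained and slightly sharper in $d=2$.
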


\subsubsection{Two-sample problem}

We now suppose both $\rho$ and $\mu$ are unknown absolutely continuous probability measures. Let $X_1,\cdots,X_{m_0} \sim \rho$ and $Y_1,\cdots,Y_{m} \sim \mu$ be i.i.d. samples.

\paragraph{Barycentric projection.}  Define the empirical measures 
$$\hat{\rho} = \frac{1}{m_0}\sum_{j=1}^{m_0} \delta_{X_j} ,\qquad \hat{\mu} = \frac{1}{m}\sum_{j=1}^m \delta_{Y_j}.
$$
A Monge map between $\hat{\rho}$ and $\hat{\mu}$ might not exist when $m\neq m_0$ but an OT plan $\pi$ always does. From $\pi$, a transport map $\hat{T}_{\pi}$ is constructed with the barycentric projection \eqref{eq:barycentric_projection}.

\begin{theorem}[Corollary 2.3 in \cite{deb2021rates}]
  If $\mu$ and $\nu$ are compactly supported, $T$ is $L$-Lipschitz and $\mathbb{E}_{\ec X_1\sim\hat{\rho}}[\exp(t\|X_1\|^{\alpha})] < \infty$ for some $t>0, \alpha>0$, then,

  \[
  \mathbb{E}\|\hat{T}_{\pi} - T\|^2_{L^2({\hat{\rho}})}\lesssim \kappa_{m,m_0} ~~\mathrm{with}\qquad
  \kappa_{m,m_0} = 
  \begin{cases}
  m^{-1/2} + m_0^{-1/2}, & d = 2,3, \\
  m^{1/2}\log(1+m) + m_0^{-1/2}\log(1+m_0) , & d = 4, \\
  m^{-2/d} + m_0^{-2/d}, & d \geq 5.
  \end{cases}
  \]
\end{theorem}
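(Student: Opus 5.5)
The plan is to follow the strategy of \cite{deb2021rates}, which turns the $L^2(\hat\rho)$ error of the barycentric projection into the suboptimality of the empirical plan $\pi$ in a \emph{population} dual problem, and then to control that suboptimality with known rates for empirical optimal transport.

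\textbf{Step 1 (Jensen).} Write $T=\nabla\varphi$ with $\varphi$ convex (Brenier). Since $\hat T_\pi(x)$ is the conditional mean of $y$ under $\pi(\cdot\mid x)$, Jensen's inequality gives
$$
\|\hat T_\pi-T\|^2_{L^2(\hat\rho)}\le \int \|y-T(x)\|^2\,\mathrm d\pi(x,y).
$$

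\textbf{Step 2 (strong convexity of the dual potential).} Because $T$ is $L$-Lipschitz, $\varphi$ is $L$-smooth, so its convex conjugate $\varphi^*$ is $(1/L)$-strongly convex. The Fenchel--Young gap then satisfies, for every $(x,y)$,
$$
\varphi(x)+\varphi^*(y)-\langle x,y\rangle\ \ge\ \frac{1}{2L}\|y-\nabla\varphi(x)\|^2 .
$$
Integrating against $\pi$ yields
$$
\frac{1}{2L}\int\|y-T(x)\|^2\,\mathrm d\pi\ \le\ \int\varphi\,\mathrm d\hat\rho+\int\varphi^*\,\mathrm d\hat\mu-\int\langle x,y\rangle\,\mathrm d\pi .
$$

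\textbf{Step 3 (decomposition into three terms).} By optimality, $\pi$ maximizes $\int\langle x,y\rangle$ over $\Pi(\hat\rho,\hat\mu)$. On the population side, $\int\varphi\,\mathrm d\rho+\int\varphi^*\,\mathrm d\mu=\max_{\gamma\in\Pi(\rho,\mu)}\int\langle x,y\rangle\,\mathrm d\gamma$ by Kantorovich duality. The right-hand side of Step 2 therefore splits into:
\begin{itemize}
\item two centered linear terms, $\int\varphi\,\mathrm d(\hat\rho-\rho)$ and $\int\varphi^*\,\mathrm d(\hat\mu-\mu)$;
\item the difference $\max_{\Pi(\rho,\mu)}\int\langle x,y\rangle-\max_{\Pi(\hat\rho,\hat\mu)}\int\langle x,y\rangle$.
\end{itemize}
The last difference equals $\tfrac12\bigl(W_2^2(\hat\rho,\hat\mu)-W_2^2(\rho,\mu)\bigr)$ up to the second-moment corrections $\tfrac12\int\|x\|^2\,\mathrm d(\rho-\hat\rho)+\tfrac12\int\|y\|^2\,\mathrm d(\mu-\hat\mu)$, which are again linear.

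\textbf{Step 4 (bounding the terms in expectation).} The linear terms have mean zero and are not even needed for the expectation bound; their fluctuations are of order $m^{-1/2}+m_0^{-1/2}$, which is dominated by $\kappa_{m,m_0}$. For the Wasserstein difference I would use $|W_2^2(\hat\rho,\hat\mu)-W_2^2(\rho,\mu)|\lesssim W_2(\hat\rho,\rho)+W_2(\hat\mu,\mu)$, which holds on bounded supports via the triangle inequality. Sharper, I would bound it by one-sample deviations of semi-dual objectives over the class of convex $L$-smooth potentials.

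I expect Step 4 to be the main obstacle. The crude bound through $W_2(\hat\rho,\rho)$ only gives $m^{-1/d}$ instead of $m^{-2/d}$. To recover the squared rate, I would use the empirical-process bound for $\sup_{f}\int f\,\mathrm d(\hat\mu-\mu)$ over smooth convex potentials, or equivalently the $W_1$/$W_2^2$ rates of Fournier--Guillin, which give exactly $m^{-1/2}$ for $d\le 3$, $m^{-1/2}\log m$ for $d=4$, and $m^{-2/d}$ for $d\ge5$. The exponential moment condition $\mathbb E[\exp(t\|X_1\|^\alpha)]<\infty$ enters here, through a truncation argument controlling the tails when applying these empirical-measure rates to the potentially unbounded $\hat\rho$ side. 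Summing the two one-sample contributions produces $\kappa_{m,m_0}$.
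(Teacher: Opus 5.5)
Your Steps 1--3 are correct. The paper itself does not prove this statement; it imports it from \cite{deb2021rates}. The gap is Step 4, which is where the entire rate comes from, and none of the routes you list closes it.

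\begin{itemize}
\item The perturbation bound $|W_2^2(\hat\rho,\hat\mu)-W_2^2(\rho,\mu)|\lesssim W_2(\hat\rho,\rho)+W_2(\hat\mu,\mu)$ is first order, as you note.
\item Fournier--Guillin \cite{fournier2015rate} is not ``equivalent'' to an empirical-process bound. Its $W_1$ rate is $m^{-1/d}$. Its $\mathbb{E}W_2^2$ rate has the right order, but it is usable only once you have shown that the right-hand side of Step 2 is dominated by a \emph{squared} transport distance, and that is exactly the missing inequality.
\item Your empirical-process alternative uses the wrong class. The minimizer of the empirical dual problem is a maximum of finitely many affine functions, not an $L$-smooth potential. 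The supremum must therefore run over all convex Lipschitz functions on a compact set (Bronshtein's $\varepsilon^{-d/2}$ entropy plus chaining). That requires compact supports and is not carried out.
\item Getting a target-side term $W_2^2(\hat\mu,\mu)$ from a coupling of $\hat\mu$ with $\mu$ fails under the stated hypothesis. It produces a Bregman divergence of $\varphi^*$, which is bounded below by $\frac{1}{2L}\|\cdot\|^2$ but has no upper bound unless $T^{-1}$ is Lipschitz.
\end{itemize}

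The missing idea is to use the optimality of $\pi$ one-sidedly, together with the smoothness of $\varphi$, after pulling the target sample back to the source. Set $D(x,y)=\varphi(x)+\varphi^*(y)-\langle x,y\rangle$. Step 2 says $\frac{1}{2L}\int\|y-T(x)\|^2\,\mathrm{d}\pi\le\int D\,\mathrm{d}\pi$. Since $\pi$ maximizes $\int\langle x,y\rangle$ over $\Pi(\hat\rho,\hat\mu)$, you have $\int D\,\mathrm{d}\pi\le\int D\,\mathrm{d}\gamma$ for every $\gamma\in\Pi(\hat\rho,\hat\mu)$.

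Only the law of the data matters, so realize $Y_j=T(X_j')$ with $X'_j$ i.i.d.\ $\rho$. Set $\hat\rho'=\frac{1}{m}\sum_j\delta_{X'_j}$ and take $\gamma=(\mathrm{Id}\times T)_{\#}\gamma_0$, where $\gamma_0$ is an optimal coupling of $(\hat\rho,\hat\rho')$. Because $D(x,\nabla\varphi(x'))=\varphi(x)-\varphi(x')-\langle\nabla\varphi(x'),x-x'\rangle\le\frac{L}{2}\|x-x'\|^2$, you get the deterministic bound
\[
\|\hat{T}_{\pi}-T\|^2_{L^2(\hat\rho)}\le L^2\,W_2^2(\hat\rho,\hat\rho')\le 2L^2\bigl(W_2^2(\hat\rho,\rho)+W_2^2(\hat\rho',\rho)\bigr).
\]
Fournier--Guillin applied to the source $\rho$ at sample sizes $m_0$ and $m$ then yields $\kappa_{m,m_0}$, with $m^{-1/2}\log(1+m)$ for $d=4$ (the $m^{1/2}$ in the statement is a typo).

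This argument dispenses with the centered linear terms and with the decomposition of Step 3. It also shows that the moment hypothesis is needed only on $\rho$. There it enters through the moment condition in Fournier--Guillin's bound on $\mathbb{E}W_2^2(\hat\rho,\rho)$, not through a separate truncation.
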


\paragraph{One-nearest neighbor estimator.} The barycentric projection map is only defined on the support of $\hat{\rho}$ that is on $X_1,\cdots,X_{m_0}$. The one-nearest neighbor extrapolation allows to extend this map to out-of-sample points. Let $V_j$ be the Voronoi cell centered at $X_j$, defined as:
$$
V_j = \{ x\in\mathcal{X} ~:~ \|x-X_j\|\leq \|x-X_k\|, ~\forall k\neq j\}.
$$

The one-nearest neighbor estimator of $T$ can then be defined by:
$$
\hat{T}(x) = \sum\limits_{j=1}^{m_0} I(x\in V_j)T_{\pi}(x_j),
$$
where $I$ is the indicator function. \cite{manole2024plugin} show the following bound on the risk of $\hat{T}$.

\begin{theorem}[Proposition 15 in \cite{manole2024plugin}]
If the density $f$ of $\rho$ is bounded and Assumptions \ref{hyp:X_compact_etc}, \ref{hyp:X_standard} and \ref{hyp:smooth_potential} hold, then
\[
\mathbb{E}\|\hat{T} - T \|^2_{L^2(\rho)} \lesssim (\log m_0)^2 \kappa_{m \land m_0} \qquad
\kappa_n = 
\begin{cases}
n^{-1/2}, & d \leq 3, \\
n^{-1/2} \log n , & d = 4, \\
n^{-2/d}, & d \geq 5.
\end{cases}
\]
\end{theorem}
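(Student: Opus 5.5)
The plan is to split the error of the one-nearest-neighbor extension into two parts. The first is an \emph{in-sample} error, which is controlled by the barycentric projection rate of \cite{deb2021rates} recalled just above. The second is an \emph{extrapolation} error, which is controlled by the regularity of $T$ together with the geometry of the Voronoi cells. For $x \in V_j$ I would write
$$
\hat{T}(x) - T(x) = \bigl(T_{\pi}(X_j) - T(X_j)\bigr) + \bigl(T(X_j) - T(x)\bigr),
$$
so that, by $(a+b)^2\le 2a^2+2b^2$,
$$
\|\hat{T}-T\|^2_{L^2(\rho)} \le 2\sum_{j=1}^{m_0} \rho(V_j)\,\|T_{\pi}(X_j)-T(X_j)\|^2 + 2\int_{\mathcal{X}} \|T(x) - T(X_{j(x)})\|^2 \mathrm{d}\rho(x),
$$
where $X_{j(x)}$ denotes the nearest sample point to $x$.

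\textbf{Extrapolation term.} Assumption \ref{hyp:smooth_potential} gives $\nabla^2\varphi \preceq \lambda I_d$. Since $\mathcal{X}$ is convex (Assumption \ref{hyp:X_compact_etc}), $T=\nabla\varphi$ is therefore $\lambda$-Lipschitz. The second term is then at most $\lambda^2\int \min_j\|x-X_j\|^2\,\mathrm{d}\rho(x)$. The density $f$ is bounded, $\mathcal{X}$ is standard (Assumption \ref{hyp:X_standard}), and $\rho$ has a density that is used only through boundedness, together with the mass lower bound that standardness yields on balls. Under these conditions the classical nearest-neighbor distance estimates give $\mathbb{E}\min_j\|x-X_j\|^2 \lesssim (\log m_0/m_0)^{2/d}$ uniformly in $x$. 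This is dominated by $(\log m_0)^2\kappa_{m\wedge m_0}$ in every dimension regime.

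\textbf{In-sample term.} I would compare $\sum_j \rho(V_j)\|T_\pi(X_j)-T(X_j)\|^2$ with $\frac{1}{m_0}\sum_j\|T_\pi(X_j)-T(X_j)\|^2 = \|T_\pi - T\|^2_{L^2(\hat\rho)}$. The key claim is that there is an event $\Omega_0$ with $\mathbb{P}(\Omega_0^c)\lesssim m_0^{-2}$ on which
$$
\max_j \rho(V_j) \lesssim \frac{\log m_0}{m_0}.
$$
To prove it, cover $\mathcal{X}$ by $O(m_0/\log m_0)$ balls of radius $r \asymp (C\log m_0/m_0)^{1/d}$. By standardness, each ball intersected with $\mathcal{X}$ has $\rho$-mass at least of order $\log m_0/m_0$. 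Hence, by a union bound, with high probability every such ball contains a sample point. On that event every Voronoi cell lies in a ball of radius $O(r)$, and bounded density gives $\rho(V_j)\lesssim \|f\|_\infty r^d \asymp \log m_0/m_0$. On $\Omega_0$ the in-sample term is then $\lesssim \log m_0\,\|T_\pi-T\|^2_{L^2(\hat\rho)}$. Taking expectations and applying the barycentric projection rate (Corollary 2.3 of \cite{deb2021rates}, stated above), together with $\kappa_{m,m_0}\lesssim \kappa_{m\wedge m_0}$, gives $\lesssim \log m_0\,\kappa_{m\wedge m_0}$. On $\Omega_0^c$ both maps take values in the compact set $\mathcal{X}\subseteq[0,1]^d$, so the loss is at most $d$, and this contributes $O(m_0^{-2})$, which is negligible. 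The extra logarithmic factor, coming either from the $\log(1+m)$ terms at $d=4$ or from the slack in the covering argument, is absorbed into $(\log m_0)^2$.

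\textbf{Expected obstacle.} I expect the uniform Voronoi-cell mass bound to be the hardest step. It needs the covering and union-bound argument to interact correctly with cells near the boundary of $\mathcal{X}$, which is exactly where Assumption \ref{hyp:X_standard} is needed. A second delicate point is that $\Omega_0$ is not independent of the in-sample errors. I would handle this by bounding the in-sample term pointwise on $\Omega_0$ first and only then taking the full expectation of $\|T_\pi-T\|^2_{L^2(\hat\rho)}$, which is nonnegative, so no independence is required.
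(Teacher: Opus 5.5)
The paper gives no proof of this statement; it is quoted from Manole et al. Your argument therefore has to stand on its own. Much of it is correct:
\begin{itemize}
\item the in-sample/extrapolation split;
\item the Lipschitz bound for $T=\nabla\varphi$ on the convex set $\mathcal X$;
\item the use of the barycentric-projection rate for $\|T_\pi-T\|^2_{L^2(\hat\rho)}$;
\item the treatment of $\Omega_0^c$ and the comparison $\kappa_{m,m_0}\lesssim\kappa_{m\wedge m_0}$.
\end{itemize}

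\textbf{The gap.} It sits in the two places where you turn standardness into a lower bound on $\rho$-mass. The first is the covering step (each small ball meeting $\mathcal X$ has $\rho$-mass $\gtrsim\log m_0/m_0$). The second is the claim that $\mathbb E\min_j\|x-X_j\|^2\lesssim(\log m_0/m_0)^{2/d}$ uniformly in $x$. Assumption \ref{hyp:X_standard} only bounds the \emph{Lebesgue} measure of $B(x,\varepsilon)\cap\mathcal X$ from below. Converting that into a lower bound on $\rho(B(x,\varepsilon)\cap\mathcal X)$ requires $f\ge c>0$ on $\mathcal X$. The hypotheses only bound $f$ from above, and the Monge--Amp\`ere relation $f=g(\nabla\varphi)\det\nabla^2\varphi$ does not create a lower bound either.

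\textbf{Counterexample.} Take $\mathcal X=[0,1]^d$, $\varphi(x)=\|x\|^2/2$ (so $T=\mathrm{Id}$), and $\rho=\mu$ with density $f(x)=2x_1$. Every stated assumption holds. Yet balls of radius $r$ centred on the face $\{x_1=0\}$ have $\rho$-mass of order $r^{d+1}$. With $r\asymp(\log m_0/m_0)^{1/d}$, such balls are empty with probability tending to one. The nearest-neighbour distance at points of that face is of order $m_0^{-1/(d+1)}$, not $m_0^{-1/d}$. So both intermediate claims are false under the stated hypotheses.

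\textbf{Repair.} The conclusion still holds, but you need density-free versions of both steps. Alternatively, add the hypothesis $f\ge1/\gamma$ on $\mathcal X$, under which your argument works as written.

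For the Voronoi cells:
\begin{itemize}
\item Euclidean balls form a VC class. By the relative VC deviation inequality, there is an event of probability at least $1-m_0^{-2}$ on which every ball of $\rho$-mass at least $C\log m_0/m_0$ contains a sample point.
\item On this event, cover the directions at $X_j$ by $\gamma_d$ closed cones of half-angle $\pi/6$. In each cone, let $x^*$ be the point of $V_j$ farthest from $X_j$, and set $R=\|x^*-X_j\|$.
\item Since $x^*\in V_j$, the open ball $B(x^*,R)$ contains no sample, so its $\rho$-mass is below $C\log m_0/m_0$.
\item Two directions in the same cone make an angle at most $\pi/3$. Hence every $y$ in that slice of $V_j$ satisfies $\|y-x^*\|^2\le\|y-X_j\|^2+R^2-\|y-X_j\|R\le R^2$, so the slice lies in the closed ball $\bar B(x^*,R)$.
\item Summing over cones gives $\max_j\rho(V_j)\le\gamma_d C\log m_0/m_0$. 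This uses only that $\rho$ does not charge spheres.
\end{itemize}

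For the extrapolation term, bound the $\rho$-average rather than a uniform quantity. Let $X\sim\rho$ be independent of the sample. Counting cubes of diameter $\sqrt t$ in $[0,1]^d$ gives $\mathbb P(\min_j\|X-X_j\|^2>t)\lesssim t^{-d/2}/m_0$. Integrating over $t\in(0,d]$ yields
\[
\mathbb E\int\min_j\|x-X_j\|^2\,\mathrm d\rho(x)\lesssim m_0^{-2/d}\quad (d\ge3),
\]
and $\log m_0/m_0$ for $d=2$, $1/m_0$ for $d=1$. No density assumption is needed, and all three bounds are dominated by $(\log m_0)^2\kappa_{m\wedge m_0}$.

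With these two replacements the rest of your proof goes through, and it even yields $\log m_0\,\kappa_{m\wedge m_0}$.
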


\paragraph{Smooth OT map estimation.} Let $\hat{\rho}$ be an estimator of $\rho$ based on $m_0$ samples, such that $\hat{\rho}$ is absolutely continuous. Let $\hat{\mu}$ be any estimator of $\mu$ based on $m$ samples. As $\hat{\rho}$ is absolutely continuous, there exists a unique transport map $\hat{T}$ between $\hat{\rho}$ and $\hat{\mu}$.

\begin{theorem}[Theorem 4 in \cite{balakrishnan2025stability}]
Suppose we have Assumptions \ref{hyp:X_compact_etc} and \ref{hyp:smooth_potential}. Let $s,\gamma,M>0$. We assume that $f$ and $g$ are in  $C^s(\mathcal{X};M,\gamma)$ and vanish at the boundary of $\mathcal{X}$ up to order $\lfloor s \rfloor$. Then,
\[
\mathbb{E}\|\hat{T}-T\|^2_{L^2(\rho)} \lesssim \kappa_{m_0 \land m} \qquad \kappa_n = \begin{cases}
1/n, & d = 1, \\
 \log n / n , & d = 2, \\
n^{-\frac{2(s+1)}{2s+d}}, & d \geq 3.
\end{cases}
\]
\end{theorem}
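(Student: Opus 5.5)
The plan is to separate the estimation error into a \emph{deterministic stability step} and a \emph{statistical density-estimation step}. In the stability step, I would show that under Assumptions \ref{hyp:X_compact_etc} and \ref{hyp:smooth_potential}, the $L^2(\rho)$ distance between OT maps is controlled by negative Sobolev distances between the densities:
\begin{equation*}
\|\hat T - T\|_{L^2(\rho)}^2 \lesssim \|\hat f - f\|_{\dot H^{-1}}^2 + \|\hat g - g\|_{\dot H^{-1}}^2 .
\end{equation*}
Here $\hat f, \hat g$ are the densities of $\hat\rho, \hat\mu$. In the statistical step, I would choose $\hat f,\hat g$ as boundary-corrected kernel (or wavelet) density estimators built from $m_0$ and $m$ samples. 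The goal is to show that $\mathbb{E}\|\hat f - f\|_{\dot H^{-1}}^2 \lesssim \kappa_{m_0}$, and similarly for $g$ with $m$. Combining the two steps then gives the bound $\kappa_{m_0\wedge m}$.

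For the target perturbation, I would work at the level of Brenier potentials. Let $\varphi$ be the potential of $T$ and $\hat\varphi$ the potential of the map from $\rho$ to $\hat\mu$. The condition $(1/\lambda) I_d \preceq \nabla^2\varphi \preceq \lambda I_d$ yields a strong-convexity inequality for the semi-dual functional. This inequality sandwiches $\|\nabla\hat\varphi - \nabla\varphi\|_{L^2(\rho)}^2$ between multiples of
\begin{equation*}
\int (\hat\varphi^* - \varphi^*)\, \mathrm{d}(g - \hat g).
\end{equation*}
I would then bound this integral by $\|\nabla(\hat\varphi^*-\varphi^*)\|_{L^2}\,\|g-\hat g\|_{\dot H^{-1}}$, using that $g$ is bounded above and below by the Hölder class $C^s(\mathcal{X};M,\gamma)$, and absorb the gradient factor into the left-hand side. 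This is the one-sample stability argument in the spirit of \cite{manole2024plugin}.

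For the source perturbation, the two maps live in different spaces, $L^2(\hat\rho)$ versus $L^2(\rho)$. I would compose with the OT map $S$ from $\rho$ to $\hat\rho$ and write $\hat T - T = (\hat T - \hat T\circ S) + (\hat T\circ S - T)$. The first piece is handled by the Lipschitz regularity of $\hat T$ together with $\|S-\mathrm{Id}\|_{L^2(\rho)} \lesssim \|\hat f - f\|_{\dot H^{-1}}$, which holds because $f$ is bounded below. The second piece is again a target-type stability bound. The key input is that $\hat T$ inherits the bi-Lipschitz bounds of Assumption \ref{hyp:smooth_potential} with high probability. For this I would use Caffarelli regularity: $\hat f,\hat g$ are $C^s$-close to $f,g$, bounded below, and vanish at the boundary to order $\lfloor s\rfloor$, which supplies the boundary regularity on the convex domain $\mathcal{X}$. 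On the complementary low-probability event I would fall back on the trivial bound $\mathrm{diam}(\mathcal{X})^2$.

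For the statistical step, I would exploit the gain of one derivative in $\dot H^{-1}$. The squared bias of an estimator with bandwidth $h$ is of order $h^{2(s+1)}$, and the variance is of order $m^{-1}h^{-(d-2)}$ for $d\ge 3$. Balancing these gives $h\asymp m^{-1/(2s+d)}$ and the rate $m^{-2(s+1)/(2s+d)}$. For $d=1,2$ the variance is bounded by $1/m$ and $\log m/m$ respectively, which gives the stated low-dimensional rates. The boundary-vanishing condition is what lets the bias bound hold without boundary effects.

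I expect the main obstacle to be the source-side stability step: controlling $\hat T$ uniformly, that is, establishing the Caffarelli-type regularity of the estimated map up to the boundary with high probability. I would first try to obtain it from $C^{s}$-consistency of $\hat f,\hat g$ combined with the boundary-vanishing hypothesis.
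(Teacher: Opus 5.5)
The paper gives no proof of this statement. It is quoted from \cite{balakrishnan2025stability}, whose stated aim is to reduce map estimation to density estimation in Wasserstein distance \emph{without} invoking Monge--Amp\`ere regularity for the estimated map. Your plan goes the opposite way, and that is where it has a genuine gap.

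\textbf{The regularity step.} Your source-side step uses $\|\hat T-\hat T\circ S\|_{L^2(\rho)}\le \mathrm{Lip}(\hat T)\,\|S-\mathrm{Id}\|_{L^2(\rho)}$. This needs a $C^{1,1}$ bound on $\hat\varphi$ up to the boundary of $\mathcal X$.
\begin{itemize}
\item Caffarelli-type global estimates of this kind need smooth, uniformly convex boundaries or a periodic setting. Assumption~\ref{hyp:X_compact_etc} allows any compact convex $\mathcal X\subseteq[0,1]^d$, including polytopes, where no such estimate is available from standard theory.
\item The hypothesis that $f,g$ vanish at the boundary to order $\lfloor s\rfloor$ is not a known substitute. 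You would also need it for $\hat f,\hat g$, which boundary-corrected kernel or wavelet estimators do not satisfy in general.
\item Assumption~\ref{hyp:smooth_potential} constrains $\varphi$ only; it is not inherited by $\hat\varphi$.
\end{itemize}

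\textbf{The decomposition.} It is also flawed on its own terms. $\hat T\circ S$ is not the Brenier map from $\rho$ to $\hat\mu$, since a composition of gradients of convex functions is generally not one. So $\hat T\circ S-T$ is not a target-type stability problem, and it still hides a source perturbation.

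\textbf{The missing idea.} Use the regularity of $\varphi$ symmetrically in source and target. Write $J(\psi)=\int\psi\,\mathrm d\rho+\int\psi^*\,\mathrm d\mu$, and let $\hat J$ be its analogue for $(\hat\rho,\hat\mu)$. Since $\hat T=\nabla\hat\varphi$ pushes $\hat\rho$ to $\hat\mu$ and $\varphi$ is $\lambda$-smooth, the Fenchel--Young gap gives $\hat J(\varphi)-\hat J(\hat\varphi)\ge\frac{1}{2\lambda}\|\hat T-T\|^2_{L^2(\hat\rho)}$. Adding $J(\hat\varphi)-J(\varphi)\ge 0$ gives
\[
\frac{1}{2\lambda}\|\hat T-T\|^2_{L^2(\hat\rho)}\le\int(\varphi-\hat\varphi)\,\mathrm d(\hat\rho-\rho)+\int(\varphi^*-\hat\varphi^*)\,\mathrm d(\hat\mu-\mu).
\]
Then:
\begin{itemize}
\item Along an optimal coupling of $(\rho,\hat\rho)$, the $\lambda$-smoothness of $\varphi$ and mere convexity of $\hat\varphi$ bound the first term by $\|\nabla\varphi-\nabla\hat\varphi\|_{L^2(\rho)}W_2(\rho,\hat\rho)+\frac{\lambda}{2}W_2^2(\rho,\hat\rho)$.
\item The second term is handled the same way with $\varphi^*$, using $\|\nabla\varphi^*-\nabla\hat\varphi^*\|_{L^2(\hat\mu)}\le\lambda\|\hat T-T\|_{L^2(\hat\rho)}$, which follows from $\nabla\varphi^*$ being Lipschitz.
\item On the high-probability event where $\hat f,\hat g$ are bounded below on $\mathcal X$, the norms under $\rho,\hat\rho$ (and under $\mu,\hat\mu$) are comparable.
\end{itemize}
This yields $\|\hat T-T\|^2_{L^2(\rho)}\lesssim W_2^2(\rho,\hat\rho)+W_2^2(\mu,\hat\mu)\lesssim\|\hat f-f\|^2_{\dot H^{-1}}+\|\hat g-g\|^2_{\dot H^{-1}}$, with no control on $\hat T$ needed. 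Your density-estimation step is fine and finishes the argument from there.

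One smaller point: in your target step the absorption goes through $L^2(\hat\mu)$. So the lower bound you actually need is on $\hat g$, not on $g$.
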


This convergence rates matches known minimax lower bounds for estimating OT maps between H\"older continuous densities up to a log factor when $d=2$ \cite{hutter2021minimax}.

\paragraph{Entropic OT.}
OT suffers from high computational costs: in the discrete case, solving the OT problem between points clouds of $n$ points has complexity $O(n^3\log n)$. In this context, the seminal paper by \cite{cuturi2013sinkhorn} introduces entropic regularization, which enable the fast computation of optimal transport distances using the Sinkhorn algorithm. Entropic OT consists in modifying problem \ref{eq:w2_kanto} by adding a penalization term based on the entropy of the coupling:
\begin{equation}\label{eq:eot}
    \min\limits_{\pi \in \Pi(\rho,\mu)} \int_{\mathcal{X}\times \mathcal{X}} \|x-y\|^2\mathrm{d}\pi(x,y) + \varepsilon \mathrm{KL}(\pi |\rho \otimes \mu),
\end{equation}
where $\varepsilon>0$, $\rho\otimes \mu$ is the product measure and $\mathrm{KL}(\alpha |\beta) = \int \log\frac{\mathrm{d}\alpha}{\mathrm{d}\beta}\mathrm{d}\alpha$ when $\alpha\in\mathcal{M}(\mathcal{X})$ is absolutely continuous with respect to $\beta\in\mathcal{M}(\mathcal{X})$.

In \cite{pooladian2023minimax}, the authors compute an entropic OT map $T_{\varepsilon}$ via computing the barycentric projection of the optimal entropic coupling. Denoting $\hat{\rho}$ and $\hat{\mu}$ the empirical measures associated with two $m$-samples from $\rho$ and $\mu$, we note $\hat{T}_{\varepsilon}$ the transport map obtained by computing the barycentric projection of the optimal entropic coupling between $\hat{\rho}$ and $\hat{\mu}$. They show the following:

\begin{theorem}\label{th:pooladian_entropic}
Suppose $\rho$ has a compact convex support $\Omega \subset B(0,R)$ with a bounded density $f$. If $\mu$ is a discrete measure with supported included in $B(0,R)$, $\varepsilon \asymp m^{-1/2}$ and $m$ is large enough, then
$$
\mathbb{E}\|\hat{T}_{\varepsilon} - T_0\|^2_{L^2(\rho)} \lesssim m^{-1/2}.
$$
\end{theorem}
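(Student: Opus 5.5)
The plan is to split the error through the population entropic map $T_\varepsilon$, the barycentric projection of the optimal entropic coupling between $\rho$ and $\mu$ themselves:
$$
\mathbb{E}\|\hat{T}_{\varepsilon} - T_0\|^2_{L^2(\rho)} \leq 2\|T_\varepsilon - T_0\|^2_{L^2(\rho)} + 2\,\mathbb{E}\|\hat{T}_{\varepsilon} - T_\varepsilon\|^2_{L^2(\rho)}.
$$
The first term is a deterministic \emph{bias} and the second a \emph{variance}. I would show that the bias is of order $\varepsilon$ and the variance of order $1/(\varepsilon m)$. The choice $\varepsilon \asymp m^{-1/2}$ then balances the two at $m^{-1/2}$. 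The assumption that $\mu$ is discrete, say $\mu = \sum_{k=1}^J \nu_k \delta_{y_k}$, reduces everything to a semi-discrete problem with a finite-dimensional dual variable $g \in \mathbb{R}^J$, and this is what I would exploit throughout.

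\textbf{Bias.} In the semi-discrete setting, Brenier's map $T_0$ equals $y_k$ on the Laguerre cell $L_k(g_0)$, where $g_0$ is the unregularized dual vector. The entropic map has the softmax form
$$
T_\varepsilon(x) = \sum_k y_k\, \pi_k^\varepsilon(x), \qquad \pi_k^\varepsilon(x) \propto \nu_k \exp\bigl((\langle x, y_k\rangle - g^\varepsilon_k)/\varepsilon\bigr).
$$
I would first show $|g^\varepsilon - g_0| \lesssim \varepsilon$, using stability of the dual (see the variance step). Away from an $O(\varepsilon)$-neighbourhood of the cell boundaries, the softmax weights are then exponentially close to the indicators of the cells. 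On that neighbourhood the error is at most $(2R)^2$ because the supports lie in $B(0,R)$. Since $\rho$ has a bounded density on a compact convex set, and the Laguerre boundaries are finitely many hyperplane pieces, this neighbourhood has $\rho$-mass $\lesssim \varepsilon$. This gives $\|T_\varepsilon - T_0\|^2_{L^2(\rho)} \lesssim \varepsilon$.

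\textbf{Variance.} The empirical map $\hat T_\varepsilon$ is given by the same softmax formula, with empirical weights $\hat\nu_k$ and an empirical dual vector $\hat g$. The semi-dual is concave in $g$ and involves $\rho$ only through an expectation. When $\rho$ is itself sampled, the formula still extends $\hat T_\varepsilon$ out of sample, and the empirical objective is an average over the samples of $\rho$. Its gradient at $g^\varepsilon$ is the difference between the predicted and target marginals on the atoms. This difference is an empirical average minus its mean, so it has size $m^{-1/2}$. If the Hessian of the semi-dual is uniformly negative definite on $\mathbf{1}^\perp$, then $|\hat g - g^\varepsilon| \lesssim m^{-1/2}$. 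Here ``uniformly'' means the bound does not degenerate as $\varepsilon \to 0$. The map $g \mapsto T_\varepsilon$ is Lipschitz in $L^2(\rho)$: its derivative is of order $1/\varepsilon$, but it is supported on boundary layers of $\rho$-mass $\varepsilon$. A direct computation, which I have not checked, suggests this gives $\|\hat T_\varepsilon - T_\varepsilon\|^2_{L^2(\rho)} \lesssim |\hat g - g^\varepsilon|^2/\varepsilon + 1/m$. The total is therefore $\lesssim 1/(\varepsilon m)$, which is $\lesssim m^{-1/2}$ for $\varepsilon \asymp m^{-1/2}$. The condition ``$m$ large enough'' is used to keep $\hat g$ inside the region where the curvature bound holds, with high probability. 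The complementary event contributes only $O(R^2)$ times an exponentially small probability.

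\textbf{Main obstacle.} The crux is the $\varepsilon$-uniform strong concavity of the semi-discrete entropic dual. The Hessian is $1/\varepsilon$ times a softmax covariance, and that covariance is nonzero only on boundary layers of $\rho$-mass about $\varepsilon$. The two effects must cancel, leaving a Hessian comparable to the unregularized semi-discrete Hessian, which is a weighted graph Laplacian on the adjacency graph of the Laguerre cells. I would first try to establish this cancellation. I would then invoke the connectivity argument of Kitagawa--M\'erigot--Thibert, which is where the convexity of the support of $\rho$ and the boundedness of its density are needed, to get a positive spectral gap.
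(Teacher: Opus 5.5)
The paper does not prove this statement; it quotes it from \cite{pooladian2023minimax}. Your architecture is the one behind that result: a bias of order $\varepsilon$, a fluctuation of order $(\varepsilon m)^{-1}$, balanced at $\varepsilon\asymp m^{-1/2}$. The step that fails as you state it is the crux. A positive spectral gap does not follow from convexity of $\Omega$ plus an \emph{upper} bound on $f$. The Kitagawa--M\'erigot--Thibert argument needs $f\geq f_{\min}>0$ on $\Omega$. That lower bound is what gives the Poincar\'e--Wirtinger inequality, and hence lower bounds on the facet weights $\int_{F_{kl}} f\,\mathrm{d}\mathcal{H}^{d-1}/|y_k-y_l|$ along a connected family of facets. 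Without it the curvature can vanish. Take $f\propto|x_1|$ on a convex $\Omega$ symmetric under $x_1\mapsto -x_1$, with atoms $\pm e_1$ of mass $1/2$. The optimal facet is $\{x_1=0\}$, where $f=0$. In the dual direction $(1,-1)\perp\mathbf{1}$, the entropic curvature is of order $\varepsilon^{-1}\int p(1-p)\,f\,\mathrm{d}x\asymp\varepsilon^{-1}\cdot\varepsilon^{2}\to 0$, where $p$ is a logistic function of $2x_1/\varepsilon$. The cited source works with a density bounded above and below, but the transcription here only says ``bounded''. So you must add the lower bound explicitly; your bias step needs it as well, through $|g^\varepsilon-g_0|\lesssim\varepsilon$. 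You should also read ``discrete'' as finitely many atoms, since your constants depend on $J$, $\min_k\nu_k$ and $\min_{k\neq l}|y_k-y_l|$.

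With that hypothesis added, the rest goes through up to two points of precision, and the obstacle you fear can be avoided. Your unchecked Lipschitz estimate is correct, and in fact global. We have $\partial_{g_j}T_\varepsilon=-\varepsilon^{-1}\pi^\varepsilon_j\,(y_j-T_\varepsilon)$, which is at most a constant times $\varepsilon^{-1}\sum_{k<l}e^{-c\,\mathrm{dist}(x,H_{kl})/\varepsilon}$, for hyperplanes $H_{kl}=H_{kl}(g)$ and $c=\min_{k\neq l}|y_k-y_l|$. Hence $\int|\nabla_g T_\varepsilon|^2\,\mathrm{d}\rho\lesssim\varepsilon^{-1}$ for every $g$, using only $f\leq f_{\max}$. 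Integrating along segments then gives $\|T_\varepsilon(\cdot;g)-T_\varepsilon(\cdot;g')\|^2_{L^2(\rho)}\lesssim|g-g'|^2/\varepsilon$, with no smallness condition on $|g-g'|$.

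\emph{Bias step.} Charging $(2R)^2$ on a single $O(\varepsilon)$-neighbourhood and calling the rest negligible leaves a constant, or a $\log(1/\varepsilon)$ factor if you widen the layer. Instead, integrate the pointwise bound $e^{-c\,\mathrm{dist}(x,H_{kl})/\varepsilon}$ against the slab masses $\rho(\{\mathrm{dist}(\cdot,H_{kl})\leq t\})\lesssim t$.

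\emph{Variance step.} Write $\Phi_\varepsilon$, $\hat\Phi$, $\Phi_0$ for the population entropic, empirical entropic and unregularized semi-duals. Your argument (gradient $O(m^{-1/2})$ at $g^\varepsilon$, plus curvature) needs curvature of $\hat\Phi$, whose Hessian is carried by only about $\varepsilon m$ sample points. It is simpler to combine three facts:
\begin{itemize}
\item the same layer estimate, $\sup_g|\nabla\Phi_\varepsilon(g)-\nabla\Phi_0(g)|\lesssim\varepsilon$;
\item strong concavity of $\Phi_0$ on a fixed neighbourhood of $g_0$;
\item the bound $D_m=\sup_g|\nabla\hat\Phi(g)-\nabla\Phi_\varepsilon(g)|$ with $\mathbb{E}D_m^2\lesssim m^{-1}$, which holds because the softmax class $\{\pi_k(\cdot;g)\}_g$ is VC with dimension depending only on $J$.
\end{itemize}
Together these give $|\hat g-g^\varepsilon|\lesssim\varepsilon+D_m$ off an exponentially small event. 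Hence $\mathbb{E}\|\hat T_\varepsilon-T_\varepsilon\|^2_{L^2(\rho)}\lesssim\varepsilon+(\varepsilon m)^{-1}$, which is already of order $m^{-1/2}$ at $\varepsilon\asymp m^{-1/2}$. So the $\varepsilon$-uniform curvature of the \emph{entropic} dual is not actually needed for this statement.
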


\subsection{Sliced Wasserstein embedding}

\begin{theorem}
Let $\mu$ be a probability measure supported on $\mathcal{X}\subseteq \mathbb{R}^d$ with finite-second moment. Let $\hat{\mu}$ be the empirical measure based on $m$ i.i.d. samples $Y_1,\cdots,Y_m$ of $\mu$. Then,
$$  
\mathbb{E}[\|\Phi^{\mathrm{SW}}(\mu) - \Phi^{\mathrm{SW}}(\hat{\mu})\|^2_{L^2([0,1]\times \mathbb{S}^{d-1})}]\lesssim  m^{-1}.$$
\end{theorem}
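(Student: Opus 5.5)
By Fubini--Tonelli, the claimed bound reduces to one-dimensional empirical optimal transport, averaged over directions. Indeed, for $\theta\in\mathbb{S}^{d-1}$ the map $t\mapsto F^{-1}_{\hat{\mu},\theta}(t)$ is the quantile function of $P^{\theta}_{\#}\hat{\mu}=\frac{1}{m}\sum_{j=1}^m\delta_{\langle\theta,Y_j\rangle}$. This is exactly the empirical measure of the $m$ i.i.d.\ real samples $\langle\theta,Y_1\rangle,\dots,\langle\theta,Y_m\rangle$ drawn from $\mu_\theta:=P^{\theta}_{\#}\mu$. Since the integrand is nonnegative, expectation and the $\mathrm{d}t\,\mathrm{d}\sigma(\theta)$ integral may be exchanged:
\begin{equation*}
\mathbb{E}\|\Phi^{\mathrm{SW}}(\mu)-\Phi^{\mathrm{SW}}(\hat{\mu})\|^2_{L^2([0,1]\times\mathbb{S}^{d-1})}
=\int_{\mathbb{S}^{d-1}}\mathbb{E}\Bigl[W_2^2\bigl(\mu_\theta,P^{\theta}_{\#}\hat{\mu}\bigr)\Bigr]\mathrm{d}\sigma(\theta),
\end{equation*}
using the closed form of $W_2$ on the line recalled in Section~\ref{sec:background}. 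Each $\mu_\theta$ has finite second moment, with $\int x^2\,\mathrm{d}\mu_\theta\le\int\|y\|^2\,\mathrm{d}\mu(y)$, so every inner quantity is finite.

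For each fixed $\theta$, I would invoke the one-dimensional empirical Wasserstein bound underlying the SW entry of Table~\ref{table:rates_embeddings}, namely the sliced-Wasserstein result of \cite{chewi2024statistical}, which rests on Bobkov--Ledoux. It gives
\begin{equation*}
\mathbb{E}\,W_2^2(\nu,\hat{\nu}_m)\le\frac{C\,J_2(\nu)}{m},\qquad J_2(\nu)=\int\frac{F_\nu(x)(1-F_\nu(x))}{f_\nu(x)}\,\mathrm{d}x .
\end{equation*}
Here $\nu$ is a probability measure on $\mathbb{R}$ with distribution function $F_\nu$ and density $f_\nu$, and $\hat{\nu}_m$ is its empirical measure. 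The route to this bound is to write $\mathbb{E}\int_0^1(F^{-1}_{\hat\nu}-F^{-1}_\nu)^2\,\mathrm{d}t$, represent the order statistics through uniform order statistics $U_{(j)}\sim\mathrm{Beta}(j,m+1-j)$ composed with $F_\nu^{-1}$, and bound $\mathrm{Var}\,U_{(j)}\lesssim\frac{t(1-t)}{m}$ at $t\approx j/m$. Then the change of variables $t=F_\nu(x)$ converts the weight $(F_\nu^{-1})'(t)^2\,t(1-t)$ into the integrand of $J_2$. Integrating the per-$\theta$ bound over $\mathbb{S}^{d-1}$ then gives $\lesssim m^{-1}\int_{\mathbb{S}^{d-1}}J_2(\mu_\theta)\,\mathrm{d}\sigma(\theta)$, which is the claimed $m^{-1}$ rate with the $\mu$-dependent constant absorbed into $\lesssim$.

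The main obstacle is exactly this constant. The statement only assumes a finite second moment, whereas the $m^{-1}$ one-dimensional rate requires $J_2(\mu_\theta)<\infty$. That condition fails, for instance, for projections with disconnected support; in that case a generic finite-second-moment measure gives only a slower rate, such as $m^{-1/2}$ for compact support. So I would first follow \cite{chewi2024statistical} verbatim, making measurability of $\theta\mapsto J_2(\mu_\theta)$ and its $\sigma$-integrability explicit. I would record that the implicit constant in the statement is $C\int_{\mathbb{S}^{d-1}}J_2(\mu_\theta)\,\mathrm{d}\sigma(\theta)$, and that it must be finite for the bound to be meaningful. Writing the proof down therefore requires checking that the cited result really yields the $m^{-1}$ rate with a constant finite under the stated hypotheses. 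If it does not, the honest outcome of the plan is to add the integrability of $J_2(\mu_\theta)$ as the missing condition rather than to claim the rate from the second moment alone.
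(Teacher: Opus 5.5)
Your reduction is the same as the paper's proof. You use Tonelli to move $\mathbb{E}$ inside the $\mathrm{d}t\,\mathrm{d}\sigma(\theta)$ integral, note that $P^{\theta}_{\#}\hat{\mu}$ is the empirical measure of the projected sample, and apply the quantile formula for $W_2$ on the line. The two arguments differ only at the per-direction step. There the paper asserts $\mathbb{E}\,W_2^2(P^{\theta}_{\#}\mu,P^{\theta}_{\#}\hat{\mu})\lesssim m^{-1}$ from the finite second moment alone, citing \cite{fournier2015rate}.

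Your doubt about that step is justified, and you should state it as a conclusion rather than a contingency. The statement is false as written, so neither your plan nor the paper's argument can prove it. Take $\mu=\tfrac12(\delta_x+\delta_y)$ with $x\neq y$, and let $\hat p$ be the fraction of the $m$ samples equal to $y$, so $m\hat p\sim\mathrm{Bin}(m,1/2)$. For every $\theta$, both quantile functions take only the values $\langle\theta,x\rangle$ and $\langle\theta,y\rangle$. They disagree on a $t$-interval of length $|\hat p-\tfrac12|$. Hence $W_2^2(P^{\theta}_{\#}\mu,P^{\theta}_{\#}\hat{\mu})=|\hat p-\tfrac12|\,\langle\theta,x-y\rangle^2$, and
\[
\mathbb{E}\|\Phi^{\mathrm{SW}}(\mu)-\Phi^{\mathrm{SW}}(\hat{\mu})\|^2_{L^2([0,1]\times\mathbb{S}^{d-1})}=\frac{\|x-y\|^2}{d}\,\mathbb{E}\bigl|\hat p-\tfrac12\bigr|\asymp m^{-1/2}.
\]
The cited reference does not supply the claimed rate either. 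For dimension one and $p=2$, Fournier--Guillin need a moment of order $q>2$. They give, for $q\neq 4$, $\mathbb{E}\,W_2^2\lesssim m^{-1/2}+m^{-(q-2)/q}$, which is never $m^{-1}$.

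So the gap is in the paper's proof, exactly where you flagged it, and your repair is the right one. By Bobkov--Ledoux, $\mathbb{E}\,W_2^2(\nu,\hat{\nu}_m)\lesssim J_2(\nu)/m$ on the line, and $J_2(\nu)<\infty$ is also necessary for the $m^{-1}$ rate. The theorem should therefore assume $\int_{\mathbb{S}^{d-1}}J_2(\mu_\theta)\,\mathrm{d}\sigma(\theta)<\infty$, with the constant you wrote.

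This hypothesis is restrictive. It rules out any projection whose support has a gap, hence every discrete $\mu$ with two or more atoms. It also fails for the standard Gaussian, where $F(1-F)/f\sim 1/x$ in the tails and the rate is of order $\log\log m/m$. The paper's own Gaussian simulations therefore sit just outside it.

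If one assumes only compact support, the correct general bound is $\mathbb{E}\,W_2^2\le\mathrm{diam}(\mathcal{X})\,\mathbb{E}\,W_1\lesssim m^{-1/2}$, which the example above shows is sharp. This would change $r_m(\Phi^{\mathrm{SW}})$ in Table~\ref{table:rates_embeddings} to $m^{-1/4}$. The SW dense-regime condition would become $m\gtrsim n^2$ instead of $m\gtrsim n$.
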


\begin{proof}
By definition of the sliced Wasserstein distance, we have:
$$
\mathbb{E} \mathrm{SW}_2^2(\mu, \hat{\mu}) = \int_{\mathbb{S}^{d-1}} \mathbb{E} W_2^2(P^{\theta}_{\#}\mu, P^{\theta}_{\#}\hat{\mu})\mathrm{d}\sigma(\theta).
$$
Notice that $P^{\theta}(Y_1), \cdots, P^{\theta}(Y_m)$ are i.i.d samples from $P^{\theta}_{\#}\mu$ and $P^{\theta}_{\#}\hat{\mu}$ is the empirical measure based on these samples. As, the projected law \(P^\theta_{\#}\mu\) has finite second moment, the one-dimensional rate \(\mathbb{E}W_2^2(P^\theta_{\#}\mu,P^\theta_{\#}\hat\mu)\lesssim m^{-1}\) holds (see e.g. \cite{fournier2015rate} for general bounds on the rate of convergence of empirical measures in Wasserstein distance in low dimension). 
$$
\mathbb{E} \mathrm{SW}_2^2(\mu, \hat{\mu}) \lesssim \int_{\mathbb{S}^{d-1}} m^{-1} \mathrm{d}\sigma(\theta) \lesssim m^{-1}.
$$
However, the distance between the embedded measures through the sliced-Wasserstein embedding exactly corresponds to the sliced-Wasserstein distance between these measures. Hence
$$
\mathbb{E}[\|\Phi^{\mathrm{SW}}(\mu) - \Phi^{\mathrm{SW}}(\hat{\mu})\|^2_{L^2([0,1]\times \mathbb{S}^{d-1})}] = \mathbb{E}[\mathrm{SW}_2^2(\mu, \hat{\mu})] \lesssim m^{-1}.
$$
\end{proof}

%% file: additional_xp.tex
\section{Additional experiments}\label{sec:additional_xp}

\subsection{Numerical experiments on simulated data}

On our simulated dataset from Section \ref{sec:simulations_gaussians}, we demonstrate that relatively few samples per measure can yield accurate and stable PCA representations. Fixing $n=20$ and visualizing the resulting PCA for different values of $m \in \{10, 50, 100, 1000\}$ in Figure \ref{fig:gaussian_2d_pca}, we observe that the representation stabilizes around $m=50$, which is nearly identical to the $m=1000$ case. This confirms that moderate values of $m$ are sufficient to obtain high-quality PCA.

\begin{figure}[t]
\centering
\includegraphics[width=0.5\columnwidth]{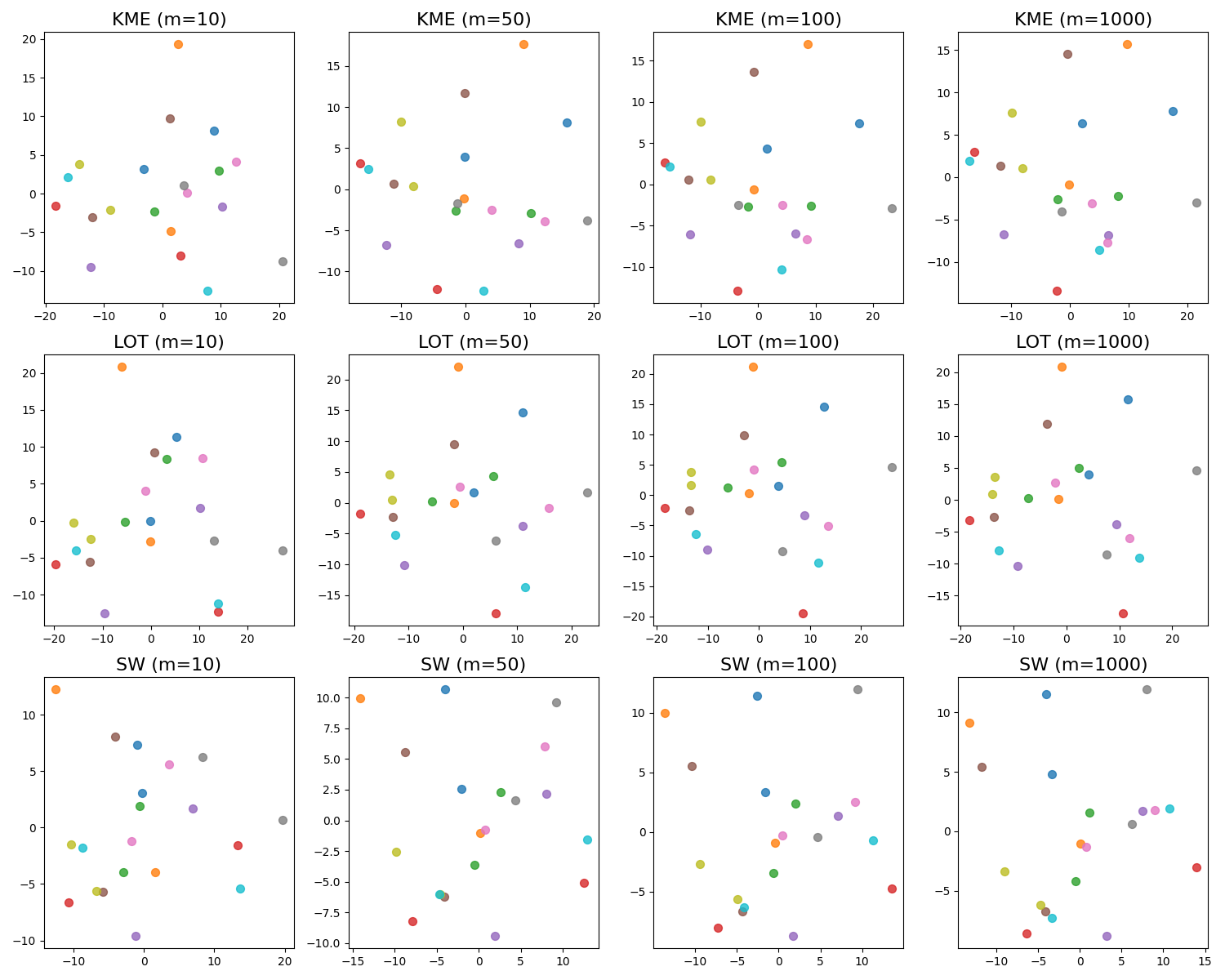}
\caption{2D PCA representation of $n=20$ gaussian measures for different subsamples sizes $m$ and three embeddings (KME, LOT, SW). Each plot shows the projection onto the first two principal components.}
\label{fig:gaussian_2d_pca}
\end{figure}

\subsection{Image dataset}\label{sec:image_dataset}

We consider a dataset of RGB images from Sentinel-2 satellite imagery. Each image is viewed as a discrete probability measure on the RGB space, that is each pixel is represented by a point $\mathbb{R}^3$. Pixel intensities are normalized to lie in $[0,1]^3$. We extract $n=30$ images of size $64\times 64$ pixels from different land cover types. We subsample $m$ pixels from each image to compute the embeddings. We consider the uniform measure on $[0,1]^3$ as the reference measure, and we sample $m_0=100$ points from it. For KME, we use the RBF kernel with bandwidth $\sigma = 0.5$, evaluated on the $m_0=100$ points. For the LOT embedding, we compute the barycentric projection from the empirical reference measure $\hat{\rho}_{m_0}$ to each data as described in Section \ref{sec:simulations_gaussians}. Finally, the number of quantiles and the number of projections for the SW embedding are respectively $T=10$ and  $p=10$. We visualize the 2-dimensional PCA representation for each embedding and different subsample sizes in Figure \ref{fig:landcover_2d_pca}. We also compute the mean Procrustes disparity between PCA representations obtained from different random subsamples of the same size, as described in Section \ref{sec:experiments}, and plot the results in Figure \ref{fig:landcover_procrustes}. 

\begin{figure}[t]
\centering
\includegraphics[width=0.3\columnwidth]{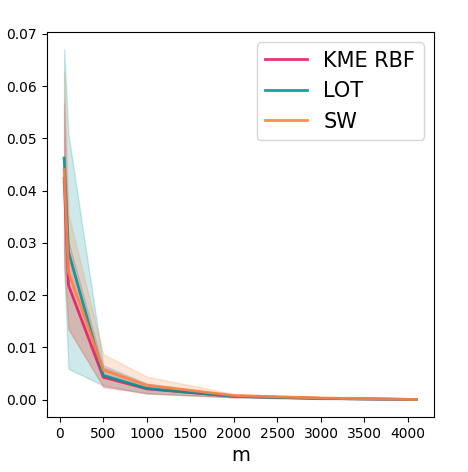}
\caption{Mean Procrustes disparity and standard deviation for different subsample sizes on the image dataset.}
\label{fig:landcover_procrustes}
\end{figure}

\begin{figure}[t]
\centering
\includegraphics[width=\columnwidth]{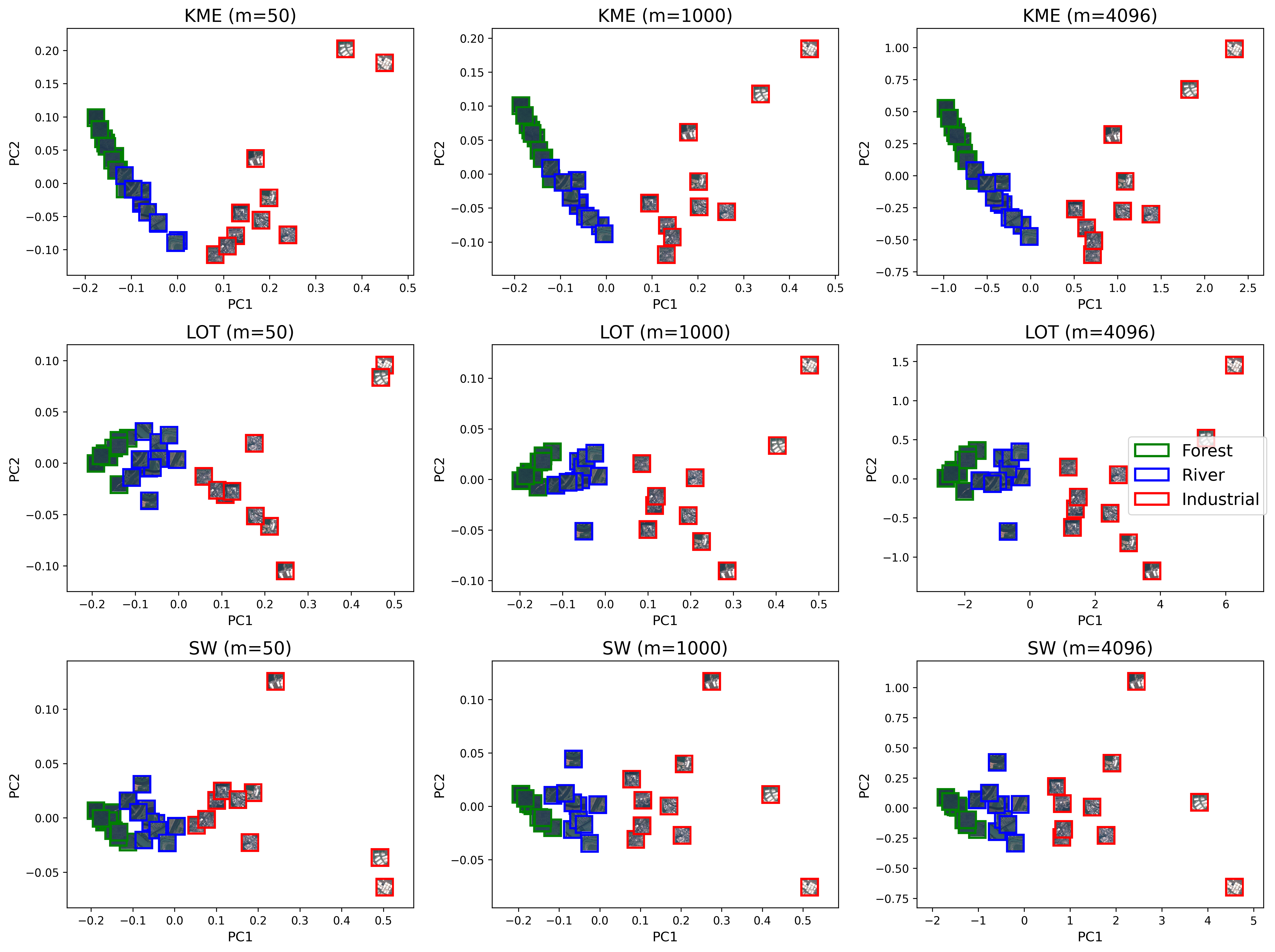}
\caption{2D PCA representation of the image dataset for different subsamples sizes $m$ and three embeddings (KME, LOT, SW). Each plot shows the projection onto the first two principal components.}
\label{fig:landcover_2d_pca}
\end{figure}